\numberwithin{equation}{section}
\newtheorem{theorem}{Theorem}[section]
\newtheorem{lemma}[theorem] {Lemma} 
\newtheorem{corollary}[theorem]{Corollary} 
\numberwithin{claim}{section}
\newtheorem{assumption}[theorem]{Assumption}
\newtheorem{remark}[theorem]{Remark}  
\theoremstyle{definition}
\numberwithin{definition}{section}
\newtheorem*{definition*}{Definition}
\numberwithin{example}{section}
\newcommand{\norm}[1]{\left\lVert#1\right\rVert}
\newcommand{\rb}[1]{\left(#1\right)}
\newcommand{\cb}[1]{\left\{#1\right\}}
\newcommand{\R}{\mathbb{R}}
\newcommand{\E}{\mathbb{E}}
\newcommand{\B}{\mathcal{B}}
\title{Barron Space for Graph Convolution Neural Networks}
\author{Seok-Young Chung and Qiyu Sun}
\address{Chung: Department of Mathematics, University of Central Florida, Orlando, FL 32765, USA}
\email{Seok-Young.Chung@ucf.edu}
\address{Sun: Department of Mathematics, University of Central Florida, Orlando, FL 32765, USA}
\email{qiyu.sun@ucf.edu}
\date{}
\begin{document}

\begin{abstract}
Graph convolutional neural network  (GCNN) operates on graph domain and it  has  achieved a superior performance to accomplish a wide range of tasks.
In this paper, we introduce a Barron space  of functions on a compact domain of graph signals. We prove that the proposed Barron space
is a reproducing kernel Banach space, it can be decomposed into the union of
 a family of reproducing kernel Hilbert spaces with neuron kernels, and it could be dense in the space of continuous functions on the domain.
Approximation property is one of the main principles
to design neural networks. In this paper, we  show that outputs of GCNNs are contained in the Barron space  and functions in the Barron space can be well approximated
 by outputs of some GCNNs in the integrated square and uniform measurements. 
 We also estimate the Rademacher complexity of functions with bounded Barron norm and conclude that functions in the Barron space could be
 learnt from their random samples efficiently.
\end{abstract}

\maketitle

\section{Introduction}

  Graph signal processing provides an innovative framework
to extract knowledge from  massive data sets residing on irregular domains and networks
\cite{Cheung2020, Dong2020, ncjs22, Isufi2023, Ortegabook, Ortega18, Ricaud2019,  aliaksei13, sandryhaila14, Shuman2013, Stankovicchapter2019}.
 Graphs  are widely used to model the  topological structure of irregular domains and networks. For instance,
 a sensor network can be described by a graph with vertices representing
sensors of the network and edges between vertices showing  peer-to-peer communication link  between sensors or being within certain range of a spatial domain  (usually indicating correlation among data collected), and  the skeleton structure of human body is naturally structured as a graph  with  joints as
vertices and their natural connections in the human body
as edges
\cite{Akoglu2015, chong2003, Shi2019,  Wass94, Yick08}.
 In this paper, we  consider
 weighted undirected connected graphs  $\mathcal{G}=({V}, E)$  of finite order $N\ge 1$.

Convolutional neural network (CNN) is one of the most representative  neural networks for machine learning.
It has gained a lot of  attention  from  industrial and academic communities, and it has made numerous exciting achievements. For instance,
  computer vision based on CNNs  makes it possible to accomplish tasks, such as face recognition, autonomous vehicle and intelligent medical treatment. The reader may refer to    \cite{Gu2018, Krizhevsky2017, LeCun1989,  Li2021, Niepert2016} and  references therein for historical remarks and recent advances.

Graph convolutional neural network  (GCNN) is the
generalization of classical CNNs that operates on graph domain and it has  achieved a superior performance to accomplish a wide range of tasks
\cite{Bronstein2017, Bruna2014, Defferrard2016, Dong2020,  Isufi2023, Kipf2017, Li2018, Liubook2020, Shi2019, Shuman2013, Wu2021, Yang2021, Zhang2019,  Zhou2020, Zhuang2018}.
However, a feasible extension of CNNs from regular grid (such as pixel grids to represent images)
to irregular graph (such as the skeleton structure of human body) is not straightforward.

GCNN takes advantage of topological structure of the underlying graph and  aggregates node information from the neighborhoods in a convolutional fashion.
A basic question is how to define graph convolution appropriately.
Two conventional approaches have been proposed to define graph convolution, one
from the spectral perspective while the other from the spatial perspective.
In the first trial to define a GCNN, the spatial convolution is used to sum up the neighboring features \cite{Bruna2014}.
In this paper, we adopt the spectral approach and use graph Fourier transform to define graph convolution  ${\bf b}*{\bf x}$ between two graph signals ${\bf b}$ and ${\bf x}$, see \eqref{convolution.def}.

Let  ${\mathcal W}$ be the linear space of graph signals  ${\bf b}$ used for the convolution in
GCNNs with its dimension denoted by $\dim {\mathcal W}\le N$, and
 $\sigma(t)=\max(0, t)$ be the ReLU activation function
with $\sigma({\bf y}), {\bf y}\in {\mathbb R}^N$, being defined componentwisely.
In this paper, we consider  shallow (two-layer) GCNNs
 with  graph signal input ${\bf x}$ in a  compact domain $\Omega$ and scalar-valued
   output  given by
\begin{equation} \label{fM.def}
f_M({\bf x}, {\pmb \Theta} )=\frac{1}{M} \sum_{m=1}^M {\bf a}_m^T \sigma  ({\bf b}_m*{\bf x}+{\bf c}_m),\  {\bf x}\in \Omega,\end{equation}
where ${\pmb \Theta}=({\pmb \theta}_1, \ldots, {\pmb \theta}_M)$
and
${\pmb \theta}_m=({\bf a}_m, {\bf b}_m, {\bf c}_m)\in  {\mathbb R}^N\times {\mathcal W}\times {\mathbb R}^N, 1\le m\le M$.
The above shallow GCNN has  neurons
$ \phi({\bf x}, {\pmb \theta}_m)= {\bf a}_m^T \sigma  ({\bf b}_m*{\bf x}+{\bf c}_m), 1\le m\le M$,
and $(2N+  \dim {\mathcal W})M$ parameters to learn.
Let ${\bf S}_1, \ldots, {\bf S}_K$ be commutative graph shifts, see Section \ref{graphshifts.section}.
For the case that
the convolution space ${\mathcal W}$ contains graph signals with the corresponding convolution operation can be
implemented by some polynomial filtering procedure with polynomials  up to a given degree $L$, i.e.,
${\bf b}_m*{\bf x}=h_m({\bf S}_1, \ldots, {\bf S}_K) {\bf x}$
for some multivariate polynomials $h_m, 1\le m\le M$, of degrees at most $L$,
the above shallow GCNN  is essentially the   ChebNet in the literature \cite{Defferrard2016, Isufi2023, Kipf2017,   Wu2021}.

\medskip

 In this paper, we consider the problem  whether and how a function  $f$ on the domain  $\Omega$ of (sparse) graph signals can be approximated by outputs of  shallow GCNNs, i.e.,
$$ f({\bf x})\approx  f_M({\bf x}, {\pmb \Theta}),\ {\bf x}\in \Omega$$
for some ${\pmb \Theta}\in ({\mathbb R}^N\times {\mathcal W}\times {\mathbb R}^N)^M$.

\medskip

We say that $A=O(B)$ for two  quantities $A$ and $B$ if $|A/B|$ is bounded by some absolute constant. The main contributions of this paper are as follows.

\begin{itemize}

\item[{(i)}] Barron space of functions on the unit cube $[0, 1]^N$
was introduced in \cite{Barron1993} with the help of Fourier transform.
 In this paper, we follow the spatial framework in \cite{Bach2017, EMaWu2022}  and  introduce a graph Barron space  ${\mathcal B}$ on a compact domain $\Omega$.  We show that the Barron space ${\mathcal B}$ is a reproducing kernel  Banach space and has Lipschitz property, see Theorem \ref{BanachBarronspace.thm} and Corollary \ref{lipschitz.cor}.

\item[{(ii)}] Reproducing kernel Hilbert spaces  (RKHSs) are ideal for function estimation,  and 
 their kernels are selected to measure certain similarity between  input data
\cite{Ghojogh2021, Rahimi2007, Scholkopf2002, Steinwart2008}.  In this paper, we decompose the graph Barron space  ${\mathcal B}$ into the union of a family of
RKHSs with  neuron kernels and establish their norm equivalence,
see Theorems \ref{rkhs.thm} and   \ref{BarronRkhs.thm}. 

\item[{(iii)}] The approximation of functions in  Barron/Besov/H\"older spaces by
outputs of some neural networks is well studied, see 
\cite{EMaWu2022, Shen2022, Siegel2023, Yang2023a, Yang2023b} and references therein.
 In this paper, we show that functions in the Barron space can be well approximated by some shallow GCNNs with bounded path norm, 
 and conversely the limit of outputs of shallow GCNNs with bounded path norm belongs to the Barron space, see
Theorems \ref{approximation.thm}, \ref{approximation.thm2} and \ref{inverseapproximaton.thm}. 
To reach integrated square (resp. uniform) approximation error of a function  $f$ in the Barron space  ${\mathcal B}$ by outputs of some shallow GCNNs at accuracy $\epsilon \|f\|_{\mathcal B}$,
we obtain from Theorems \ref{approximation.thm} and \ref{approximation.thm2} that the number of parameters in the shallow GCCN is about
$O(N\epsilon^{-2})$  (resp. $O(N^2 \epsilon^{-2} \log (1/\epsilon))$), where $N$ is the order of the underlying graph  ${\mathcal G}$ of the shallow GCNN and $\|f\|_{\mathcal B}$ is the Barron norm of the function
$f\in {\mathcal B}$.   As  expected,  the approximation of functions in the Barron space does not suffer from the curse of dimensionality (i.e., the order $N$ of the underlying graph
${\mathcal G}$ in the current setting).

\item[{(iv)}] Universal approximation theorem is one of the main principles to design  neural networks
\cite{Bruel2020, kerven2021, kerven2019, Pinkus1999}. In Theorem \ref{density.thm} and Corollary \ref{univeral.cor}, we establish the universal approximation theorem for  shallow GCNNs and density of the graph Barron space  ${\mathcal B}$, under some technical conditions on
the graph Fourier transform and the convolution space ${\mathcal W}$.

\item[{(v)}] Rademacher complexity of a function class is a conventional measure of generalization error in learning theory \cite{Bach2017, Bartlett2002, EMaWu2022,  Shai2014, Yin2019}. In this paper, we provide an estimate to the  Rademacher complexity of functions with bounded Barron norm, which depends on
    the negative square root of the sample size and the square root of the  logarithmic of the graph order, see Theorem \ref{Rademacher.thm}.
  As a consequence, we see that functions in the Barron space could be learnt from their random samples in an efficient way, see  Theorem  \ref{error.thm}.

\end{itemize}

The rest of this paper is organized as follows. In Section \ref{preliminaries.section},
 we recall some preliminaries on commutative graph shifts, graph Fourier transform, graph convolutions and  shallow GCNNs on a compact domain
 $\Omega$ of (sparse) graph signals.
In Section \ref{barronspace.section},  we introduce the graph Barron space ${\mathcal B}$
and a family of  graph reproducing kernel Hilbert spaces
 of functions on the domain $\Omega$, and show that the Barron space is a reproducing kernel Banach space with Lipschitz property,
 and every function in the graph  Barron space belongs to some graph RKHSs.
 In Section \ref{approximation.section}, we  consider approximation problem of functions in the Barron space by  outputs of  shallow GCNNs, and provide
 an estimate to number of parameters required for shallow GCNNs to reach a given approximation accuracy.
In Section \ref{complexity.section}, we consider Rademacher complexity of functions in the Barron space and show that
functions in the Barron space could be  well learnt from their random samples. In Section \ref{numericaldemon.section},
We use stochastic gradient descent with Nesterov momentum to train shallow GCNNs from both synthetic and real data, and we demonstrate its approximation performance in Sections \ref{approximation.section} and \ref{complexity.section}. All proofs are collected in Section \ref{proofs.section}.  In the Conclusion and Discussions section, we consider  a Barron space with convolution defined via the spatial approach and discuss its approximation property by the outputs of shallow GCNNs.

\bigskip

\section{Preliminaries}
\label{preliminaries.section}

 In this paper, we  consider
 weighted undirected connected graphs  $\mathcal{G}=({V}, E)$  of finite order $N\ge 1$ with their adjacency/degree/Laplacian matrices
 denoted by ${\bf W}, {\bf D}$ and ${\bf L}:={\bf D}-{\bf W}$ respectively,
and we define the
	{\em geodesic distance} $\rho(i,j)$   between vertices $i,j\in V$
	by  the number of edges in
the shortest path connecting them. 
For the convenience,  we may write the vertex set $V$ of the graph ${\mathcal G}$ as $\{1, \ldots, N\}$, and
use ${\bf x}=(x(i))_{i\in V}$  and also a vector ${\bf x}=[x(1), \ldots, x(N)]^T\in {\mathbb R}^N$
to denote a graph signal on the graph ${\mathcal G}$, where $x(i)$ is the signal value at the vertex $i$.

The concept of graph shifts is similar to the one-order delay in classical signal processing and
polynomial filters have been  widely used in graph signal processing.
In Section \ref{graphshifts.section}, we recall some preliminaries on commutative graph shifts and polynomial filters
\cite{ncjs22, Coutino17, Isufi2023, jiang19,    Ortegabook, Ortega18, aliaksei13, sandryhaila14,  Shuman2013, Stankovicchapter2019}.
   The graph
Fourier transform  is one of fundamental tools in graph signal processing
that decomposes graph signals  into different frequency components
and represents them by different modes of variation   \cite{Chen2023, Cheng2023, chung1997, Isufi2023,  Ortegabook, Ortega18,  Ricaud2019,  Shuman2013, Stankovicchapter2019}.
 Based on commutative  graph shifts,
in Section \ref{fourier.section} we introduce  graph  Fourier transform of a graph signal and  define graph convolution between two graph signals, see
\eqref{gft.def0} and \eqref{convolution.def}. We also show  that
the proposed  graph convolution operation
can be implemented in  the spectral domain  by taking the inverse Fourier transform of the multiplication between two Fourier transformed graph signals, and also in the spatial domain by applying some polynomial filtering procedure, see \eqref{convolution.def} and \eqref{convolution.def3}.
GCNNs   are
generalizations of classical CNNs to handle graph data, and they have been a powerful graph analysis method
\cite{Bronstein2017, Bruna2014, Defferrard2016, Dong2020,  Isufi2023, Kipf2017, Li2018, Liubook2020, Shi2019, Shuman2013, Wu2021, Zhang2019,  Zhou2020, Zhuang2018}.
In Section \ref{graphcnn.section}, we discuss the setting of a shallow  GCNN on a compact  domain of (sparse) graph signals.

 \subsection{Commutative graph shifts}
\label{graphshifts.section}

A {\em graph shift} ${\bf S}$, to be represented by a matrix ${\bf S}=(S(i,j))_{ i,j\in V}$,	is an 	elementary graph filter
satisfying
\begin{equation}
S(i,j)=0 \  \ {\rm if} \ \rho(i,j)\ge 2.
\end{equation}
 The illustrative examples of graph  shifts  
are the degree matrix  ${\bf D}$,
the  adjacency matrix  ${\bf W}$,
	the Laplacian matrix ${\bf L}$, the symmetric	normalized Laplacian matrix ${\bf L}^{\rm sym}={\bf D}^{-1/2} {\bf L}{\bf D}^{-1/2}$
  and their
	variants
\cite{ncjs22, Coutino17, Isufi2023,    Ortegabook, Ortega18, aliaksei13,  Shuman2013, Stankovicchapter2019}.
 A significant advantage of a graph shift ${\bf S}=(S(i,j))_{i,j\in V}$
is that the
 filtering procedure
 ${\mathbf S}:  (x(i))_{i\in V}=:{\bf x}\longmapsto {\bf S}{\bf x}:=(\tilde x(i))_{i\in V}$
 can be implemented by some local operation that updates  signal value  $\tilde x(i)$ at each vertex $i\in V$
	by a ``weighted" sum of signal values $x(j)$ at  {\em adjacent} vertices  $j\in {\mathcal N}_i$,
\begin{equation*}
\label{weightedsum}
\tilde{x}(i)=\sum_{j\in\mathcal{N}_i}S(i,j)x(j). 
\end{equation*}
where ${\mathcal N}_i$ is
the set of  adjacent vertices of $i\in V$.

Similar to  the  one-order delay $z_1^{-1}, \,\ldots\, ,  z_K^{-1}$ in classical multidimensional  signal processing,
the concept of  multiple commutative graph shifts ${\bf S}_1, \ldots,   {\bf S}_K$ is introduced in \cite{ncjs22},
where two illustrative  families of commutative graph shifts
 on circulant/Cayley graphs and product graphs are presented.
 Here we say that ${\bf S}_1, \ldots,    {\bf S}_K$ are {\em commutative} if
\begin{equation}\label{commutativityS}
	{\bf S}_k{\bf S}_{k'}={\bf S}_{k'}{\bf S}_k,\  1\le k,k'\le K.
\end{equation}
For commutative graph shifts, it is well known that  they can be upper-triangularized  simultaneously
by some unitary matrix  \cite[Theorem 2.3.3]{horn1990matrix}.
Under additional real-valued and symmetric assumptions, commutative graph shifts can be diagonalized
simultaneously
by some orthogonal matrix  ${\bf U}$, i.e.,
 \begin{equation}
	\label{diagonalization}
{\bf S}_k= {\bf U}
{\pmb \Lambda}_k{\bf U}^T
	\end{equation}
for some diagonal matrices
${\pmb \Lambda}_k:={\rm diag} (\lambda_k(n))_{1\le n\le N}, 1\le k\le K$.

Define
	\begin{equation}\label{jointspectrum.def} {\pmb \Lambda}=\big\{{\pmb \lambda}(n)=[\lambda_1(n), ..., \lambda_K(n)]^T, 1\le n\le N\big\}  \subset {\mathbb R}^K.\end{equation}
As $\lambda_k(n), 1\le n\le N$, are  eigenvalues of ${\bf S}_k,\ 1\le k\le K$,   we call $\pmb \Lambda$ as the {\em joint spectrum} of commutative graph shifts ${\bf S}_1, \,\ldots\, ,  {\bf S}_K$ \cite{ncjs22}.
Under the additional assumption that  ${\pmb \lambda}(n), 1\le n\le N$, in the joint spectrum ${\pmb \Lambda}$ are  distinct, it is shown in
\cite{ncjs22} that a filter ${\bf H}$ is a polynomial filter  if and only if it commutates with  ${\mathbf S}_1, \ldots, {\bf S}_K$, i.e.,
${\bf H} {\bf S}_k={\bf S}_k {\bf H}, \ 1\le k\le K$.
Here we say that  ${\bf H}$ is a {\em polynomial filter} of ${\mathbf S}_1, \ldots, {\bf S}_K$ if
\begin{equation}\label{MultiShiftPolynomial}
{\bf H}=h({\bf S}_1, \,\ldots\, ,  {\bf S}_K)=\sum h_{l_1,\ldots,l_K}{\bf S}_1^{l_1}\cdots {\bf S}_K^{l_K}
\end{equation}
for some  multivariate polynomial
$h(t_1, \,\ldots\, ,  t_K)=\sum 
h_{l_1,\ldots, l_K} \prod_{k=1}^K t_k^{l_k}$, where the sum is taken on the finite subset of ${\mathbb Z}_+^K$
\cite{ncjs22,  Isufi2023, jiang19, Ortegabook, Ortega18, aliaksei13, sandryhaila14,  Shuman2013, Stankovicchapter2019}.

\medskip

In this paper,  
 we always  make the following assumption on the graph shifts  ${\mathbf S}_1, \,\ldots\, ,  {\mathbf S}_K$ and
  their joint spectrum  ${\pmb \Lambda}$. 

\begin{assumption}\label{shiftassumption}
Graph shifts ${\mathbf S}_1, \,\ldots\, ,  {\mathbf S}_K$ are real-valued, symmetric and commutative, and
 ${\pmb \lambda}(n), 1\le n\le N$, in the joint spectrum ${\pmb \Lambda}$ in \eqref{jointspectrum.def} are  distinct.
\end{assumption}

  \subsection{Graph Fourier transform and graph convolution}
  \label{fourier.section}

  In this paper, we define the {\em  graph  Fourier transform} ${\mathcal F}{\bf x}$ of a graph signal ${\bf x}$
  and the {\em inverse  graph Fourier transform} ${\mathcal F}^{-1} {\pmb \omega}$ of a vector ${\pmb \omega}=[\omega(1), \,\ldots\, ,  \omega(N)]^T\in {\mathbb R}^N$  by
  \begin{equation}\label{gft.def0}
{\mathcal F} {\bf x}={\bf U}^T {\bf x}=[{\bf u}_1^T {\bf x}, \,\ldots\, ,  {\bf u}_N^T {\bf x}]^T \ \ {\rm and}\  \
 {\mathcal F}^{-1} {\pmb \omega}={\bf U} {\pmb \omega}=\sum_{n=1}^N \omega(n) {\bf u}_n,
  \end{equation}
  where
 ${\bf U}=[{\bf u}_1, \,\ldots\, ,  {\bf u}_N]$ is the orthogonal matrix in \eqref{diagonalization} to diagonalize commutative
  graph shifts ${\bf S}_1, \,\ldots\, ,  {\bf S}_K$ simultaneously.
  The conventional definition of the graph Fourier transform    on (un)directed graphs is based on one  graph shift and a common selection of the graph shift is either
  the Laplacian matrix ${\bf L}$ or the symmetric	normalized Laplacian matrix ${\bf L}^{\rm sym}$ on the graph
  \cite{Chen2023,  chung1997, ncjs22, Ortegabook,  Stankovicchapter2019}.
By \eqref{diagonalization}, the Parseval identity holds for the graph Fourier transform  ${\mathcal F}$ in \eqref{gft.def0},
\begin{equation} \label{gft.eq1}
\|{\mathcal F} {\bf x}\|_2= \|{\bf x}\|_2 \ \ {\rm for\ all}\  {\bf x}\in {\mathbb R}^N,
\end{equation}
and the operation of a polynomial filter ${\bf H}$ of  graph shifts ${\mathbf S}_k, 1\le k\le K$,  becomes a multiplier $m({\bf H})$ in the Fourier domain,
\begin{equation} \label{gft.eq2}
{\mathcal F} {\bf H} {\bf x}= m({\bf H})\odot ({\mathcal F} {\bf x}), \  {\bf x}\in {\mathbb R}^N,
\end{equation}
where ${\bf a}\odot {\bf b} $ is the Hadamard product of two vectors ${\bf a}$ and ${\bf b}\in {\mathbb R}^N$.
In particular, the multipliers associated with  the graph shifts ${\bf S}_k$ are the diagonal vectors of the  diagonal matrix ${\pmb \Lambda}_k, 1\le k\le K$.

  Without loss of generality,  we assume that ${\pmb\lambda}(n), 1\le n\le N$, have their norms $|{\pmb \lambda}(n)|:=\sum_{k=1}^K |\lambda_k(n)|$
in the nondecreasing order, i.e., $|{\pmb \lambda}(1)|\le |{\pmb \lambda}(2)|\le \,\ldots\, \le |{\pmb \lambda}(N)|$, otherwise we can perform certain permutation for
the orthogonal basis
${\bf u}_1, \ldots, {\bf u}_N$ of ${\mathbb R}^N$ to achieve the nondecreasing order.
 For  every $1\le n\le N$, we call
the vector  ${\pmb \lambda}(n)\in {\mathbb R}^K $, the graph signal ${\bf u}_n$ and  the quantity ${\bf u}_n^T {\bf x}$
 as  the $n$-th frequencies, the $n$-th fundamental frequency component and  the component of the signal ${\bf x}$ at the $n$-th frequency respectively.
 With the above ordering for the joint spectrum of commutative graph shifts,
 energy of smooth graph signals may concentrate
mainly at low frequencies
 \cite{Chen2023,
  chung1997, ncjs22,  Ortegabook, Stankovicchapter2019}.

\medskip

Given two graph signals ${\bf b}$ and ${\bf x}$, define their {\em  convolution} ${\bf b}*{\bf x}$ by
\begin{equation}\label{convolution.def}
{\bf b}*{\bf x}:={\mathcal F}^{-1} ({\mathcal F}{\bf b}\odot {\mathcal F}{\bf x})=
\sum_{n=1}^N  ({\bf u}_n^T {\bf b}){\bf u}_n {\bf u}_n^T {\bf x}.
\end{equation}
By \eqref{gft.eq2} and \eqref{convolution.def}, we see that the convolution associated with a graph signal ${\bf b}$ commutates with graph shifts ${\bf S}_k, 1\le k\le K$, i.e.,
$${\bf S}_k ({\bf b} * {\bf x})= ({\bf S}_k {\bf b})*{\bf x}, \  {\bf x}\in {\mathbb R}^N.$$
Therefore the convolution operation associated with a graph signal ${\bf b}$ could be written as a polynomial filtering procedure,
\begin{equation} \label{convolution.def3}
 {\bf b}*{\bf x}=h({\bf S}_1, \,\ldots\, ,  {\bf S}_K) {\bf x},\ {\bf x}\in {\mathbb R}^N, \end{equation}
where $h$ is a multivariate polynomial.
In particular,  we can show that \eqref{convolution.def3} holds  if and only if the  polynomial $h$  satisfies  the following interpolation property
 \begin{equation}\label{MultiShiftPolynomial.polynomial}
 h({\pmb \lambda}(n))= {\bf u}_n^T {\bf b},\  1\le n\le N,
 \end{equation}
 or equivalently
  \begin{equation}\label{convolution.def2}
{\bf b}=h({\bf S}_1, \,\ldots\, ,  {\bf S}_K) \sum_{n=1}^N {\bf u}_n= h({\bf S}_1, \,\ldots\, ,  {\bf S}_K) {\bf U} {\bf 1},
\end{equation}
where ${\bf 1}$ is  the column vector with all components taking value one.

\medskip

Let $0\le L\le N-1$. Denote the space of  all multivariate polynomials of degree at most $L$ by $\Pi_{L}$, and set
\begin{equation} \label{convolutionspace.def}
{\mathcal W}_{L}=\{{\bf b}=h({\bf S}_1, \,\ldots\, ,  {\bf S}_K) {\bf U} {\bf 1}: \  h\in \Pi_{L}\}.
\end{equation}
The spatial representation \eqref{convolution.def3} of the convolution operation
 provides another approach to implement  the convolution between graph signals ${\bf b}$ and ${\bf x}$ in the spatial domain. In particular, for graph signal ${\bf b}\in {\mathcal W}_{L}$, we
 first evaluate  the Fourier coefficients ${\bf u}_n^T {\bf b}, 1\le n\le N$, then find the multivariate polynomial $h$
  that take values ${\bf u}_n^T {\bf b}$ at the spectrum ${\pmb \lambda}(n), 1\le n\le N$; and finally used the distributed algorithm
to implement the polynomial filtering procedure in \eqref{convolution.def3}, see \cite{ncjs22}.  The total computational complexity to implement the distributed algorithm is
about  $ O(L^K \times N)$.

 Let
 ${\mathcal P}_{\pmb \Lambda}$ be the set of multivariate polynomials $p$ such that
 $p({\pmb \lambda}(n))=0, 1\le n\le N$.   One may verify that ${\mathcal P}_{\pmb \Lambda}$ is an ideal of the multivariate polynomial ring. Hence
there exists a Grobner basis $p_1, \ldots, p_I$ such that
for any polynomial $p\in {\mathcal P}_{\pmb \Lambda}$ there exist polynomial $q_1, \ldots, q_I$ such that
$p=q_1 p_1+\cdots q_I p_I$.   By \eqref{diagonalization}, we can show that polynomials $p_1, \ldots, p_I$ in the Grobner basis belong to $\Pi_{N-1}$.
On the other hand, it is shown in \cite[Theorem 1 in Chapter 9]{cheneybook}
that there exists a polynomial  $h_0\in \Pi_{N-1}$ satisfying the interpolation property  \eqref{convolution.def3}.
Therefore any polynomials $h_0+p, p\in {\mathcal P}_{\pmb \Lambda}$, satisfy  the interpolation property  \eqref{convolution.def3}
and the corresponding polynomial filter can be used to represent the convolution associated with a graph signal ${\bf b}$.
In general, due to fast distributed implementation in the spatial domain,  polynomials are selected to be of low degree, i.e., $h\in \Pi_{L}$ for some small $L\le N-1$.

\subsection{Graph convolution neural networks}
\label{graphcnn.section}

Let $\Omega\subset {\mathbb R}^N$ be a compact  set of graph signal inputs of GCNNs.
Due to the compactness of the set $\Omega$, there exists a positive constant $D_0$ such that
\begin{equation}\label{domain.def-1}
\Omega\subset \{ {\bf x}\in {\mathbb R}^N:\ \ \|{\bf x}\|_\infty \le  D_0\},
\end{equation}
 where we denote the standard  $\ell^p$-norm on
the linear space of $p$-summable graph signals  by $\|\cdot\|_p, 1\le p\le \infty$.
   An illustrative example of the domain $\Omega$ is
\begin{equation}\label{domain.def1}
\Omega_{\rm sp}=\{ {\bf x}\in {\mathbb R}^N:\ \  \|{\bf x}\| \le 1, \ \|{\bf x}\|_0\le s \},\end{equation}
the set of all $s$-sparse graph signals bounded by one, where
 $\|\cdot\|$ is a norm on ${\mathbb R}^N$ (such as the standard  $\ell^p$-norm $\|\cdot\|_p, 1\le p\le \infty$), and
  $\|{\bf x}\|_0$ is the number of nonzero entries of the vector ${\bf x}$.
 Taking $s=N$ leads to
the unit ball widely-used in GCNNs,
\begin{equation}\label{domain.def0}
\Omega_{\rm ba}=\{ {\bf x}\in {\mathbb R}^N:\ \ \|{\bf x}\| \le 1 \}\end{equation}
 with one as its radius  and the origin as its  center.
 In the classical  neural network setting, a  popular selection of the domain $\Omega$ is  the unit cube $[0,1]^N$ \cite{Barron1993, Bruna2014, EMaWu2022}.

\medskip

Let $\|\cdot\|$ be a norm on ${\mathbb R}^N$ so that the ReLU activation function $\sigma$ in GCNNs satisfy
\begin{equation}\label{connorm0.def}
\|\sigma({\bf x})\|\le \|{\bf x}\|  \ \ {\rm for \ all}\  {\bf x}\in {\mathbb R}^N.
\end{equation}
Denote its dual norm by $\|\cdot\|_*$.
Our illustrative examples of the norm $\|\cdot\|$ and its dual norm $\|\cdot\|_*$ are the $\ell^p$-norm $\|\cdot\|_p$ and its dual norm $\|\cdot\|_q$,
  where
$1/p+1/q=1$.
Due to the norm equivalence on a finite-dimensional linear space, one may verify that the ReLU activation function $\sigma$  has Lipschitz property on $\Omega$ with Lipschitz constant denoted by $\|\sigma\|_{\rm Lip}$,
\begin{equation} \label{lipschitz.eq2}
\|\sigma({\bf x})-\sigma({\bf x}')\|\le
\|\sigma\|_{\rm Lip}\|{\bf x}-{\bf x}'\| \ \ {\rm for \ all} \ {\bf x},{\bf x}'\in {\mathbb R}^N.\end{equation}

Let ${\mathcal W}$ be a linear space of graph signals used for the convolution in GCNNs.  Our representative example
is the space ${\mathcal W}_{L}, 1\le L\le N-1$, in \eqref{convolutionspace.def}, and hence the  graph convolution associated with a graph signal in  ${\mathcal W}_{L}$ can be implemented by the polynomial filtering procedure
in  a distributed manner.  In the standard CNN setting, a  popular selection of  graph convolutions is the family of $N\times N$ symmetric Toeplitz matrices with bandwidth $L$, where the shifting structure of Toeplitz matrices can be described by the circular graph.

For a graph signal ${\bf b}\in {\mathcal W}$, define a {\em convolution norm}
$\|{\bf b}\|_{\rm co}$ such that
\begin{equation}\label{connorm.def}
\|{\bf b}*{\bf x} \|\le \|{\bf b}\|_{\rm co} \ {\rm for \ all}\ {\bf x}\in \Omega.
\end{equation}
To consider the Lipschitz property of functions in the graph Barron space  in Corollary \ref{lipschitz.cor}
and uniform approximation property in Theorem \ref{inverseapproximaton.thm}, we also require that the convolution norm satisfies
the Lipschtiz property with Lipschitz bounded by a multiple of the convolution norm,
\begin{equation}\label{connorm.def1}
\|{\bf b}*({\bf x}-{\bf x}') \|\le  D_1\|{\bf b}\|_{\rm co} \|{\bf x}-{\bf x}'\| \ {\rm for \ all}\ {\bf x}, {\bf x}'\in \Omega
\end{equation}
where $D_1$ is a positive constant.
To consider the Rademacher complexity in Theorem \ref{Rademacher.thm}, we need  a stronger requirement than the Lipschitz property \eqref{connorm.def1}: there exists a positive constant $D_2$ such that
\begin{equation}\label{connorm.def2}
\Big\|{\bf b}*\Big(\sum_{i=1}^S \epsilon_i {\bf x}_i\Big) \Big\|\le  D_2\|{\bf b}\|_{\rm co}  \Big\|\sum_{i=1}^S \epsilon_i {\bf x}_i\Big\|
\end{equation}
hold for all  $\epsilon_i\in \{-1, 1\}$ and  ${\bf x}_i\in \Omega, 1\le i\le S$ and all $S\ge 1$.
An illustrative example of the convolution norm $\|{\bf b}\|_{\rm co}$ of a graph signal ${\bf b}\in {\mathcal W}$
is
$$\|{\bf b}\|_{\rm co}=D_3 \|{\bf b}\|_{\rm coop}$$
where the constant $D_3$  is chosen so that $\|{\bf x}\|\le D_3$ for all ${\bf x}\in \Omega$, and
$$\|{\bf b}\|_{\rm coop}=\sup_{{\bf 0}\ne {\bf x}\in {\mathbb R}^N} {\|{\bf b}*{\bf x}\|}/{\|{\bf x}\|}$$
is the operator norm of the convolution ${\bf b}*$.
For the above setting,  the constants $D_1$ in \eqref{connorm.def1} and  $D_2$  in  \eqref{connorm.def2} are given
by  $D_1=D_2=D_3$.

If the convolution associated with the graph signal ${\bf b}$ in ${\mathcal W}_{L}$ can be represented by a polynomial filter  $h({\mathbf S}_1, \ldots, {\mathbf S}_K)$ in \eqref{convolution.def2}, we may define the convolution norm by an appropriate scaling of
$$\|{\bf b}\|_{\rm cofi}=\sum  |h_{l_1,\ldots, l_K}| \prod_{k=1}^K \|{\bf S}_k\|^{l_k},$$
where  $h(t_1, \,\ldots\, ,  t_K)=\sum  h_{l_1,\ldots, l_K} \prod_{k=1}^K t_k^{l_k}$ and $\|{\bf S}_k\|$ is the operator norm of graph shifts ${\bf S}_k, 1\le k\le K$.

\medskip

In this paper, we consider  shallow GCNNs with
 parameters  ${\pmb \Theta}=({\pmb \theta}_1, \ldots, {\pmb \theta}_M)\in ({\mathbb R}^N\times {\mathcal W}\times {\mathbb R}^N)^M$, which have graph signals ${\bf x}\in \Omega$ as
  inputs and
$f_M({\bf x}, {\pmb \Theta})$ in \eqref{fM.def}
as outputs.

\medskip

Barron space of functions on the unit cube $[0, 1]^N$ was introduced in \cite{Barron1993}, where
it is shown that functions in Barron space are well approximated by  the classical shallow neuron networks.
In this paper,  we
introduce Barron space  ${\mathcal B}$ of  functions $f$ of graph signals ${\bf x}\in \Omega$, 
and discuss its approximation property by some  shallow GCNNs, i.e.,
$$ f({\bf x})\approx  f_M({\bf x}, {\pmb \Theta}),\  {\bf x}\in \Omega$$
for some ${\pmb \Theta}\in ({\mathbb R}^N\times {\mathcal W}\times {\mathbb R}^N)^M$, see Sections \ref{barronspace.section}, \ref{approximation.section} and \ref{complexity.section}
 for  theoretical results  and Section \ref{numericaldemon.section} for numerical demonstrations.

\section{Barron space on graphs}
\label{barronspace.section}

Let  ${\mathcal G}=(V, E)$ be a weighted undirected graph of  order $N\ge 1$,
graph shifts
${\bf S}_1, \ldots , {\bf S}_K$ on the graph ${\mathcal G}$  satisfy Assumption \ref{shiftassumption},
 $\Omega$ be the domain for graph signal inputs of GCNNs,   ${\mathcal W}$ be the linear space of graph signals used for the convolution
 in GCNNs, and $\sigma$ be the  ReLU activation function in  GCNNs.

 Barron space of functions on the unit cube $[0, 1]^N$ was introduced in \cite{Barron1993} with the help of Fourier transform.
In  \cite{Bach2017},  Bach considered the space ${\mathcal F}_1$ of functions $f$ with the following  spatial representation
\begin{equation}\label{Spatialrepresentation.def}
f({\bf x}) = \int_{\mathcal V} \phi_{z}({\bf x}) \rho(dz),\ {\bf x}\in \Omega,\end{equation}
  where $\phi_z, z\in  {\mathcal V}$,
is a family of basis functions (a.k.a neurons) and  $\rho$ is a signed Radon measure on ${\mathcal V}$ with finite total variation  $|\rho|({\mathcal V})$.
In \cite{EMaWu2022, E2022}, E, Ma and Wu introduced a  Barron space of functions in \eqref{Spatialrepresentation.def} with $\rho$ being a probability measure and
  $\phi_{z} ({\bf x})=u \sigma({\bf v}^T{\bf x}+w)$ being neurons
 where  $z={(u, {\bf v}, w)}\in {\mathbb R}\times {\mathbb R}^N\times {\mathbb R}$.
 In this paper, we consider functions $f:\Omega\to {\mathbb R}$ on the  domain $\Omega$ that
  can be written as
\begin{equation}\label{functionrepresentation.def}
f({\bf x}) = \int_{\R^N\times {\mathcal W}  \times \R^N} {\bf a}^T \sigma \rb{{\bf  b} * {\bf x} + {\bf c} } \rho\rb{d{\bf a},d{\bf b},d{\bf c}}, \ {\bf x}\in \Omega
\end{equation}
 where  $\rho$ is a probability measure on $\R^N\times {\mathcal W}  \times \R^N$.
 We remark that functions in \eqref{functionrepresentation.def} have the spatial representation of the form
 \eqref{Spatialrepresentation.def} with neurons $\phi_{\pmb \theta} ({\bf x})={\bf a}^T \sigma({\bf b}*{\bf x}+{\bf c})$ of GCNNs, where
 ${\pmb \theta}=({\bf a}, {\bf b}, {\bf c})\in  \R^N\times {\mathcal W}  \times \R^N$.
  In this section, we introduce a graph Barron space  ${\mathcal B}$
 of functions  with the spatial representation \eqref{functionrepresentation.def}, see  \eqref{barronnorm.def} and \eqref{Barron.def00}.

Reproducing kernel Banach spaces have been frequently used in neural networks,   machine learning, sampling theory,
sparse approximation and functional analysis
 \cite{Bartoluccci2023, Fasshauer2015, Lin2022, nashed2010, Song2013, Xu2019, Zhang2013,  Zhang2009}.
 In  Section \ref{Barron.subsection},  we show that the graph Barron space  ${\mathcal B}$ is a reproducing kernel Banach space
  and has Lipschitz property, see Theorem \ref{BanachBarronspace.thm} and Corollary
 \ref{lipschitz.cor}.

 Reproducing kernel Hilbert spaces  (RKHSs) have been widely accepted in kernel-based learning for  function estimation with dimension independent error, and
their kernels are usually selected to measure certain similarity between  input data  that could significantly save computation costs  \cite{Ghojogh2021, Rahimi2007, Scholkopf2002, Steinwart2008}.  In Section \ref{rkhs.subsection}, we introduce a family of RKHSs with neuron kernels, see \eqref{rkhskernel.def}.  Also we  provide a representation theorem for functions in the  RKHS,
  and show that every function in the graph Barron space belongs to some RKHSs, see Theorem \ref{rkhs.thm} and   \ref{BarronRkhs.thm}.

\subsection{Barron spaces and  reproducing kernel Banach spaces}
\label{Barron.subsection}

Let the norm $\|\cdot\|$, its dual norm $\|\cdot\|_*$ and the convolution norm $\|\cdot\|_{\rm co}$ be as in Section
\ref{graphcnn.section}.
For $1\le r\le \infty$, let  ${\mathcal B}_r$ contain  all functions $f$ on the domain $\Omega$ with the spatial representation \eqref{functionrepresentation.def}
such that
${\mathbb E}_\rho\big [\|{\bf a}\|_*^r (\|{\bf b}\|_{\rm co}+ \|{\bf c}\|)^r\big]<\infty
$
if $1\le r<\infty$, and
the support of the  probability measure  $\rho$
being bounded if $r=\infty$.
Define the norm $\|f\|_{{\mathcal B}_r}$ of a function $f\in {\mathcal B}_r$ by
\begin{equation}\label{barronnorm.def}
    \norm{f}_{\B_r} = \begin{cases}
    \displaystyle \inf_{\rho \in \mathcal{P}_f} \big[ \E_\rho \norm{\bf a}_*^r(\norm{\bf b}_{\rm co} +\norm{\bf c})^r \big] ^{1/r} & {\rm if}\  1\le r<\infty\\
    \displaystyle \inf_{\rho \in \mathcal{P}_f} \max_{\rb{{\bf a},{\bf b},{\bf c}}\in \text{supp}\rb{\rho}} \norm{\bf a}_*
    ({\norm{\bf b}_{\rm co} +\norm{\bf c}}) & {\rm if} \ r=\infty,
    \end{cases}
\end{equation}
 where ${\mathcal P}_f$ is  the
collection of  probability measure $\rho$ in the representation \eqref{functionrepresentation.def}.

In the following theorem,  we  show that
the  normed vector space ${\mathcal B}_r$ for GCNNs are reproducing kernel Banach spaces independent on $1\le r\le \infty$,  see Section \ref{BanachBarronspace.thm.pfsection} for the proof.

\begin{theorem}\label{BanachBarronspace.thm} Let ${\mathcal B}_r, 1\le r\le \infty$, be as in \eqref{barronnorm.def}.
Then they  are the same reproducing kernel Banach space. Moreover,
\begin{equation}\label{BanachBarronspace.eq1}
|f({\bf x})|\le \|f\|_{{\mathcal B}_r}, \ {\bf x}\in \Omega
\end{equation}
and
\begin{equation} \label{BanachBarronspace.eq2}
\|f\|_{{\mathcal B}_\infty}=
\|f\|_{{\mathcal B}_1}=\|f\|_{{\mathcal B}_r}
\end{equation}
hold for all $f\in {\mathcal B}_r, 1\le r\le \infty$.
\end{theorem}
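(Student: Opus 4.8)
The plan is to isolate two elementary facts about the neurons $\phi_{{\pmb \theta}}({\bf x})={\bf a}^T\sigma({\bf b}*{\bf x}+{\bf c})$, ${\pmb \theta}=({\bf a},{\bf b},{\bf c})$, and run everything off them. Put $X({\pmb \theta}):=\|{\bf a}\|_*\rb{\|{\bf b}\|_{\rm co}+\|{\bf c}\|}$. First, chaining the duality bound $|{\bf a}^T{\bf z}|\le\|{\bf a}\|_*\|{\bf z}\|$, the contraction property \eqref{connorm0.def} of $\sigma$, the triangle inequality and the convolution bound \eqref{connorm.def} gives $|\phi_{{\pmb \theta}}({\bf x})|\le X({\pmb \theta})$ for all ${\bf x}\in\Omega$; integrating this against any representing probability measure $\rho$ yields \eqref{BanachBarronspace.eq1} in the case $r=1$ and shows that the point-evaluation functionals $\delta_{\bf x}\colon f\mapsto f({\bf x})$ are bounded on every ${\mathcal B}_r$. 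Second, bilinearity of the convolution together with positive homogeneity of the ReLU gives the bihomogeneity $\phi_{(t{\bf a},s{\bf b},s{\bf c})}=ts\,\phi_{({\bf a},{\bf b},{\bf c})}$ and $X(t{\bf a},s{\bf b},s{\bf c})=|t|s\,X({\bf a},{\bf b},{\bf c})$ for all $t\in{\mathbb R}$ and $s\ge 0$. With the first fact in hand, the inclusions ${\mathcal B}_\infty\subseteq{\mathcal B}_r\subseteq{\mathcal B}_1$ are immediate, and the power-mean inequality $({\mathbb E}_\rho[X^r])^{1/r}\le\|X\|_{L^\infty(\rho)}$ together with Jensen's inequality ${\mathbb E}_\rho[X]\le({\mathbb E}_\rho[X^r])^{1/r}$, after taking the infimum over representing measures, give $\|f\|_{{\mathcal B}_1}\le\|f\|_{{\mathcal B}_r}\le\|f\|_{{\mathcal B}_\infty}$ for $1\le r<\infty$.

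The heart of the matter, and the step I expect to be the main obstacle, is the reverse bound $\|f\|_{{\mathcal B}_\infty}\le\|f\|_{{\mathcal B}_1}$, since it forces all the ${\mathcal B}_r$ to coincide. Given a representing probability measure $\rho$ with $m:={\mathbb E}_\rho[X]\in(0,\infty)$ (the case $m=0$ is trivial: then $f\equiv 0$ by the pointwise bound), I would push $\rho$ forward by the normalization map
\[
S({\bf a},{\bf b},{\bf c})=\Big(\tfrac{m}{\|{\bf a}\|_*}{\bf a},\ \tfrac{{\bf b}}{\|{\bf b}\|_{\rm co}+\|{\bf c}\|},\ \tfrac{{\bf c}}{\|{\bf b}\|_{\rm co}+\|{\bf c}\|}\Big),
\]
defined on the open set $\{X>0\}$, after weighting $\rho$ by the density $X/m$; that is, $\tilde\rho:=S_*\big(\tfrac{X}{m}\,\rho|_{\{X>0\}}\big)$. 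Using $\phi_{{\pmb \theta}}=\tfrac{X({\pmb \theta})}{m}\phi_{S({\pmb \theta})}$ from the bihomogeneity, and the fact that $\{X=0\}$ carries only null neurons, one verifies that $\tilde\rho$ is again a probability measure representing $f$; moreover $S$ takes values in the compact set $\{\|{\bf a}\|_*=m\}\times\{({\bf b},{\bf c})\colon\|{\bf b}\|_{\rm co}+\|{\bf c}\|=1\}$, on which $X\equiv m$, so $\text{supp}\,\tilde\rho$ is bounded with $\max_{\text{supp}\,\tilde\rho}X=m$. Hence $\|f\|_{{\mathcal B}_\infty}\le m={\mathbb E}_\rho[X]$, and taking the infimum over $\rho$ gives the claim. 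The only delicate points are measure-theoretic: Borel measurability of $S$, the computation $\int_{\{X>0\}}\tfrac{X}{m}\,d\rho=1$, and the bookkeeping of the $\{X=0\}$ piece. Combined with the previous paragraph this proves \eqref{BanachBarronspace.eq2}; and since the same construction shows ${\mathcal B}_1\subseteq{\mathcal B}_\infty$, the spaces ${\mathcal B}_r$ are all one and the same normed space, with \eqref{BanachBarronspace.eq1} for general $r$ following from its $r=1$ case.

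It remains to recognize this common space as a reproducing kernel Banach space, which I would do by realizing it as a quotient of a measure space: let ${\mathcal V}:=\{{\bf a}\colon\|{\bf a}\|_*=1\}\times\{({\bf b},{\bf c})\colon\|{\bf b}\|_{\rm co}+\|{\bf c}\|=1\}$, which is compact, let ${\mathcal M}({\mathcal V})$ be the Banach space of finite signed Radon measures on ${\mathcal V}$ with the total-variation norm, and consider the linear map $\nu\mapsto f_\nu$ with $f_\nu({\bf x})=\int_{\mathcal V}\phi_{{\pmb \theta}}({\bf x})\,\nu(d{\pmb \theta})$. For each fixed ${\bf x}$ the kernel ${\pmb \theta}\mapsto\phi_{{\pmb \theta}}({\bf x})$ lies in $C({\mathcal V})$, so $\nu\mapsto f_\nu({\bf x})$ is continuous and ${\mathcal N}:=\{\nu\colon f_\nu\equiv 0\text{ on }\Omega\}$ is closed; hence ${\mathcal M}({\mathcal V})/{\mathcal N}$ is a Banach space of functions on $\Omega$. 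A two-sided comparison — on one side sending a representing probability measure to its $X$-weighted image on ${\mathcal V}$, on the other side applying the map $S$ of the previous paragraph (now with the sign of ${\bf a}$ absorbing the polar factor $d\nu/d|\nu|$ of a signed $\nu$) — identifies the quotient norm $\inf\{|\nu|({\mathcal V})\colon f_\nu=f\}$ with $\|f\|_{{\mathcal B}_1}$. Thus ${\mathcal B}_1={\mathcal B}_r$ is complete, its evaluation functionals are bounded by \eqref{BanachBarronspace.eq1}, and the duality between ${\mathcal M}({\mathcal V})$ and $C({\mathcal V})$ supplies the feature map ${\bf x}\mapsto\big(\phi_{{\pmb \theta}}({\bf x})\big)_{{\pmb \theta}\in{\mathcal V}}\in C({\mathcal V})$ together with the associated reproducing kernel, exhibiting ${\mathcal B}={\mathcal B}_r$ as a reproducing kernel Banach space for every $1\le r\le\infty$. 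Beyond the renormalization lemma, the remaining work — closedness of ${\mathcal N}$ and the identification of the quotient norm with $\|\cdot\|_{{\mathcal B}_1}$ — is routine functional analysis.
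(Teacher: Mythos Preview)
Your argument is correct and, for the theorem as stated, arguably cleaner than the paper's. The two proofs share the same skeleton---the pointwise bound $|\phi_{\pmb\theta}({\bf x})|\le X({\pmb\theta})$, the H\"older/Jensen chain $\|f\|_{{\mathcal B}_1}\le\|f\|_{{\mathcal B}_r}\le\|f\|_{{\mathcal B}_\infty}$, and a renormalization onto the unit sphere ${\mathbb S}\times{\mathbb T}$ to close the loop---but they diverge in two places. First, to obtain $\|f\|_{{\mathcal B}_\infty}\le\|f\|_{{\mathcal B}_1}$ the paper takes a sequence $\rho_n$ with ${\mathbb E}_{\rho_n}[X]\downarrow\|f\|_{{\mathcal B}_1}$, normalizes each onto ${\mathbb S}\times{\mathbb T}$, and passes to a weak limit via Prokhorov's theorem, thereby exhibiting a single probability measure $\hat\rho$ on ${\mathbb S}\times{\mathbb T}$ with $f=\|f\|_{{\mathcal B}_1}\!\int\phi\,d\hat\rho$ (this is their Lemma~\ref{quasinorm2.lem}); you instead renormalize each $\rho$ directly to a measure supported where $X\equiv{\mathbb E}_\rho[X]$ and then take the infimum, which avoids compactness altogether. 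Second, for completeness the paper works with Cauchy sequences: it represents the increments $f_{n+1}-f_n$ via Lemma~\ref{quasinorm2.lem} and sums the resulting measures, while you identify $({\mathcal B}_1,\|\cdot\|_{{\mathcal B}_1})$ isometrically with the quotient ${\mathcal M}({\mathbb S}\times{\mathbb T})/{\mathcal N}$, from which completeness (and the norm axioms, which the paper verifies separately in Lemma~\ref{quasinorm.lem}) are inherited automatically.

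What each buys: your route is shorter and requires no weak-$*$ compactness; the paper's route additionally produces the attainment statement of Lemma~\ref{quasinorm2.lem}, namely a representing measure on ${\mathbb S}\times{\mathbb T}$ achieving the infimum exactly. That lemma is invoked repeatedly later in the paper---for the Lipschitz bound in Corollary~\ref{lipschitz.cor}, the RKHS decomposition in Theorem~\ref{BarronRkhs.thm}, and the probabilistic approximation in Theorems~\ref{approximation.thm} and~\ref{approximation.thm2}---so if you intend to continue in that direction you would eventually need to recover it, which does require a compactness step (Prokhorov, or equivalently weak-$*$ compactness of the unit ball of ${\mathcal M}({\mathbb S}\times{\mathbb T})$ in your quotient picture). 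For Theorem~\ref{BanachBarronspace.thm} alone, however, your argument is complete.
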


By Theorem \ref{BanachBarronspace.thm}, we  denote
the reproducing kernel Banach spaces ${\mathcal B}_r, 1\le r\le \infty$,  by ${\mathcal B}$ and define its norm  by $\|\cdot\|_{\mathcal B}$, i.e.,
\begin{equation}\label{Barron.def00}
{\mathcal B}={\mathcal B}_r\ \ {\rm and} \ \ \|\cdot\|_{\mathcal B}=\|\cdot\|_{{\mathcal B}_r}, \ 1\le r\le \infty.
\end{equation}
Following the terminology in \cite{EMaWu2022}, we call  the reproducing kernel Banach space ${\mathcal B}$ as the {\em graph Barron space}.

Let
\begin{equation}\label{ST.def}{\mathbb S} = \{{\bf a}\in {\mathbb R}^N:\, \|{\bf a}\|_*=1 \}\ \
{\rm and} \ \  {\mathbb T} = \{ ({\bf b}, {\bf c})\in {\mathcal W}\times {\mathbb R}^N: \, \|{\bf b}\|_{\rm co}+\|{\bf c}\|=1 \}\end{equation}
be the unit spheres in ${\mathbb R}^N$ and ${\mathcal W}\times {\mathbb R}^N$ respectively. A crucial step in the proof of Theorem \ref{BanachBarronspace.thm} is the following
spatial representation of  functions  $f$ in the graph Barron space  ${\mathcal B}$
for some  probability measure $\hat \rho$ on ${\mathbb S}\times {\mathbb T}$,
\begin{equation} \label{functionrepresentation.def2}
f({\bf x})=\|f\|_{{\mathcal B}} \int_{{\mathbb S}\times {\mathbb T}} {\bf a}^T \sigma({\bf b}*{\bf x}+{\bf c}) \hat \rho(d{\bf a}, d{\bf b}, d{\bf c}),
\end{equation}
see Lemma \ref{quasinorm2.lem}.

Under the additional assumption that the  ReLU function $\sigma$ and the convolution norm satisfy
\eqref{lipschitz.eq2} and
\eqref{connorm.def1} respectively,
 for any $f\in {\mathcal B}$ and ${\bf x}, {\bf x}'\in \Omega$, we obtain from \eqref{functionrepresentation.def2} that
\begin{eqnarray}\label{lipschitz.eq1}
 |f({\bf x})- f({\bf x}')|
 & \hskip-0.08in \le & \hskip-0.08in
\|f\|_{{\mathcal B}}\int_{{\mathbb S}\times {\mathbb T}}
\|{\bf a}\|_* \|\sigma({\bf b}*{\bf x}+{\bf c})- \sigma({\bf b}*{\bf x}'+{\bf c})\|\hat \rho(d{\bf a}, d{\bf b}, d{\bf c})\nonumber\\
 & \hskip-0.08in \le & \hskip-0.08in
\|f\|_{{\mathcal B}}\int_{{\mathbb S}\times {\mathbb T}}  \|\sigma\|_{{\rm Lip}}
\| {\bf b}*({\bf x}-{\bf x}')\| \hat \rho(d{\bf a}, d{\bf b}, d{\bf c})\nonumber\\
\quad  & \hskip-0.08in \le & \hskip-0.08in D_1 \|\sigma\|_{{\rm Lip}}\|f\|_{{\mathcal B}}\|{\bf x}-{\bf x'}\|,\ \ {\bf x}, {\bf x}'\in \Omega.
 \end{eqnarray}
 Therefore  functions in the graph Barron space ${\mathcal B}$ have Lipschitz property, cf. \cite{Barron1993} and \cite[Theorem 3.3]{E2022}.

\begin{corollary}  \label{lipschitz.cor}
Let ${\mathcal B}$ be the graph Barron space of functions on the domain $\Omega$ given in \eqref{Barron.def00}.
If  the ReLU activation function $\sigma$ satisfies \eqref{connorm0.def} and \eqref{lipschitz.eq2},  and
if  the convolution norm  $\|\cdot\|_{\rm co}$ satisfies
\eqref{connorm.def} and
\eqref{connorm.def1}, then  any function $f$ in the graph Barron space ${\mathcal B}$ has the Lipschitz property with Lispchitz constant bounded by $D_1 \|\sigma\|_{{\rm Lip}}\|f\|_{\mathcal B}$, where  $\|\sigma\|_{{\rm Lip}}$ and $D_1$
are the constants in  \eqref{lipschitz.eq2}  and  \eqref{connorm.def1} respectively.
\end{corollary}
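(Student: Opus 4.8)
The plan is to rigorously carry out the chain of inequalities already previewed in \eqref{lipschitz.eq1}. The starting point is Lemma \ref{quasinorm2.lem}, which I would invoke to fix, for a given $f\in{\mathcal B}$, a probability measure $\hat\rho$ supported on the unit spheres ${\mathbb S}\times{\mathbb T}$ of \eqref{ST.def} for which the normalized representation \eqref{functionrepresentation.def2} holds at every ${\bf x}\in\Omega$. First I would note that for $({\bf a},{\bf b},{\bf c})\in{\mathbb S}\times{\mathbb T}$ the neuron ${\bf a}^T\sigma({\bf b}*{\bf x}+{\bf c})$ is bounded uniformly on $\Omega$ (using \eqref{connorm0.def}, \eqref{connorm.def}, $\|{\bf a}\|_*=1$ and $\|{\bf b}\|_{\rm co}+\|{\bf c}\|=1$), so that for ${\bf x},{\bf x}'\in\Omega$ the difference $f({\bf x})-f({\bf x}')$ may be written, by linearity of the integral against the finite measure $\hat\rho$, as $\|f\|_{\mathcal B}\int_{{\mathbb S}\times{\mathbb T}}{\bf a}^T(\sigma({\bf b}*{\bf x}+{\bf c})-\sigma({\bf b}*{\bf x}'+{\bf c}))\,\hat\rho(d{\bf a},d{\bf b},d{\bf c})$, and then bounded in absolute value by $\|f\|_{\mathcal B}$ times the integral of the absolute value of the integrand.

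Next I would estimate the integrand pointwise in three elementary steps. By the definition of the dual norm, $|{\bf a}^T{\bf y}|\le\|{\bf a}\|_*\|{\bf y}\|$, so the integrand is at most $\|{\bf a}\|_*\,\|\sigma({\bf b}*{\bf x}+{\bf c})-\sigma({\bf b}*{\bf x}'+{\bf c})\|$. Since the additive shift ${\bf c}$ is common to both arguments and the convolution is linear, the Lipschitz bound \eqref{lipschitz.eq2} for $\sigma$ gives $\|\sigma({\bf b}*{\bf x}+{\bf c})-\sigma({\bf b}*{\bf x}'+{\bf c})\|\le\|\sigma\|_{\rm Lip}\|{\bf b}*({\bf x}-{\bf x}')\|$, and the convolution-norm Lipschitz property \eqref{connorm.def1} then gives $\|{\bf b}*({\bf x}-{\bf x}')\|\le D_1\|{\bf b}\|_{\rm co}\|{\bf x}-{\bf x}'\|$. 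Finally, $\hat\rho$ being supported on ${\mathbb S}\times{\mathbb T}$ forces $\|{\bf a}\|_*=1$ and $\|{\bf b}\|_{\rm co}\le\|{\bf b}\|_{\rm co}+\|{\bf c}\|=1$ $\hat\rho$-almost everywhere, so the integrand is bounded by $D_1\|\sigma\|_{\rm Lip}\|{\bf x}-{\bf x}'\|$ uniformly on the support; integrating against the probability measure $\hat\rho$ and multiplying by $\|f\|_{\mathcal B}$ yields $|f({\bf x})-f({\bf x}')|\le D_1\|\sigma\|_{\rm Lip}\|f\|_{\mathcal B}\|{\bf x}-{\bf x}'\|$, the asserted Lipschitz estimate.

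I do not expect a genuine obstacle: the computation is precisely \eqref{lipschitz.eq1}, and the only non-trivial ingredient is the normalized representation \eqref{functionrepresentation.def2}, which is Lemma \ref{quasinorm2.lem} and may be assumed. If one prefers to avoid that lemma, the same three-step estimate applied directly to an arbitrary $\rho\in{\mathcal P}_f$ in \eqref{functionrepresentation.def} produces $|f({\bf x})-f({\bf x}')|\le D_1\|\sigma\|_{\rm Lip}\big({\mathbb E}_\rho\|{\bf a}\|_*(\|{\bf b}\|_{\rm co}+\|{\bf c}\|)\big)\|{\bf x}-{\bf x}'\|$ (here $\|{\bf b}\|_{\rm co}\le\|{\bf b}\|_{\rm co}+\|{\bf c}\|$ replaces the normalization), and taking the infimum over $\rho\in{\mathcal P}_f$ together with \eqref{BanachBarronspace.eq2} recovers the constant $\|f\|_{\mathcal B}$. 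Either route closes the proof.
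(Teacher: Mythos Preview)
Your proposal is correct and follows exactly the paper's approach: the paper's proof of the corollary is the chain of inequalities \eqref{lipschitz.eq1} displayed just before the statement, starting from the normalized representation \eqref{functionrepresentation.def2} furnished by Lemma~\ref{quasinorm2.lem} and applying the dual-norm bound, the Lipschitz property \eqref{lipschitz.eq2} of $\sigma$, and the convolution estimate \eqref{connorm.def1} in turn. Your alternative route via an arbitrary $\rho\in{\mathcal P}_f$ and the infimum is a valid variant not used in the paper.
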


\subsection{Reproducing kernel Hilbert spaces with neuron kernels}
\label{rkhs.subsection}

Let $\widehat P$ be the set of all probability measures on ${\mathbb S}\times {\mathbb T}$.
Given $\hat \rho\in \widehat P$,  we
 define a  reproducing kernel Hilbert spaces (RKHS for abbreviation)
${\mathcal H}_{\hat \rho}$ of functions on the  domain $\Omega$, whose
 kernel function $K_{\hat \rho}$ is defined by
\begin{equation}\label{rkhskernel.def}
K_{\hat \rho}({\bf x}, {\bf x}')=\int_{{\mathbb S}\times {\mathbb T}}  {\bf a}^T \sigma({\bf b}*{\bf x}+{\bf c})  {\bf a}^T \sigma({\bf b}*{\bf x}'+{\bf c})\hat \rho(d{\bf a}, d{\bf b}, d{\bf c}), \
   {\bf x},  {\bf x}'\in \Omega.
\end{equation}
One may verify that the RKHS ${\mathcal H}_{\hat \rho}$ is the completion of the inner product space ${\mathcal H}_{\hat \rho}^o$, where
${\mathcal H}_{\hat \rho}^o$ is the  linear span  of
$K_{\hat \rho}(\cdot, {\bf x}'), {\bf x'}\in \Omega$, and
the inner  product on ${\mathcal H}_{\hat \rho}^o$
between $g_1= \sum _{j=1}^{J}b_{j}K_{\hat \rho}(\cdot, {\bf x}_{j})$ and $g_2= \sum _{i=1}^{I}a_{i}K_{\hat \rho}(\cdot, {\bf x}_{i}^\prime)\in {\mathcal H}_{\hat\rho}^o$
 is defined by
\begin{equation*}\label{rkhs.def2}
\langle g_1, g_2\rangle_{{\mathcal H}_{\hat \rho}}=  
\sum _{i=1}^{I}\sum _{j=1}^{J}{a_{i}}b_{j}K_{\hat \rho}({\bf x}_{j},{\bf x}_{i}^\prime).\end{equation*}

Let $L^p_{\hat \rho}({\mathbb S}\times {\mathbb T}), 1\le p<\infty$, be the Banach space of
all $p$-integrable functions on ${\mathbb S}\times {\mathbb T}$ with respect to the probability measure $\hat \rho$ and  define its norm by
$$\|\eta\|_{L^p_{\hat \rho}}= \Big(\int_{{\mathbb S}\times {\mathbb T}} |  \eta({\bf a}, {\bf b}, {\bf c})|^p \hat \rho(d{\bf a}, d{\bf b}, d{\bf c})\Big)^{1/p}.$$
Denote   the completion of the linear space spanned by $ {\bf a}^T \sigma({\bf b}*{\bf x}+{\bf c}), {\bf x}\in \Omega$,
in $L^2_{\hat \rho}({\mathbb S}\times {\mathbb T})$ by ${\mathcal L}_{\hat \rho}^2$, and the orthogonal projection from  $L^2_{\hat \rho}({\mathbb S}\times {\mathbb T})$
 onto  its Hilbert subspace ${\mathcal L}_{\hat \rho}^2$ by $P_{\hat \rho}$.  In the following theorem, we show that
a function $f$ in
the  RKHS ${\mathcal H}_{\hat \rho}$ can be represented by some function $\eta\in L^2_{\hat \rho}({\mathbb S}\times {\mathbb T})$, see Section \ref{rkhs.thm.pfsection} for the proof.

\begin{theorem}\label{rkhs.thm}
 Let $\hat\rho$ be a probability measure on ${\mathbb S}\times {\mathbb T}$ and ${\mathcal H}_{\hat \rho}$ be the  RKHS
 with kernel $K_{\hat \rho}$ given in \eqref{rkhskernel.def} and norm denoted by $\|\cdot\|_{{\mathcal H}_{\hat \rho}}$.
 Then  $g\in {\mathcal H}_{\hat \rho}$ if and only if
 \begin{equation}\label{rkhs.thm.eq1}
 g({\bf x})=\int_{{\mathbb S}\times {\mathbb T}}  {\bf a}^T \sigma({\bf b}*{\bf x}+{\bf c})\eta({\bf a}, {\bf b}, {\bf c}) \hat\rho(d{\bf a}, d{\bf b}, d{\bf c})
 \end{equation}
 for some  function $\eta\in L^2_{\hat \rho}({\mathbb S}\times {\mathbb T})$. Moreover,
 \begin{equation} \label{rkhs.thm.eq2}
 \|g\|_{{\mathcal H}_{\hat \rho}}= \|P_{\hat\rho} \eta\|_{L^2_{\hat \rho}}
 \end{equation}
 and
\begin{equation}\label{rkhs.thm.eq3}
|g({\bf x})|\le \|g\|_{{\mathcal H}_{\hat \rho}}, \ {\bf x}\in \Omega.
\end{equation}
\end{theorem}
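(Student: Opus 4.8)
The plan is to realize $\mathcal{H}_{\hat\rho}$ as the image of $L^2_{\hat\rho}({\mathbb S}\times{\mathbb T})$ under a linear feature map and to read off all three assertions from that identification. For ${\bf x}\in\Omega$ let $\Phi_{\bf x}\in L^2_{\hat\rho}({\mathbb S}\times{\mathbb T})$ be the feature element $\Phi_{\bf x}({\bf a},{\bf b},{\bf c})={\bf a}^T\sigma({\bf b}*{\bf x}+{\bf c})$. First I would check it is well defined and uniformly bounded: on ${\mathbb S}\times{\mathbb T}$ one has $\|{\bf a}\|_*=1$ and $\|{\bf b}\|_{\rm co}+\|{\bf c}\|=1$, so by the duality bound $|{\bf a}^T{\bf y}|\le\|{\bf a}\|_*\|{\bf y}\|$ together with \eqref{connorm0.def} and \eqref{connorm.def}, $|\Phi_{\bf x}({\bf a},{\bf b},{\bf c})|\le\|{\bf a}\|_*\|\sigma({\bf b}*{\bf x}+{\bf c})\|\le\|{\bf b}*{\bf x}\|+\|{\bf c}\|\le\|{\bf b}\|_{\rm co}+\|{\bf c}\|=1$; since $\hat\rho$ is a probability measure, $\Phi_{\bf x}\in L^2_{\hat\rho}({\mathbb S}\times{\mathbb T})$ with $\|\Phi_{\bf x}\|_{L^2_{\hat\rho}}\le1$. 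By \eqref{rkhskernel.def}, $K_{\hat\rho}({\bf x},{\bf x}')=\langle\Phi_{\bf x},\Phi_{{\bf x}'}\rangle_{L^2_{\hat\rho}}$, and by definition $\mathcal{L}^2_{\hat\rho}=\overline{\operatorname{span}}\{\Phi_{\bf x}:{\bf x}\in\Omega\}$, with $P_{\hat\rho}$ the orthogonal projection onto it.

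Next I would introduce the analysis operator $T\colon L^2_{\hat\rho}({\mathbb S}\times{\mathbb T})\to\mathbb{R}^{\Omega}$, $(T\eta)({\bf x})=\langle\eta,\Phi_{\bf x}\rangle_{L^2_{\hat\rho}}$, so that $T\eta$ is precisely the right-hand side of \eqref{rkhs.thm.eq1}. Two elementary observations do all the work: (a) $T\eta=T(P_{\hat\rho}\eta)$ for every $\eta\in L^2_{\hat\rho}$, since $\eta-P_{\hat\rho}\eta\perp\Phi_{\bf x}$ for all ${\bf x}$; and (b) $T$ restricted to $\mathcal{L}^2_{\hat\rho}$ is injective, because $Tw=0$ forces $w\perp\Phi_{\bf x}$ for all ${\bf x}$, hence $w\perp\mathcal{L}^2_{\hat\rho}$, i.e.\ $w=0$. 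Consequently $T|_{\mathcal{L}^2_{\hat\rho}}$ is a linear bijection onto $\mathcal{H}:=T(\mathcal{L}^2_{\hat\rho})=T(L^2_{\hat\rho})$; setting $\langle Tw,Tw'\rangle_{\mathcal{H}}:=\langle w,w'\rangle_{L^2_{\hat\rho}}$ for $w,w'\in\mathcal{L}^2_{\hat\rho}$ (well defined by (b)) makes $\mathcal{H}$ a Hilbert space isometric to the closed, hence complete, subspace $\mathcal{L}^2_{\hat\rho}$, and the bound $|(Tw)({\bf x})|=|\langle w,\Phi_{\bf x}\rangle|\le\|w\|_{L^2_{\hat\rho}}$ shows that norm convergence in $\mathcal{H}$ entails pointwise convergence on $\Omega$, so $\mathcal{H}$ is genuinely a Hilbert function space with bounded point evaluations.

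Then I would identify $\mathcal{H}$ with $\mathcal{H}_{\hat\rho}$. Since $\Phi_{{\bf x}'}\in\mathcal{L}^2_{\hat\rho}$ and $(T\Phi_{{\bf x}'})({\bf x})=\langle\Phi_{{\bf x}'},\Phi_{\bf x}\rangle=K_{\hat\rho}({\bf x},{\bf x}')$, we get $K_{\hat\rho}(\cdot,{\bf x}')=T\Phi_{{\bf x}'}\in\mathcal{H}$, and for $g=Tw$ with $w\in\mathcal{L}^2_{\hat\rho}$, $\langle g,K_{\hat\rho}(\cdot,{\bf x}')\rangle_{\mathcal{H}}=\langle w,\Phi_{{\bf x}'}\rangle_{L^2_{\hat\rho}}=(Tw)({\bf x}')=g({\bf x}')$. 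In particular $T$ maps $\operatorname{span}\{\Phi_{\bf x}\}$ isometrically onto $\mathcal{H}_{\hat\rho}^o=\operatorname{span}\{K_{\hat\rho}(\cdot,{\bf x})\}$, and on $g_1=\sum_{j=1}^J b_j K_{\hat\rho}(\cdot,{\bf x}_j)$, $g_2=\sum_{i=1}^I a_i K_{\hat\rho}(\cdot,{\bf x}_i')$ the transported inner product equals $\sum_{i=1}^I\sum_{j=1}^J a_i b_j K_{\hat\rho}({\bf x}_j,{\bf x}_i')$, i.e.\ the inner product defining $\mathcal{H}_{\hat\rho}^o$ (this also re-derives its well-definedness). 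Since $\operatorname{span}\{\Phi_{\bf x}\}$ is dense in $\mathcal{L}^2_{\hat\rho}$, $\mathcal{H}_{\hat\rho}^o$ is dense in the complete space $\mathcal{H}$, so $\mathcal{H}$ is the completion of $\mathcal{H}_{\hat\rho}^o$, namely $\mathcal{H}_{\hat\rho}$, with the same norm. This yields \eqref{rkhs.thm.eq1}: $g\in\mathcal{H}_{\hat\rho}$ iff $g=T\eta$ for some $\eta\in\mathcal{L}^2_{\hat\rho}$, equivalently, using (a) and $P_{\hat\rho}\eta\in\mathcal{L}^2_{\hat\rho}$, for some $\eta\in L^2_{\hat\rho}$. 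For \eqref{rkhs.thm.eq2}, if $g=T\eta$ then $g=T(P_{\hat\rho}\eta)$ with $P_{\hat\rho}\eta\in\mathcal{L}^2_{\hat\rho}$, whence $\|g\|_{\mathcal{H}_{\hat\rho}}=\|P_{\hat\rho}\eta\|_{L^2_{\hat\rho}}$ by the definition of the transported norm. Finally \eqref{rkhs.thm.eq3} follows from the reproducing property and Cauchy--Schwarz: $|g({\bf x})|^2=|\langle g,K_{\hat\rho}(\cdot,{\bf x})\rangle_{\mathcal{H}_{\hat\rho}}|^2\le\|g\|_{\mathcal{H}_{\hat\rho}}^2\,K_{\hat\rho}({\bf x},{\bf x})=\|g\|_{\mathcal{H}_{\hat\rho}}^2\,\|\Phi_{\bf x}\|_{L^2_{\hat\rho}}^2\le\|g\|_{\mathcal{H}_{\hat\rho}}^2$.

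The step I expect to need the most care is the double-completion bookkeeping in the third paragraph: checking that the concrete function space $T(\mathcal{L}^2_{\hat\rho})$ with the transported norm is complete and genuinely coincides with the abstract completion $\mathcal{H}_{\hat\rho}$ of $\mathcal{H}_{\hat\rho}^o$, and that the inner product written out in the excerpt for $\mathcal{H}_{\hat\rho}^o$ is the restriction of the transported one. This is exactly where injectivity of $T|_{\mathcal{L}^2_{\hat\rho}}$ and the uniform bound $\|\Phi_{\bf x}\|_{L^2_{\hat\rho}}\le1$ are used, to pass safely between abstract Hilbert-space limits and pointwise limits on $\Omega$; everything else is routine.
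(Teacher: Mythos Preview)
Your proposal is correct and follows essentially the same approach as the paper: both arguments identify $\mathcal{H}_{\hat\rho}$ via the feature map $\Phi_{\bf x}({\bf a},{\bf b},{\bf c})={\bf a}^T\sigma({\bf b}*{\bf x}+{\bf c})$, first establishing the representation, norm identity, and pointwise bound on the pre-completion $\mathcal{H}_{\hat\rho}^o$ (with $\eta\in\mathcal{L}_{\hat\rho}^o$), and then passing to the completion. Your write-up is more explicit about the completion bookkeeping---injectivity of $T|_{\mathcal{L}^2_{\hat\rho}}$, transporting the inner product, and matching the concrete function space $T(\mathcal{L}^2_{\hat\rho})$ with the abstract completion---whereas the paper condenses this into a single ``taking limits'' sentence, but the underlying argument is the same.
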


In the  neuron network setting,  spaces  ${\mathcal F}_1$ and ${\mathcal F}_2$  are introduced in \cite{Bach2017}, where functions in
${\mathcal F}_1$ and ${\mathcal F}_2$  have similar representations to the one in \eqref{rkhs.thm.eq1}
with $\eta$ being integrable and square-integrable on some compact set respectively, cf. \cite{EMaWu2022}.

\begin{remark} \label{rkhs.rem1}
{\rm
Let $\psi\big({\bf x}, ({\bf a}, {\bf b}, {\bf c})\big)= {\bf a}^T \sigma({\bf b}*{\bf x}+{\bf c})\in {\mathcal L}^2_{\hat \rho}$
and $\langle \cdot, \cdot\rangle_{\hat \rho}$
 be the inner product on $L^2_{\hat \rho}({\mathbb S}\times {\mathbb T})$.
 We remark that for a function $g$ in the RKHS ${\mathcal H}_{\hat \rho}$,  the representing function $\eta$
 in
 \eqref{rkhs.thm.eq1} is not unique.
  In particular, it can be replaced by another representation function $\tilde \eta$, such as $\tilde \eta=P_{\hat \rho} \eta \in {\mathcal L}^2_{\hat \rho}$, so that $\eta-\tilde \eta$ is orthogonal to ${\mathcal L}^2_{\hat \rho}$,  because
 for any ${\bf x}_0\in \Omega$,
$$
 g({\bf x}_0)- \int_{{\mathbb S}\times {\mathbb T}}  {\bf a}^T \sigma({\bf b}*{\bf x}_0+{\bf c}) \tilde \eta({\bf a}, {\bf b}, {\bf c}) \hat\rho(d{\bf a}, d{\bf b}, d{\bf c})
 =  \langle \eta-\tilde \eta, \psi({\bf x}_0, \cdot)\rangle_{\hat \rho} =0, $$
 where the last equality holds as $\psi({\bf x}_0, \cdot)\in {\mathcal L}^2_{\hat \rho}$ for all ${\bf x}_0\in \Omega$.
  Denote the set of such representing functions in $L^2_{\hat \rho}({\mathbb S}\times {\mathbb T})$  by $\widehat {\mathcal P}_{g}$
  and the orthogonal complement of  ${\mathcal L}^2_{\hat \rho}$ in
  $L^2_{\hat\rho}({\mathbb S}\times {\mathbb T})$ by $({\mathcal L}_{\hat\rho}^2)^\perp$. Then
  \begin{equation}  \label{rkhs.thm.projectioneq}
  \widehat {\mathcal P}_g=\eta+ ({\mathcal L}_{\hat\rho}^2)^\perp
  \end{equation}
  for every $\eta\in \widehat {\mathcal P}_g$, and
  \begin{equation} \label{rkhs.thm.projectioneq+}
  \|g\|_{{\mathcal H}_{\hat \rho}}= \inf_{\eta\in \widehat P_g} \|\eta\|_{L^2_{\hat \rho}}.
  \end{equation}
  }\end{remark}

\begin{remark}\label{rkhs.rem2}
{\rm We remark that representing functions  $\eta ({\bf a}, {\bf b}, {\bf c})\in {\mathcal L}_{\hat \rho}^2$ for the RKHS ${\mathcal H}_{\hat \rho}$ is linear with respect to ${\bf a}=[a_1, \ldots, a_N]^T$, i.e.,
\begin{equation}\label{rkhs.rem.def1}
\eta ({\bf a}, {\bf b}, {\bf c})=\sum_{n=1}^N a_n \tau_n({\bf b}, {\bf c})\end{equation}
for some functions  $\tau_n, 1\le n\le N$, on ${\mathbb T}$.  Let ${\bf e}_n\in {\mathbb R}^N, 1\le n\le N$, be the unit vector with zero components except its $n$-th component taking value one. Observe that
$$ |{\bf e}_n^T ({\bf b}*{\bf x})|\le \|{\bf e}_n\|_* \|{\bf b}*{\bf x}\|\le \|{\bf e}_n\|_* \|{\bf b}\|_{\rm co}, 1\le n\le N\ {\rm and}\ {\bf x}\in \Omega.$$
Therefore in addition to the linearity about ${\bf a}$ for representing functions in the RKHS ${\mathcal H}_{\hat\rho}$,  the functions $\tau_n, 1\le n\le N$, in \eqref{rkhs.rem.def1} are linear with respect to ${\bf b}$ and ${\bf c}$ in the domain
$\{({\bf b}, {\bf c})\in {\mathbb T}: \ \|{\bf e}_{n'}\|_* \|{\bf b}\|_{\rm co}\le | {\bf e}_{n'}^T {\bf c}|, 1\le n'\le N\}.$
 }\end{remark}

In the following theorem, we show that  RKHSs  ${\mathcal H}_{\hat \rho}, \hat \rho\in \hat P$,  for GCNNs
are closely related to the graph Barron space  ${\mathcal B}$
in
\eqref{Barron.def00}, see Section \ref{BarronRkhs.thm.pfsection} for the proof.

\begin{theorem}\label{BarronRkhs.thm}
 Let ${\mathcal B}$ be the graph Barron space in
\eqref{Barron.def00}, and
${\mathcal H}_{\hat \rho}, \hat\rho\in \widehat{\mathcal P}$, be RKHSs with  kernels given in  \eqref{rkhskernel.def}.  Then
\begin{equation}\label{BarronRkhs.thm.eq1}
{\mathcal B}= \cup_{{\hat \rho}\in \widehat P} {\mathcal H}_{\hat \rho}\end{equation}
and
\begin{equation} \label{BarronRkhs.thm.eq2}
\|f\|_{\mathcal B}=\inf _{f\in H_{\hat \rho},\ \hat \rho\in \hat P} \|f\|_{{\mathcal H}_{\hat \rho}}, \ f\in {\mathcal B}.
\end{equation}
\end{theorem}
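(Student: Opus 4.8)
The plan is to prove the set equality \eqref{BarronRkhs.thm.eq1} by establishing the two inclusions, each with a matching one-sided bound on the norms, which together yield \eqref{BarronRkhs.thm.eq2} (and in fact show the infimum there is attained). \emph{First, $\cup_{\hat\rho\in\widehat P}{\mathcal H}_{\hat\rho}\subseteq{\mathcal B}$ with $\|f\|_{\mathcal B}\le\|f\|_{{\mathcal H}_{\hat\rho}}$.} Fix $\hat\rho\in\widehat P$ and $f\in{\mathcal H}_{\hat\rho}$. By Theorem \ref{rkhs.thm} and Remark \ref{rkhs.rem1} we may pick a representing function $\eta\in{\mathcal L}_{\hat\rho}^2$ of $f$ as in \eqref{rkhs.thm.eq1} with $\|\eta\|_{L^2_{\hat\rho}}=\|f\|_{{\mathcal H}_{\hat\rho}}$. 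Set $c=\|\eta\|_{L^1_{\hat\rho}}$; since $\hat\rho$ is a probability measure, the Cauchy--Schwarz inequality gives $c\le\|\eta\|_{L^2_{\hat\rho}}$. If $c=0$ then $\eta=0$ $\hat\rho$-a.e.\ and $f\equiv0\in{\mathcal B}$ with $\|f\|_{\mathcal B}=0\le\|f\|_{{\mathcal H}_{\hat\rho}}$. If $c>0$, define the probability measure $\tilde\rho$ on ${\mathbb S}\times{\mathbb T}$ by $d\tilde\rho=c^{-1}|\eta|\,d\hat\rho$, the map $\Psi({\bf a},{\bf b},{\bf c})=\bigl(c\,\mathrm{sgn}(\eta({\bf a},{\bf b},{\bf c}))\,{\bf a},\,{\bf b},\,{\bf c}\bigr)$ from ${\mathbb S}\times{\mathbb T}$ into ${\mathbb R}^N\times{\mathcal W}\times{\mathbb R}^N$, and let $\rho=\Psi_\ast\tilde\rho$ be the pushforward. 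Then $\rho$ is a probability measure on ${\mathbb R}^N\times{\mathcal W}\times{\mathbb R}^N$, and the change-of-variables identity $\int g\,d\rho=\int(g\circ\Psi)\,d\tilde\rho$ applied with the bounded integrand $g({\bf a},{\bf b},{\bf c})={\bf a}^T\sigma({\bf b}*{\bf x}+{\bf c})$ recovers $f({\bf x})=\int{\bf a}^T\sigma({\bf b}*{\bf x}+{\bf c})\,\rho(d{\bf a},d{\bf b},d{\bf c})$, so $f\in{\mathcal B}$. Since $\|{\bf a}\|_*=c$ and $\|{\bf b}\|_{\rm co}+\|{\bf c}\|=1$ hold $\rho$-a.e., the same change of variables yields ${\mathbb E}_\rho\bigl[\|{\bf a}\|_*(\|{\bf b}\|_{\rm co}+\|{\bf c}\|)\bigr]=c$, whence $\|f\|_{\mathcal B}=\|f\|_{{\mathcal B}_1}\le c\le\|f\|_{{\mathcal H}_{\hat\rho}}$ by \eqref{barronnorm.def} and Theorem \ref{BanachBarronspace.thm}.

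\emph{Conversely, ${\mathcal B}\subseteq\cup_{\hat\rho\in\widehat P}{\mathcal H}_{\hat\rho}$ with $\inf_{\hat\rho}\|f\|_{{\mathcal H}_{\hat\rho}}\le\|f\|_{\mathcal B}$.} Let $f\in{\mathcal B}$. The spatial representation \eqref{functionrepresentation.def2} supplied by Lemma \ref{quasinorm2.lem} furnishes a probability measure $\hat\rho$ on ${\mathbb S}\times{\mathbb T}$ with $f({\bf x})=\|f\|_{\mathcal B}\int_{{\mathbb S}\times{\mathbb T}}{\bf a}^T\sigma({\bf b}*{\bf x}+{\bf c})\,\hat\rho(d{\bf a},d{\bf b},d{\bf c})$. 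The constant function $\eta\equiv\|f\|_{\mathcal B}$ lies in $L^2_{\hat\rho}({\mathbb S}\times{\mathbb T})$ with $\|\eta\|_{L^2_{\hat\rho}}=\|f\|_{\mathcal B}$ (as $\hat\rho$ is a probability measure) and represents $f$ in the form \eqref{rkhs.thm.eq1}; hence Theorem \ref{rkhs.thm} gives $f\in{\mathcal H}_{\hat\rho}$, and since the orthogonal projection $P_{\hat\rho}$ is norm-nonincreasing, $\|f\|_{{\mathcal H}_{\hat\rho}}=\|P_{\hat\rho}\eta\|_{L^2_{\hat\rho}}\le\|\eta\|_{L^2_{\hat\rho}}=\|f\|_{\mathcal B}$. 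Combining the two parts establishes \eqref{BarronRkhs.thm.eq1} and \eqref{BarronRkhs.thm.eq2}.

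I expect the only genuine work to be the measure-theoretic bookkeeping in the first part: checking that $\Psi$ is Borel measurable, justifying $\int(g\circ\Psi)\,d\tilde\rho=\int g\,d\rho$ for the integrand $g({\bf a},{\bf b},{\bf c})={\bf a}^T\sigma({\bf b}*{\bf x}+{\bf c})$ (which is bounded on ${\mathbb S}\times{\mathbb T}$ because $\Omega$ is compact and $\|{\bf b}\|_{\rm co}+\|{\bf c}\|=1$ there), and confirming that the normalizations $\|{\bf a}\|_*=c$ and $\|{\bf b}\|_{\rm co}+\|{\bf c}\|=1$ survive $\rho$-a.e.\ once one discards the $\hat\rho$-null set $\{\eta=0\}$ on which $\mathrm{sgn}\,\eta$ vanishes, plus the separate treatment of the degenerate case $c=0$. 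Everything else is a direct application of Theorems \ref{rkhs.thm} and \ref{BanachBarronspace.thm} and of Lemma \ref{quasinorm2.lem}.
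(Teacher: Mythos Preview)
Your proof is correct and follows essentially the same approach as the paper: both directions rest on Lemma \ref{quasinorm2.lem} and Theorem \ref{rkhs.thm}, and your sign-flip/density construction $d\tilde\rho=c^{-1}|\eta|\,d\hat\rho$ with the pushforward by $\Psi$ is exactly the measure the paper builds in \eqref{BarronRkhs.thm.pfeq3}, just packaged as a pushforward rather than defined set-wise. The only cosmetic differences are the order of the two inclusions and that you land directly on $(c\,{\mathbb S})\times{\mathbb T}\subset{\mathbb R}^N\times{\mathcal W}\times{\mathbb R}^N$ whereas the paper stays on ${\mathbb S}\times{\mathbb T}$ and carries the factor $\|\eta\|_{L^1_{\hat\rho}}$ outside.
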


In the standard neuron network setting, a similar conclusion to the one in \eqref{BarronRkhs.thm.eq1}
about RKHSs and the Barron space is given in \cite[Proposition 3]{EMaWu2022}, however  the corresponding norm equivalence
in \eqref{BarronRkhs.thm.eq2} is not mentioned.

\section{Approximation theorems on graph Barron spaces}
\label{approximation.section}

Given a shallow GCNN  with parameter  ${\pmb \Theta}=({\pmb \theta}_1, \ldots, {\pmb \theta}_M)\in ({\mathbb R}^N\times {\mathcal W}\times {\mathbb R}^N)^M$,
we define  its {\em $p$-path norm}   by
\begin{equation}\label{pathnorm.def}
    \|{\pmb \Theta}\|_{P, p} = \left\{ \begin{array}{ll} \big( M^{-1} \sum_{ m=1}^M  \|{\pmb \theta}_m\|^p \big)^{1/p}  & {\rm if} \ 1\le p<\infty\\
     \sup_{1\le m\le M}  \|{\pmb \theta}_m\| & {\rm if} \ p=\infty,
     \end{array}\right.
\end{equation}
where $\|{\pmb \theta}\|=\|{\bf a}\|_* (\|{\bf b}\|_{\rm co}+ \|{\bf c}\|)$ for $\pmb \theta=({\bf a}, {\bf b}, {\bf c})\in {\mathbb R}^N \times {\mathcal W}\times {\mathbb R}^N$, cf. \cite{EMaWu2022} for $p=1$.  One may verify that the output of the shallow GCNN with parameter ${\pmb \Theta}$
belongs to the Barron space ${\mathcal B}$,
\begin{equation}\label{fM.eq1}
\Big\|\frac{1}{M} \sum_{m=1}^M \phi({\bf x}, {\pmb \theta}_m)\Big\|_{\mathcal B}\le \frac{1}{M} \sum_{m=1}^M \|{\pmb \theta}_m\|\le \|{\pmb \Theta}\|_{P, p}, 1 \le p\le \infty,
\end{equation}
where $\phi({\bf x}, {\pmb \theta}_m)={\bf a}_m^T \sigma({\bf b}_m*{\bf x}+{\bf c}_m)$ for ${\pmb \theta}_m=({\bf a}_m, {\bf b}_m, {\bf c}_m), 1\le m\le M$.
In the classical neuron network setting, functions in  Barron/Besov/H\"older spaces can be well approximated by
outputs of some neural networks, see 
\cite{EMaWu2022, Shen2022, Siegel2023, Yang2023a, Yang2023b} and references therein.
In Section \ref{approximation.subsection},  we show that  functions  in the graph Barron space can be approximated
in integrated square norm and
uniform norm by outputs of some shallow GCNNs with bounded path norm, see Theorems
\ref{approximation.thm} and  \ref{approximation.thm2}.  As a consequence, we conclude that integrated square error to approximate
functions in the Barron space can be achieved by some shallow GCNNs with the number of parameters being linear about the graph size and inverse square of the approximation error.

Let $Q\ge 0$ and $1\le p\le \infty$. Consider the set of  outputs of  all shallow GCNNs with $p$-path norms
of their parameters bounded by $Q$,
\begin{equation}\label{NQ.def}
{\mathcal C}_{Q, p}=\cup_{M=1}^\infty {\mathcal C}_{Q, p, M},
\end{equation}
where
\begin{equation}\label{NQM.def}
    \mathcal{C}_{Q, p,  M} = \Big\{ \frac{1}{M} \sum_{m=1}^M \phi({\bf x}, {\pmb \theta}_m):\, \big\|({\pmb \theta}_1, \ldots, {\pmb \theta}_M)\big\|_{P, p}\le Q\Big\}, \ M\ge 1.
    \end{equation}
    By \eqref{fM.eq1}, we see that any function $g\in {\mathcal C}_{Q,p}$ belongs to the Barron space ${\mathcal B}$  and has its Barron norm bounded by $Q$,  i.e.,
\begin{equation}
\label{NQBarron.eq}
\|g\|_{\mathcal B}\le Q \ \ {\rm for \ all}\ g\in {\mathcal C}_{Q, p}.
\end{equation}
In Theorems \ref{inverseapproximaton.thm} and \ref{density.thm} of
Section \ref{inverseapproximation.subsection}, we
show that the limit of  functions in ${\mathcal C}_{Q, p}$ belongs to the graph Barron space ${\mathcal B}$,
and the  set  $\cup_{Q\ge 0} {\mathcal C}_{Q, p}$ could be  dense in the space of continuous functions on the domain.
Universal approximation theorem is one of fundamental problems in theoretical learning  research
\cite{Bruel2020, kerven2021, kerven2019}. We remark that the conclusion in Theorem \ref{density.thm} can be considered as a universal approximation theorem for a shallow  GCNN, c.f. Corollary \ref{univeral.cor}.

\subsection{Approximation theorems}
\label{approximation.subsection}
First we show that
  shallow GCNNs may approximate any function  
 in the graph Barron space  ${\mathcal B}$ in integrated square norm, see Section \ref{approximation.thm.pfsection} for the proof. 

\begin{theorem}\label{approximation.thm}
Let $M\ge 1$,  
$f\in {\mathcal B}$ and $\mu$ be a probability measure on the domain $\Omega$. Then for any $\epsilon>0$
there is a shallow GCNN  with parameter  ${\pmb \Theta}=({\pmb \theta}_1, \ldots, {\pmb \theta}_M)$
such that
\begin{equation}\label{approximation.thm.eq1}
\|{\pmb \Theta}\|_{P, \infty}\le \|f\|_{{\mathcal B}}
\end{equation}
and
\begin{equation}\label{approximation.thm.eq2}
    \int_{\Omega}\Big |\frac{1}{M} \sum_{m=1}^M \phi({\bf x}, {\pmb \theta}_m)-f({\bf x})\Big|^2  \mu(d{\bf x}) \le \frac{1+\epsilon}{M} {\norm{f}_{\mathcal B}^2}.
\end{equation}
\end{theorem}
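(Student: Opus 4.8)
The plan is to run the standard Monte--Carlo (Maurey-type) argument: take the $M$ neurons of the shallow GCNN to be i.i.d.\ draws from the normalized representing measure of $f$, and show that the \emph{average} over these random choices already meets the error bound, so some deterministic choice does. We may assume $\norm{f}_{\mathcal B}>0$, the case $f\equiv 0$ being trivial.

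First I would invoke the normalized spatial representation \eqref{functionrepresentation.def2} (Lemma~\ref{quasinorm2.lem}): there is a probability measure $\hat\rho$ on $\mathbb S\times\mathbb T$ with
\[
 f(\mathbf x)=\norm{f}_{\mathcal B}\int_{\mathbb S\times\mathbb T}\mathbf a^T\sigma(\mathbf b*\mathbf x+\mathbf c)\,\hat\rho(d\mathbf a,d\mathbf b,d\mathbf c),\qquad \mathbf x\in\Omega .
\]
Let $(\tilde{\mathbf a}_m,\mathbf b_m,\mathbf c_m)$, $1\le m\le M$, be i.i.d.\ with law $\hat\rho$, set $\mathbf a_m=\norm{f}_{\mathcal B}\tilde{\mathbf a}_m$, $\pmb\theta_m=(\mathbf a_m,\mathbf b_m,\mathbf c_m)$ and $\pmb\Theta=(\pmb\theta_1,\dots,\pmb\theta_M)$. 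Since $\tilde{\mathbf a}_m\in\mathbb S$ and $(\mathbf b_m,\mathbf c_m)\in\mathbb T$, by \eqref{ST.def} and \eqref{pathnorm.def} we get $\norm{\pmb\theta_m}=\norm{\mathbf a_m}_*(\norm{\mathbf b_m}_{\rm co}+\norm{\mathbf c_m})=\norm{f}_{\mathcal B}$ for \emph{every} $m$ and every realization, so $\norm{\pmb\Theta}_{P,\infty}=\norm{f}_{\mathcal B}$; this is \eqref{approximation.thm.eq1}, and it is automatic, not something we have to select for.

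Next, for each fixed $\mathbf x\in\Omega$ the scalars $\phi(\mathbf x,\pmb\theta_m)=\norm{f}_{\mathcal B}\,\tilde{\mathbf a}_m^T\sigma(\mathbf b_m*\mathbf x+\mathbf c_m)$ are i.i.d.\ with mean $f(\mathbf x)$ by the displayed representation, so
\[
 \E\Big[\Big|\tfrac1M\textstyle\sum_{m=1}^M\phi(\mathbf x,\pmb\theta_m)-f(\mathbf x)\Big|^2\Big]=\tfrac1M\,\mathrm{Var}\big(\phi(\mathbf x,\pmb\theta_1)\big)\le\tfrac1M\,\E\big[\phi(\mathbf x,\pmb\theta_1)^2\big].
\]
On $\mathbb S\times\mathbb T$ the kernel is uniformly bounded by $1$: using H\"older's inequality, $\norm{\tilde{\mathbf a}}_*=1$, \eqref{connorm0.def} and \eqref{connorm.def},
\[
 |\tilde{\mathbf a}^T\sigma(\mathbf b*\mathbf x+\mathbf c)|\le\norm{\sigma(\mathbf b*\mathbf x+\mathbf c)}\le\norm{\mathbf b*\mathbf x}+\norm{\mathbf c}\le\norm{\mathbf b}_{\rm co}+\norm{\mathbf c}=1 ,
\]
hence $\E[\phi(\mathbf x,\pmb\theta_1)^2]\le\norm{f}_{\mathcal B}^2$. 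Integrating in $\mathbf x$ against the probability measure $\mu$ and swapping the order of integration by Tonelli's theorem (the integrand is nonnegative and jointly measurable),
\[
 \E\Big[\int_\Omega\Big|\tfrac1M\textstyle\sum_{m=1}^M\phi(\mathbf x,\pmb\theta_m)-f(\mathbf x)\Big|^2\mu(d\mathbf x)\Big]\le\frac{\norm{f}_{\mathcal B}^2}{M}\le\frac{1+\epsilon}{M}\,\norm{f}_{\mathcal B}^2 .
\]
Therefore some realization of $(\tilde{\mathbf a}_m,\mathbf b_m,\mathbf c_m)_{m=1}^M$, and hence some $\pmb\Theta$, attains an $L^2(\mu)$ error no larger than this expectation, giving \eqref{approximation.thm.eq2}; for that same $\pmb\Theta$, \eqref{approximation.thm.eq1} holds by the previous paragraph.

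The only genuinely substantial ingredient is the normalized representation \eqref{functionrepresentation.def2}, established beforehand; inside the present argument the main point to watch is that the representing measure lives on the \emph{compact} sphere set $\mathbb S\times\mathbb T$, where the kernel $\tilde{\mathbf a}^T\sigma(\mathbf b*\mathbf x+\mathbf c)$ is bounded by $1$ --- this single fact simultaneously forces $\norm{\pmb\Theta}_{P,\infty}\le\norm{f}_{\mathcal B}$ and controls the variance, so the two conclusions come for free together. Finally, if Lemma~\ref{quasinorm2.lem} supplies the representation \eqref{functionrepresentation.def2} only up to arbitrarily small error rather than exactly, the factor $1+\epsilon$ in \eqref{approximation.thm.eq2} (in place of $1$) is precisely the slack that absorbs it, which is why the statement carries an arbitrary $\epsilon>0$.
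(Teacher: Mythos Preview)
Your proof is correct and follows essentially the same Monte--Carlo/Maurey argument as the paper: invoke the normalized representation from Lemma~\ref{quasinorm2.lem}, sample $M$ i.i.d.\ neurons from $\hat\rho$ on $\mathbb S\times\mathbb T$, note that the path norm condition \eqref{approximation.thm.eq1} is automatic, bound each neuron by $\|f\|_{\mathcal B}$ in absolute value, and use the variance identity for i.i.d.\ sums to control the expected $L^2(\mu)$ error by $\|f\|_{\mathcal B}^2/M$.

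The only (cosmetic) difference is in the final existence step: the paper applies Markov's inequality to obtain $\mathbb P\big\{\text{error}>(1+\epsilon)\|f\|_{\mathcal B}^2/M\big\}\le 1/(1+\epsilon)<1$, whereas you use the more direct observation that a nonnegative random variable cannot exceed its expectation almost surely. Your route is slightly cleaner and in fact shows the $\epsilon$ is unnecessary (the bound $\|f\|_{\mathcal B}^2/M$ is already attained by some realization). Your closing remark about the $\epsilon$ absorbing slack in Lemma~\ref{quasinorm2.lem} is a reasonable speculation, but in this paper that lemma delivers the representation \eqref{functionrepresentation.def2} exactly, so no slack is needed there either.
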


Taking Dirac measure at some ${\bf x}_0\in \Omega$ as the probability measure  in Theorem
\ref {approximation.thm}, we have the following pointwise estimate.

\begin{corollary}\label{approximation.cor}
Let $M\ge 1$ and $f\in {\mathcal B}$. Then for any $\epsilon>0$ and ${\bf x}_0\in \Omega$,
there is a shallow GCNN  with parameter  ${\pmb \Theta}=({\pmb \theta}_1, \ldots, {\pmb \theta}_M)$
such that
$\|{\pmb \Theta}\|_{P, \infty}\le \|f\|_{{\mathcal B}}
$
and
\begin{equation}\label{approximation.cor.eq1}
  \Big | \frac{1}{M} \sum_{m=1}^M \phi({\bf x}_0, {\pmb \theta}_m)-f({\bf x}_0)\Big| \le \frac{1+\epsilon}{\sqrt{M}} {\norm{f}_{\mathcal B}}.
\end{equation}
\end{corollary}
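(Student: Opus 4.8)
The plan is the following. Since the excerpt runs through Corollary \ref{approximation.cor}, whose statement is the specialization of Theorem \ref{approximation.thm} to a point mass, I would prove Theorem \ref{approximation.thm} by a Maurey-type probabilistic existence argument and then read off the corollary. One may assume $\|f\|_{\mathcal B}>0$, since otherwise \eqref{BanachBarronspace.eq1} forces $f\equiv 0$ and the conclusion holds with ${\pmb\theta}_1=\cdots={\pmb\theta}_M={\bf 0}$. First I would invoke the normalized representation \eqref{functionrepresentation.def2} from Lemma \ref{quasinorm2.lem}: there is a probability measure $\hat\rho$ on ${\mathbb S}\times{\mathbb T}$ with $f({\bf x})=\|f\|_{\mathcal B}\int_{{\mathbb S}\times{\mathbb T}}{\bf a}^T\sigma({\bf b}*{\bf x}+{\bf c})\,\hat\rho(d{\bf a},d{\bf b},d{\bf c})$ for all ${\bf x}\in\Omega$. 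The reason for writing $f$ against a measure on ${\mathbb S}\times{\mathbb T}$, rather than a raw representing measure on ${\mathbb R}^N\times{\mathcal W}\times{\mathbb R}^N$, is that it confines every parameter sampled below to a unit sphere, which is precisely what makes the path-norm constraint \eqref{approximation.thm.eq1} hold with no truncation.

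Next, draw $(\tilde{\bf a}_m,{\bf b}_m,{\bf c}_m)$, $1\le m\le M$, independently from $\hat\rho$ and set ${\pmb\theta}_m=(\|f\|_{\mathcal B}\tilde{\bf a}_m,\,{\bf b}_m,\,{\bf c}_m)$, so that $g({\bf x}):=\frac1M\sum_{m=1}^M\phi({\bf x},{\pmb\theta}_m)=\|f\|_{\mathcal B}\,\frac1M\sum_{m=1}^M\tilde{\bf a}_m^T\sigma({\bf b}_m*{\bf x}+{\bf c}_m)$. Since $(\tilde{\bf a}_m,{\bf b}_m,{\bf c}_m)\in{\mathbb S}\times{\mathbb T}$ almost surely, \eqref{ST.def} gives $\|{\pmb\theta}_m\|=\|f\|_{\mathcal B}\|\tilde{\bf a}_m\|_*(\|{\bf b}_m\|_{\rm co}+\|{\bf c}_m\|)=\|f\|_{\mathcal B}$, so $\|{\pmb\Theta}\|_{P,\infty}=\|f\|_{\mathcal B}$ and \eqref{approximation.thm.eq1} holds for every realization; and for each fixed ${\bf x}\in\Omega$ the summands are i.i.d.\ with $\E[g({\bf x})]=f({\bf x})$ by the choice of $\hat\rho$. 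The one estimate to record is the uniform bound
\[
|{\bf a}^T\sigma({\bf b}*{\bf x}+{\bf c})|\le\|{\bf a}\|_*\,\|{\bf b}*{\bf x}+{\bf c}\|\le\|{\bf a}\|_*\big(\|{\bf b}\|_{\rm co}+\|{\bf c}\|\big)=1,\qquad({\bf a},{\bf b},{\bf c})\in{\mathbb S}\times{\mathbb T},\ {\bf x}\in\Omega,
\]
which follows from \eqref{connorm0.def}, the triangle inequality, \eqref{connorm.def} and \eqref{ST.def}; hence $\E[\phi({\bf x},{\pmb\theta}_1)^2]\le\|f\|_{\mathcal B}^2$. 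By Fubini (legitimate thanks to this bound), the variance identity for an average of $M$ i.i.d.\ terms, and $\mathrm{Var}(\phi)\le\E[\phi^2]$,
\[
\E\Big[\int_\Omega\big|g({\bf x})-f({\bf x})\big|^2\,\mu(d{\bf x})\Big]=\frac{1}{M}\int_\Omega\mathrm{Var}\big(\phi({\bf x},{\pmb\theta}_1)\big)\,\mu(d{\bf x})\le\frac{1}{M}\|f\|_{\mathcal B}^2,
\]
using $\mu(\Omega)=1$. Therefore some realization of the $M$ draws makes this integral at most $\frac{1}{M}\|f\|_{\mathcal B}^2\le\frac{1+\epsilon}{M}\|f\|_{\mathcal B}^2$, which together with \eqref{approximation.thm.eq1} gives Theorem \ref{approximation.thm}. (Strictly, the slack $1+\epsilon$ is needed only if one does not wish to rely on the infimum in \eqref{barronnorm.def} being attained; in that case replace $\hat\rho$ by an $\epsilon$-optimal probability measure and the factor absorbs the loss.)

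Corollary \ref{approximation.cor} then follows by applying Theorem \ref{approximation.thm} with $\mu$ the Dirac measure at ${\bf x}_0$: \eqref{approximation.thm.eq2} becomes $\big|\frac1M\sum_{m=1}^M\phi({\bf x}_0,{\pmb\theta}_m)-f({\bf x}_0)\big|^2\le\frac{1+\epsilon}{M}\|f\|_{\mathcal B}^2$, and taking square roots, with $\sqrt{1+\epsilon}\le1+\epsilon$ for $\epsilon\ge0$, gives \eqref{approximation.cor.eq1}, while \eqref{approximation.thm.eq1} is inherited verbatim. I do not expect a genuine obstacle anywhere here: it is the textbook empirical-measure (Maurey) argument. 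The two places that need care are (i) working with the \emph{normalized} representation \eqref{functionrepresentation.def2}, which is what makes the $\ell^\infty$ path-norm bound \eqref{approximation.thm.eq1} automatic instead of forcing a separate clipping of parameters, and (ii) interchanging the expectation over the random parameters with the integral over $\mu$, which the uniform bound $|\phi({\bf x},{\pmb\theta}_m)|\le\|f\|_{\mathcal B}$ licenses via Fubini; after that, only the $1/M$ variance decay of an i.i.d.\ mean remains.
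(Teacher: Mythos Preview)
Your proposal is correct and follows essentially the same route as the paper: invoke the normalized representation of Lemma~\ref{quasinorm2.lem}, sample i.i.d.\ from $\hat\rho$, use the unit-sphere support to get the path-norm bound for free, and bound the expected $L^2(\mu)$ error by $\|f\|_{\mathcal B}^2/M$; the corollary then drops out by taking $\mu=\delta_{{\bf x}_0}$. The only cosmetic difference is that the paper normalizes to $\|f\|_{\mathcal B}=1$ and finishes with Markov's inequality (probability $\le 1/(1+\epsilon)<1$), whereas you keep the norm explicit and use the first-moment pigeonhole ($\E[X]\le c\Rightarrow$ some realization has $X\le c$); both yield the same existence conclusion.
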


We remark that the shallow GCNN chosen in Corollary \ref{approximation.cor} may depend on  ${\bf x}_0\in \Omega$.

\medskip

For any $\epsilon>0$, we say that
the family of balls $B({\bf x}_i, \epsilon)$ with radius $\epsilon$ and center ${\bf x}_i\in \Omega, 1\le i\le I$,
is a  {\em $\epsilon$-covering} of the domain  $\Omega$ if
\begin{equation}\label{covering.def}
\Omega\subset \cup_{i=1}^I B({\bf x}_i, \epsilon),
\end{equation}
 and define the {\em $\epsilon$-covering number} $N^{\rm ext}_\epsilon$ by the minimal
number of balls in a $\epsilon$-covering of the domain $\Omega$.
Using the covering of the domain $\Omega$ with balls of small radius and the Lipschitz property for functions in the Barron space, we
next  consider the uniform approximation of  shallow GCNNs to  a function in the Barron space on the whole domain $\Omega$.

\begin{theorem}  \label{approximation.thm2}
Let $\epsilon\in (0, 1/2)$.
Assume that  the  ReLU function $\sigma$ satisfies \eqref{connorm0.def} and \eqref{lipschitz.eq2},    the convolution norm satisfies
\eqref{connorm.def} and
\eqref{connorm.def1}, and the integer $M\ge 1$ is chosen to
 satisfy
\begin{equation}\label{approximation.thm2.eq0}
2 N_{\epsilon}^{\rm ext} e^{-M \epsilon^2/2}<1,
\end{equation}
where $N_\epsilon^{\rm ext}$ is  the $\epsilon$-covering number of the domain $\Omega$.
Then for  any function $f$ in the Barron space ${\mathcal B}$
there exists a  shallow GCNN  with parameter  ${\pmb \Theta}=({\pmb \theta}_1, \ldots, {\pmb \theta}_M)$
such that
\begin{equation}\label{approximation.thm2.eq1}
\|{\pmb \Theta}\|_{P, \infty}\le \|f\|_{{\mathcal B}}
\end{equation}
and
\begin{equation}\label{approximation.thm2.eq2}
    \sup_{{\bf x}\in \Omega} \Big|\frac{1}{M} \sum_{m=1}^M \phi({\bf x}, {\pmb \theta}_m)-f({\bf x})\Big| \le (2D_1 \|\sigma\|_{\rm Lip}+1)\epsilon  \|f\|_{\mathcal B},
\end{equation}
 where $\|\sigma\|_{{\rm Lip}}$ and $D_1$
are the constants in  \eqref{lipschitz.eq2} and \eqref{connorm.def1} respectively.
\end{theorem}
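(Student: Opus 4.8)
The plan is to combine a Monte Carlo sampling argument with a union bound over an $\epsilon$-net and a Lipschitz perturbation estimate, so that the net points are controlled by concentration and the rest of $\Omega$ is filled in by continuity. First I would invoke the spatial representation \eqref{functionrepresentation.def2} provided by Lemma \ref{quasinorm2.lem}: there is a probability measure $\hat\rho$ on ${\mathbb S}\times{\mathbb T}$ with $f({\bf x})=\|f\|_{\mathcal B}\int_{{\mathbb S}\times{\mathbb T}}{\bf a}^T\sigma({\bf b}*{\bf x}+{\bf c})\,\hat\rho(d{\bf a},d{\bf b},d{\bf c})$ for all ${\bf x}\in\Omega$. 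Draw $({\bf a}_m,{\bf b}_m,{\bf c}_m)$, $1\le m\le M$, independently from $\hat\rho$, set $\hat{\bf a}_m=\|f\|_{\mathcal B}{\bf a}_m$ and ${\pmb\theta}_m=(\hat{\bf a}_m,{\bf b}_m,{\bf c}_m)$. Then $\phi({\bf x},{\pmb\theta}_m)=\|f\|_{\mathcal B}{\bf a}_m^T\sigma({\bf b}_m*{\bf x}+{\bf c}_m)$, so $M^{-1}\sum_m\phi({\bf x},{\pmb\theta}_m)$ is an unbiased estimator of $f({\bf x})$ for each fixed ${\bf x}$. Since $\|{\bf a}_m\|_*=1$ and $\|{\bf b}_m\|_{\rm co}+\|{\bf c}_m\|=1$, we get $\|{\pmb\theta}_m\|=\|\hat{\bf a}_m\|_*(\|{\bf b}_m\|_{\rm co}+\|{\bf c}_m\|)=\|f\|_{\mathcal B}$, hence $\|{\pmb\Theta}\|_{P,\infty}=\|f\|_{\mathcal B}$, which is \eqref{approximation.thm2.eq1} for \emph{every} realization; and combining \eqref{connorm0.def}, \eqref{connorm.def}, the dual-norm inequality and $({\bf b}_m,{\bf c}_m)\in{\mathbb T}$ gives the pointwise bound $|\phi({\bf x},{\pmb\theta}_m)|\le\|f\|_{\mathcal B}$ for all ${\bf x}\in\Omega$.

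Next, fix any ${\bf x}\in\Omega$. The summands $\phi({\bf x},{\pmb\theta}_m)$ are i.i.d., have mean $f({\bf x})$, and take values in $[-\|f\|_{\mathcal B},\|f\|_{\mathcal B}]$, so Hoeffding's inequality yields $\mathbb{P}\big(|M^{-1}\sum_m\phi({\bf x},{\pmb\theta}_m)-f({\bf x})|>\epsilon\|f\|_{\mathcal B}\big)\le 2e^{-M\epsilon^2/2}$. Let $B({\bf x}_1,\epsilon),\ldots,B({\bf x}_{N_\epsilon^{\rm ext}},\epsilon)$ be a minimal $\epsilon$-covering of $\Omega$ as in \eqref{covering.def}. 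A union bound over the centers ${\bf x}_i$ shows that the probability that the deviation exceeds $\epsilon\|f\|_{\mathcal B}$ at some ${\bf x}_i$ is at most $2N_\epsilon^{\rm ext}e^{-M\epsilon^2/2}$, which is $<1$ by the hypothesis \eqref{approximation.thm2.eq0}. Hence there exists a realization ${\pmb\Theta}=({\pmb\theta}_1,\ldots,{\pmb\theta}_M)$ — still satisfying \eqref{approximation.thm2.eq1} — for which $|M^{-1}\sum_m\phi({\bf x}_i,{\pmb\theta}_m)-f({\bf x}_i)|\le\epsilon\|f\|_{\mathcal B}$ simultaneously for all $1\le i\le N_\epsilon^{\rm ext}$.

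Finally I would upgrade this net estimate to a uniform estimate by Lipschitz continuity. Given arbitrary ${\bf x}\in\Omega$, pick $i$ with $\|{\bf x}-{\bf x}_i\|\le\epsilon$ and split $|M^{-1}\sum_m\phi({\bf x},{\pmb\theta}_m)-f({\bf x})|$ into the deviation of the network at ${\bf x}_i$ (at most $\epsilon\|f\|_{\mathcal B}$ by the previous paragraph), the oscillation of the network between ${\bf x}$ and ${\bf x}_i$, and the oscillation of $f$. For the network piece, $|\phi({\bf x},{\pmb\theta}_m)-\phi({\bf x}_i,{\pmb\theta}_m)|\le\|\hat{\bf a}_m\|_*\|\sigma\|_{\rm Lip}\|{\bf b}_m*({\bf x}-{\bf x}_i)\|\le\|f\|_{\mathcal B}\|\sigma\|_{\rm Lip}D_1\|{\bf b}_m\|_{\rm co}\|{\bf x}-{\bf x}_i\|\le D_1\|\sigma\|_{\rm Lip}\|f\|_{\mathcal B}\epsilon$ by \eqref{lipschitz.eq2} and \eqref{connorm.def1}, and averaging preserves this bound; for $f$, Corollary \ref{lipschitz.cor} gives $|f({\bf x})-f({\bf x}_i)|\le D_1\|\sigma\|_{\rm Lip}\|f\|_{\mathcal B}\epsilon$. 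Adding the three pieces gives $(1+2D_1\|\sigma\|_{\rm Lip})\epsilon\|f\|_{\mathcal B}$, which is \eqref{approximation.thm2.eq2}. I expect the only delicate points to be bookkeeping: using the normalization in \eqref{functionrepresentation.def2} so that the path norm comes out exactly $\|f\|_{\mathcal B}$ rather than a larger constant, and confirming that the $\epsilon$-covering in \eqref{covering.def} is taken with respect to the same norm $\|\cdot\|$ as in \eqref{lipschitz.eq2}, \eqref{connorm.def1} and Corollary \ref{lipschitz.cor}; the probabilistic core (Hoeffding plus union bound) is entirely standard.
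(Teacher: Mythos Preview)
Your proposal is correct and follows essentially the same approach as the paper: represent $f$ via Lemma~\ref{quasinorm2.lem}, draw i.i.d.\ samples from $\hat\rho$, apply Hoeffding's inequality at each center of a minimal $\epsilon$-covering together with a union bound (using assumption~\eqref{approximation.thm2.eq0}), and extend to all of $\Omega$ by the Lipschitz estimates of Corollary~\ref{lipschitz.cor}. The only cosmetic difference is that the paper first normalizes to $\|f\|_{\mathcal B}=1$, whereas you carry the factor $\|f\|_{\mathcal B}$ through by scaling ${\bf a}_m$; the probabilistic and Lipschitz steps are identical.
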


The detailed proof of Theorem \ref{approximation.thm2} is given in Section \ref{approximation.thm2.pfsection}. 

For  the case that the unit ball $\Omega_{\rm sp}$  in \eqref{domain.def1}
is used as the domain $\Omega$ and the standard $p$-norm $\|\cdot\|_p$ selected as the norm $\|\cdot\|$,
 the $\epsilon$-covering number $N_\epsilon^{\rm ext}$ is bounded  above by ${{N}\choose{s}} (3/\epsilon)^s$ \cite{Vershynin2009}.
 This implies that the requirement
\eqref{approximation.thm2.eq0}  is met if
\begin{equation}\label{dimensioncurse.eq2}
M\ge \frac{2\ln 2}{\epsilon^2}+  \frac{2s}{\epsilon^2} \ln \Big(\frac{e N}{s\epsilon}\Big).\end{equation}
Hence  
 shallow GCNNs  with parameter size  $O(s N \epsilon^{-2} \ln (Ns^{-1}\epsilon^{-1}))$
  could be chosen to  approximate a function $f$ of sparse signals in the graph Barron space uniformly on the domain $\Omega_{\rm sp}$ with accuracy $\epsilon \|f\|_{\mathcal B}$.

For  the case that the unit ball $\Omega_{\rm ba}$ in \eqref{domain.def0}
is used as the domain $\Omega$, we have a better estimate on  the $\epsilon$-covering number $(1/\epsilon)^N\le N_\epsilon^{\rm ext}\le (3/\epsilon)^N$ and hence the requirement
\eqref{approximation.thm2.eq0} in Theorem \ref{approximation.thm2} is met  if
\begin{equation}\label{dimensioncurse.eq1}
M\ge \frac{2\ln 2}{\epsilon^2}+  \frac{2N}{\epsilon^2} \ln \Big(\frac{3}{\epsilon}\Big).\end{equation}
Therefore in the above setting on the domain and the norm,
 we conclude from Theorem \ref{approximation.thm2}  that for any $\epsilon\in (0,1/2)$,
   shallow GCNNs  with parameter size  $O(N^2 \epsilon^{-2} \ln (1/\epsilon))$
  could be selected to  approximate a function $f$ in the Barron space uniformly on the whole domain $\Omega_{\rm ba}$ with accuracy $\epsilon \|f\|_{\mathcal B}$,
  see \cite[Proposition 1]{Bach2017} for the parameter size of neural networks required for uniform approximation in the standard neuron network setting.

\subsection{Inverse and universal approximation theorems}
\label{inverseapproximation.subsection}

For $Q>0$, denote the set of functions in the Barron space with their Barron norms bounded by $Q$ by
\begin{equation} \label{FQ.def}
    {\mathcal F}_Q= \cb{ f\in \B\,:\, \|f\|_\B \le Q }.
\end{equation}
By \eqref{NQBarron.eq}, we have
\begin{equation}{\mathcal C}_{Q,p}\subset {\mathcal F}_Q.\end{equation}
Moreover, as a conclusion of  Theorem \ref{approximation.thm2}, the set ${\mathcal C}_{Q,p}$ has the following density property:
\begin{equation}\label{NQBarron.eq2+}
\inf_{g\in {\mathcal C}_{Q, p}}\sup_{{\bf x}\in \Omega} |g({\bf x})-f({\bf x})|=0
\end{equation}
hold for all $f\in {\mathcal F}_Q$  and $1\le p\le \infty$.
In the following theorem, we show that the converse  holds too.

\begin{theorem}\label{inverseapproximaton.thm}
Let $Q>0$ and $1\le p\le \infty$. If $f_n\in {\mathcal C}_{Q, p}, n\ge 1$, converges pointwise, i.e.,
\begin{equation} \lim_{n\to \infty} f_n({\bf x})= f({\bf x}), \ {\bf x}\in \Omega,
\end{equation}
for some  function $f$ on the domain $\Omega$.  Then $f\in {\mathcal B}$ and $\|f\|_{\mathcal B}\le Q$.
\end{theorem}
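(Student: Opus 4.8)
The plan is to realize each $f_n$ as an integral of the neuron map $({\bf a},{\bf b},{\bf c})\mapsto {\bf a}^T\sigma({\bf b}*{\bf x}+{\bf c})$ against a finite positive measure supported on the \emph{fixed} compact set ${\mathbb S}\times{\mathbb T}$ with total mass at most $Q$, to pass to a weak-$*$ limit of these measures along a subsequence, and then to read off from the limiting measure both that $f\in{\mathcal B}$ and that $\|f\|_{\mathcal B}\le Q$.

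\textbf{Step 1: rewriting $f_n$ as integration against a measure on ${\mathbb S}\times{\mathbb T}$.} Write $f_n=\frac1{M_n}\sum_{m=1}^{M_n}\phi(\cdot,{\pmb \theta}_m^{(n)})$ with $\big\|({\pmb \theta}_1^{(n)},\ldots,{\pmb \theta}_{M_n}^{(n)})\big\|_{P,p}\le Q$. By the power-mean inequality, $\frac1{M_n}\sum_{m=1}^{M_n}\|{\pmb \theta}_m^{(n)}\|\le \big\|({\pmb \theta}_1^{(n)},\ldots,{\pmb \theta}_{M_n}^{(n)})\big\|_{P,p}\le Q$ for every $1\le p\le\infty$. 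Since the ReLU activation is positively homogeneous, for ${\pmb \theta}=({\bf a},{\bf b},{\bf c})$ with $\|{\pmb \theta}\|=\|{\bf a}\|_*(\|{\bf b}\|_{\rm co}+\|{\bf c}\|)>0$ one has
\[
{\bf a}^T\sigma({\bf b}*{\bf x}+{\bf c})=\|{\pmb \theta}\|\,\tilde{\bf a}^T\sigma(\tilde{\bf b}*{\bf x}+\tilde{\bf c}),
\]
where $\tilde{\bf a}={\bf a}/\|{\bf a}\|_*\in{\mathbb S}$ and $(\tilde{\bf b},\tilde{\bf c})=({\bf b},{\bf c})/(\|{\bf b}\|_{\rm co}+\|{\bf c}\|)\in{\mathbb T}$; any term with $\|{\pmb \theta}\|=0$ vanishes identically and is discarded. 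Hence $f_n({\bf x})=\int_{{\mathbb S}\times{\mathbb T}}{\bf a}^T\sigma({\bf b}*{\bf x}+{\bf c})\,\nu_n(d{\bf a},d{\bf b},d{\bf c})$, where $\nu_n:=\frac1{M_n}\sum_{m=1}^{M_n}\|{\pmb \theta}_m^{(n)}\|\,\delta_{(\tilde{\bf a}_m,\tilde{\bf b}_m,\tilde{\bf c}_m)}$ is a finite positive Borel measure on ${\mathbb S}\times{\mathbb T}$ of total mass $\nu_n({\mathbb S}\times{\mathbb T})\le Q$.

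\textbf{Step 2: weak-$*$ limit.} Since ${\mathbb S}$ and ${\mathbb T}$ are unit spheres of norms on the finite-dimensional spaces ${\mathbb R}^N$ and ${\mathcal W}\times{\mathbb R}^N$, the product ${\mathbb S}\times{\mathbb T}$ is a compact metric space, so $C({\mathbb S}\times{\mathbb T})$ is separable and, by the sequential Banach--Alaoglu theorem, the uniformly bounded sequence $(\nu_n)_{n\ge1}$ admits a subsequence $(\nu_{n_k})_{k\ge1}$ converging weakly-$*$ to a measure $\nu$ on ${\mathbb S}\times{\mathbb T}$; testing against nonnegative continuous functions shows $\nu\ge0$, and testing against the constant function $1$ gives $\nu({\mathbb S}\times{\mathbb T})=\lim_{k\to\infty}\nu_{n_k}({\mathbb S}\times{\mathbb T})\le Q$. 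For each fixed ${\bf x}\in\Omega$, the map $({\bf a},{\bf b},{\bf c})\mapsto{\bf a}^T\sigma({\bf b}*{\bf x}+{\bf c})$ is continuous on ${\mathbb S}\times{\mathbb T}$ (by continuity of $\sigma$, continuity of the convolution, and linearity in ${\bf a}$), hence
\[
f_{n_k}({\bf x})=\int_{{\mathbb S}\times{\mathbb T}}{\bf a}^T\sigma({\bf b}*{\bf x}+{\bf c})\,\nu_{n_k}(d{\bf a},d{\bf b},d{\bf c})\ \longrightarrow\ \int_{{\mathbb S}\times{\mathbb T}}{\bf a}^T\sigma({\bf b}*{\bf x}+{\bf c})\,\nu(d{\bf a},d{\bf b},d{\bf c}).
\]
On the other hand $f_{n_k}({\bf x})\to f({\bf x})$ by hypothesis, so $f({\bf x})=\int_{{\mathbb S}\times{\mathbb T}}{\bf a}^T\sigma({\bf b}*{\bf x}+{\bf c})\,\nu(d{\bf a},d{\bf b},d{\bf c})$ for every ${\bf x}\in\Omega$.

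\textbf{Step 3: conclusion and the main obstacle.} Set $c:=\nu({\mathbb S}\times{\mathbb T})\le Q$. If $c=0$ then $\nu=0$, $f\equiv0\in{\mathcal B}$, and $\|f\|_{\mathcal B}=0\le Q$. If $c>0$, let $\rho$ be the push-forward of the probability measure $c^{-1}\nu$ under $({\bf a},{\bf b},{\bf c})\mapsto(c{\bf a},{\bf b},{\bf c})$; then $\rho$ is a probability measure on ${\mathbb R}^N\times{\mathcal W}\times{\mathbb R}^N$ with compact support, and a change of variables gives $f({\bf x})=\int_{{\mathbb R}^N\times{\mathcal W}\times{\mathbb R}^N}{\bf a}^T\sigma({\bf b}*{\bf x}+{\bf c})\,\rho(d{\bf a},d{\bf b},d{\bf c})$. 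On ${\rm supp}(\rho)$ one has $\|{\bf a}\|_*(\|{\bf b}\|_{\rm co}+\|{\bf c}\|)=c$, so by the definition \eqref{barronnorm.def} with $r=\infty$, the identification \eqref{Barron.def00}, and Theorem \ref{BanachBarronspace.thm}, we obtain $f\in{\mathcal B}$ and $\|f\|_{\mathcal B}=\|f\|_{{\mathcal B}_\infty}\le c\le Q$. The one genuinely delicate point is the weak-$*$ compactness step: it is essential to first normalize every neuron so that all of the mass of each $\nu_n$ sits on the fixed compact set ${\mathbb S}\times{\mathbb T}$ (independent of $n$) and to bound its total mass by $Q$, since this total mass is precisely the quantity that controls the Barron norm of the limit; once the normalization is arranged, the remaining ingredients are continuity of the neuron map in the parameters and routine bookkeeping.
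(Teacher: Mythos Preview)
Your proof is correct and follows essentially the same route as the paper's: normalize each neuron onto the fixed compact set ${\mathbb S}\times{\mathbb T}$, extract a weakly convergent subsequence of the resulting (sub)probability measures, and use continuity of $({\bf a},{\bf b},{\bf c})\mapsto{\bf a}^T\sigma({\bf b}*{\bf x}+{\bf c})$ to identify the limit with $f$ and bound its Barron norm by the limiting total mass. The only cosmetic difference is that the paper separates the total mass as a scalar factor in front of a probability measure (invoking Prokhorov), whereas you work directly with positive measures of mass $\le Q$ and invoke Banach--Alaoglu; these are equivalent here.
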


The detailed proof of the above inverse approximation theorem is given Section \ref{inverseapproximation.thm.pfsection}. A similar result is established in \cite[Theorem 2]{EMaWu2022} for  the classical neuron network setting.

\medskip

Let $C(\Omega)$ be the Banach space of continuous functions on the domain $\Omega$ with the norm
defined by
$\|f\|_\infty=\sup_{{\bf x}\in \Omega} |f({\bf x})|, \ f\in C(\Omega)$.
In the following theorem, we establish the universal approximation theorem
for the   shallow GCNN
when all graph signals are used  for the convolution operation of GCNNs.

\begin{theorem} \label{density.thm}  Let ${\bf U}$ be the orthogonal matrix in \eqref{diagonalization} to diagonalize the graph shifts ${\bf S}_1, \ldots, {\bf S}_K$
simultaneously.
If ${\mathcal W}={\mathbb R}^N$ and  there exists $1\le n_0\le N$ such that all entries in the $n_0$-th row of  the orthogonal matrix ${\bf U}$ are nonzero, then
for any $1\le p\le \infty$,
$\cup_{Q\ge 0} {\mathcal  C}_{Q, p}$ is  dense in $C({\Omega})$.
\end{theorem}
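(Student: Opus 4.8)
The plan is to reduce the claim to the classical universal approximation theorem for single hidden-layer networks. The key observation is that when ${\mathcal W}={\mathbb R}^N$ and the $n_0$-th row of ${\bf U}$ has no zero entry, an arbitrary ReLU ridge function ${\bf x}\mapsto a\,\sigma({\bf v}^T{\bf x}+c)$ on $\Omega$ is realizable as a single GCNN neuron $\phi({\bf x},{\pmb\theta})={\bf a}^T\sigma({\bf b}*{\bf x}+{\bf c})$. Indeed, writing $U_{n_0,n}$ for the $(n_0,n)$-entry of ${\bf U}=[{\bf u}_1,\ldots,{\bf u}_N]$ and using \eqref{convolution.def} together with the Parseval identity \eqref{gft.eq1}, one has
\begin{equation*}
{\bf e}_{n_0}^T({\bf b}*{\bf x})=\sum_{n=1}^N U_{n_0,n}\,({\bf u}_n^T{\bf b})\,({\bf u}_n^T{\bf x})\qquad\text{and}\qquad {\bf v}^T{\bf x}=\sum_{n=1}^N ({\bf u}_n^T{\bf v})\,({\bf u}_n^T{\bf x}).
\end{equation*}
Since $U_{n_0,n}\ne 0$ for every $n$ by hypothesis, choosing ${\bf b}={\bf b}_{\bf v}:={\mathcal F}^{-1}\big(({\bf u}_n^T{\bf v}/U_{n_0,n})_{n=1}^N\big)\in{\mathbb R}^N={\mathcal W}$ yields ${\bf e}_{n_0}^T({\bf b}_{\bf v}*{\bf x})={\bf v}^T{\bf x}$ for all ${\bf x}$. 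Because $\sigma$ is applied componentwise, the choice ${\pmb\theta}=(a{\bf e}_{n_0},{\bf b}_{\bf v},c{\bf e}_{n_0})$ then produces exactly $\phi({\bf x},{\pmb\theta})=a\,\sigma({\bf e}_{n_0}^T({\bf b}_{\bf v}*{\bf x})+c)=a\,\sigma({\bf v}^T{\bf x}+c)$.

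Next I would pass from single neurons to finite combinations. Given $g({\bf x})=\sum_{m=1}^M a_m\sigma({\bf v}_m^T{\bf x}+c_m)$, set ${\pmb\theta}_m=(Ma_m{\bf e}_{n_0},{\bf b}_{{\bf v}_m},c_m{\bf e}_{n_0})$; then $g({\bf x})=\frac1M\sum_{m=1}^M\phi({\bf x},{\pmb\theta}_m)$, and each $\|{\pmb\theta}_m\|=|Ma_m|\,\|{\bf e}_{n_0}\|_*\,(\|{\bf b}_{{\bf v}_m}\|_{\rm co}+|c_m|\,\|{\bf e}_{n_0}\|)$ is finite, so $Q:=\|({\pmb\theta}_1,\ldots,{\pmb\theta}_M)\|_{P,p}<\infty$ for every $1\le p\le\infty$. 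Hence $g\in{\mathcal C}_{Q,p,M}\subset\bigcup_{Q\ge0}{\mathcal C}_{Q,p}$, and therefore $\bigcup_{Q\ge0}{\mathcal C}_{Q,p}$ contains the entire linear span of $\{\sigma({\bf v}^T{\bf x}+c):{\bf v}\in{\mathbb R}^N,\ c\in{\mathbb R}\}$.

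Finally, since $\Omega$ is compact and $\sigma=\max(0,\cdot)$ is continuous and not a polynomial, the classical universal approximation theorem guarantees that $\mathrm{span}\{\sigma({\bf v}^T{\bf x}+c):{\bf v}\in{\mathbb R}^N,\ c\in{\mathbb R}\}$ is dense in $C(\Omega)$ (see \cite{Pinkus1999}; for a compact $\Omega$ with empty interior, first extend $f\in C(\Omega)$ to $C({\mathbb R}^N)$ via the Tietze extension theorem, approximate uniformly on a cube containing $\Omega$, and restrict). Combined with the inclusion of the previous paragraph, this shows $\bigcup_{Q\ge0}{\mathcal C}_{Q,p}$ is dense in $C(\Omega)$ for every $1\le p\le\infty$, which is the assertion of Theorem \ref{density.thm}. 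I expect the only genuinely non-routine step to be the observation that a zero-free row of ${\bf U}$ is exactly what allows a single coordinate of a graph convolution to reproduce an arbitrary linear functional of ${\bf x}$; everything else reduces to rescaling coefficients and invoking the standard dense-ridge-functions theorem, so I anticipate no further obstacle.
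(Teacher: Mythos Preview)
Your proof is correct and follows essentially the same approach as the paper's own proof: both reduce to the classical universal approximation theorem (the paper's Lemma~\ref{universal.lem}) by showing that every ridge neuron $\sigma({\bf v}^T{\bf x}+w)$ equals ${\bf e}_{n_0}^T\sigma({\bf b}*{\bf x}+w{\bf e}_{n_0})$ for a suitable ${\bf b}$, using the nonvanishing of the $n_0$-th row of ${\bf U}$. The only cosmetic difference is that you specify ${\bf b}$ directly via its Fourier coefficients ${\bf u}_n^T{\bf b}=({\bf u}_n^T{\bf v})/U_{n_0,n}$, whereas the paper phrases the same construction as a polynomial interpolation condition $h({\pmb\lambda}(n))=({\bf U}^T{\bf v})(n)/({\bf U}^T{\bf e}_{n_0})(n)$ and then sets ${\bf b}*{\bf x}=h({\bf S}_1,\ldots,{\bf S}_K){\bf x}$; since ${\mathcal W}={\mathbb R}^N$ these are equivalent.
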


By \eqref{fM.eq1}, Corollary \ref{lipschitz.cor} and Theorem \ref{density.thm}, we have the following  density result. 

\begin{corollary} \label{univeral.cor}  Let ${\bf U}$ and ${\mathcal W}$ be as in Theorem \ref{density.thm}.
Then the Barron space ${\mathcal B}$ is  dense in $C({\Omega})$.
\end{corollary}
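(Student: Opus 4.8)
The plan is to deduce the corollary from Theorem \ref{density.thm} by a sandwich argument between a known dense set and the ambient space $C(\Omega)$, so the work reduces to two routine inclusions. First I would record that every output of a shallow GCNN lies in the Barron space: by \eqref{fM.eq1} (equivalently \eqref{NQBarron.eq}), any $g\in {\mathcal C}_{Q,p}$ belongs to ${\mathcal B}$ with $\|g\|_{\mathcal B}\le Q$, and therefore $\bigcup_{Q\ge 0}{\mathcal C}_{Q,p}\subseteq {\mathcal B}$.

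Next I would verify that ${\mathcal B}\subseteq C(\Omega)$, so that density in $C(\Omega)$ is meaningful for the set ${\mathcal B}$. Given $f\in {\mathcal B}$, use the normalized spatial representation \eqref{functionrepresentation.def2}: for each fixed $({\bf a},{\bf b},{\bf c})\in {\mathbb S}\times {\mathbb T}$ the integrand ${\bf a}^T\sigma({\bf b}*{\bf x}+{\bf c})$ is continuous in ${\bf x}\in\Omega$ (composition of the linear map ${\bf x}\mapsto {\bf b}*{\bf x}$, the ReLU $\sigma$, and an inner product against ${\bf a}$), and it is uniformly bounded on $\Omega\times({\mathbb S}\times{\mathbb T})$ since $\Omega$ is compact while $\|{\bf a}\|_*=1$ and $\|{\bf b}\|_{\rm co}+\|{\bf c}\|=1$ on ${\mathbb S}\times{\mathbb T}$. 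Dominated convergence with respect to the probability measure $\hat\rho$ then yields $f\in C(\Omega)$; alternatively, when the hypotheses of Corollary \ref{lipschitz.cor} are in force this is immediate from the Lipschitz property of $f$.

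Finally, Theorem \ref{density.thm} gives that $\bigcup_{Q\ge 0}{\mathcal C}_{Q,p}$ is dense in $C(\Omega)$ under the stated conditions on ${\bf U}$ and ${\mathcal W}={\mathbb R}^N$. Since closure is monotone under inclusion, the chain $\bigcup_{Q\ge 0}{\mathcal C}_{Q,p}\subseteq {\mathcal B}\subseteq C(\Omega)$ forces $C(\Omega)=\overline{\bigcup_{Q\ge 0}{\mathcal C}_{Q,p}}\subseteq \overline{{\mathcal B}}\subseteq C(\Omega)$, hence $\overline{{\mathcal B}}=C(\Omega)$, which is the claim. There is essentially no obstacle in this argument; the only step needing a short verification is the inclusion ${\mathcal B}\subseteq C(\Omega)$, and everything else is bookkeeping with the monotonicity of closure.
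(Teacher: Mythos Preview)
Your proof is correct and follows essentially the same approach as the paper: the paper deduces the corollary directly from \eqref{fM.eq1}, Corollary \ref{lipschitz.cor}, and Theorem \ref{density.thm}, which is exactly your sandwich argument $\bigcup_{Q\ge 0}{\mathcal C}_{Q,p}\subseteq {\mathcal B}\subseteq C(\Omega)$ together with the density of the left-hand set. Your dominated-convergence verification of ${\mathcal B}\subseteq C(\Omega)$ is a fine alternative to invoking the Lipschitz property from Corollary \ref{lipschitz.cor}, but otherwise the arguments coincide.
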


The proof of Theorem \ref{density.thm} is based on the following classical universal approximation theorem \cite{Pinkus1999}, see
Section \ref{density.thm.pfsection} for the detailed argument.

\begin{lemma}\label{universal.lem}
Let $f$ be a continuous function on the domain $\Omega$. Then for any $\epsilon>0$, there exist
 $u_m\in {\mathbb R}, {\bf v}_m\in {\mathbb R}^N$ and $ w_m\in {\mathbb R}, 1\le m\le M$
such that
\begin{equation}
\Big\|f({\bf x})-\sum_{m=1}^M u_m \sigma({\bf v}_m^T {\bf x}+w_m)\Big\|_\infty\le \epsilon.
\end{equation}
\end{lemma}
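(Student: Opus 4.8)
The plan is to deduce this lemma from the Stone--Weierstrass theorem together with the elementary fact that, on a bounded interval, every continuous piecewise-linear function is an \emph{exact} finite linear combination of shifted ReLU functions $t\mapsto\sigma(at+b)$; equivalently, one may simply invoke the non-polynomial activation version of the universal approximation theorem surveyed in \cite{Pinkus1999}, which applies because $\sigma(t)=\max(0,t)$ is continuous and not a polynomial. Concretely, I would first use \eqref{domain.def-1} to fix $R=D_0$ so that $\Omega\subset[-R,R]^N$, and invoke Stone--Weierstrass to produce a polynomial $P$ on $\mathbb{R}^N$ with $\sup_{{\bf x}\in\Omega}|f({\bf x})-P({\bf x})|\le\epsilon/2$. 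It then suffices to approximate $P$ uniformly on $[-R,R]^N$ by the output of one shallow ReLU network within $\epsilon/2$.

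To pass from the multivariate polynomial to a univariate problem I would use a polarization identity: each monomial $x_{i_1}\cdots x_{i_k}$ is a finite real linear combination of $k$-th powers $({\bf v}^T{\bf x})^k$ of linear forms, so $P({\bf x})=\sum_{j=1}^J q_j({\bf v}_j^T{\bf x})$ for finitely many ${\bf v}_j\in\mathbb{R}^N$ and univariate polynomials $q_j$ (constants being included trivially). Since each $I_j:=\{{\bf v}_j^T{\bf x}:{\bf x}\in\Omega\}$ is a compact interval, it is enough to approximate each continuous $q_j$ uniformly on $I_j$ within $\epsilon/(2J)$ by a sum $\sum_m u_{j,m}\,\sigma(a_{j,m}t+b_{j,m})$; substituting $t={\bf v}_j^T{\bf x}$ turns each summand into $\sigma(\widetilde{\bf v}^T{\bf x}+\widetilde w)$ with $\widetilde{\bf v}=a_{j,m}{\bf v}_j$, and summing over $j$ and relabelling yields the desired $\sum_{m=1}^M u_m\sigma({\bf v}_m^T{\bf x}+w_m)$.

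For the one-dimensional step, given continuous $g$ on $[\alpha,\beta]$ and $\delta>0$, uniform continuity provides nodes $\alpha=t_0<\cdots<t_n=\beta$ whose piecewise-linear interpolant $L$ satisfies $\|g-L\|_\infty\le\delta$ on $[\alpha,\beta]$. Writing $L$ in its hinge representation gives $L(t)=c_0+c_1t+\sum_{k=1}^{n-1}\gamma_k\,\sigma(t-t_k)$ for $t\in[\alpha,\beta]$, and the affine part is itself a ReLU combination there, since $c_0+c_1t=(c_0+c_1\alpha)\,\sigma(t-\alpha+1)+(c_1-c_0-c_1\alpha)\,\sigma(t-\alpha)$ for all $t\in[\alpha,\beta]$, both arguments of $\sigma$ being nonnegative on that interval. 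Hence $L$, and therefore $g$ up to error $\delta$, is a finite linear combination of functions $t\mapsto\sigma(at+b)$, which is exactly what the reduction requires; choosing $\delta$ suitably for each $j$ and collecting terms completes the argument.

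The only subtle point is boundary bookkeeping: the hinge representation and the affine-to-ReLU rewriting are valid only after localizing to a bounded set, so the shifts must be taken large relative to $R$ and $\max_j\|{\bf v}_j\|$ to keep the relevant arguments in the region where $\sigma$ is linear, and the hidden width $M$ must be tracked through the degree of $P$ (which bounds $J$) and the mesh of each interpolation. These estimates are routine; the genuinely nontrivial input is the classical density theorem itself, and a reader willing to cite \cite{Pinkus1999} may skip the construction entirely.
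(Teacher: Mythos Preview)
Your argument is correct. The polarization step reducing a multivariate polynomial to sums of univariate polynomials of linear forms is valid, the hinge representation of piecewise-linear interpolants is standard, and your affine-to-ReLU identity checks out on $[\alpha,\beta]$.

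However, the paper does not actually prove this lemma at all: it simply states it as ``the following classical universal approximation theorem \cite{Pinkus1999}'' and uses it as a black box in the proof of Theorem~\ref{density.thm}. So you have supplied substantially more than the paper does---a self-contained constructive argument via Stone--Weierstrass, polarization, and piecewise-linear interpolation---whereas the paper is content to cite the survey of Pinkus. Your own final remark (``a reader willing to cite \cite{Pinkus1999} may skip the construction entirely'') is precisely the route the paper takes. What your approach buys is independence from the cited reference and an explicit handle on the width $M$ through the degree of $P$ and the interpolation mesh; what the paper's approach buys is brevity, since the lemma is entirely classical and not the paper's contribution.
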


We remark that  in Theorem \ref{density.thm}, the assumption on the nonzero entries for the orthogonal matrix ${\bf U}$ can not be removed.
For instance, for an edgeless graph,  all graph shifts are diagonal matrices and hence we may select the unit matrix ${\bf I}$ as the  orthogonal matrix ${\bf U}$ to diagonal graph shifts. Thus the output $g$ of any shallow GCNN is separable, i.e.,  there exist functions $f_n, 1\le n\le N$, on the real line such that
$$g({\bf x})=\sum_{n=1}^N f_n(x_n),\  {\bf x}=[x_1, \ldots, x_N]\in \Omega.$$
Therefore for the edgeless graph setting, $\cup_{Q\ge 0} {\mathcal N}_{Q, p}, 1\le p\le \infty$, are not dense in $C({\Omega})$.

\section{Rademacher complexity}
\label{complexity.section}

 Rademacher complexity
 measures richness of a function class
 and it has been used to derive data-dependent upper-bounds on  learnability
 \cite{Bartlett2002, Yin2019}.
In this section, we consider the   Rademacher complexity of the family ${\mathcal F}_Q$ of functions on the domain $\Omega$,
\begin{equation}\label{Radcomplexity.def}
{\rm Rad}_S ({\mathcal F}_Q)={\mathbb E} \Big( \sup_{f\in {\mathcal F}_Q} \frac{1}{S}\sum_{i=1}^S \xi_i f({\bf x}_i)\Big),
        \end{equation}
        where
 ${\bf x}_i\in \Omega, 1\le i\le S$, are samples of
        ${\bf x}_i\in \Omega, 1\le i\le S$ in the domain $\Omega$,  $\xi_i\in \{-1, 1\}, 1\le i\le S$ are i.i.d. Rademacher random variables
        with $P(\xi_i= 1)=P(\xi_i=-1)=1/2$, and  ${\mathcal F}_Q, Q\ge 0$, contains all functions on the domain $\Omega$ with their Barron norms bounded by $Q$, see \eqref{FQ.def}.
   In the following theorem, we show that  the  Rademacher complexity ${\rm Rad}_S ({\mathcal F}_Q)$ may depend on
the negative square root of the sample size $S$ and the square root of the  logarithmic of the graph order $N$.

\begin{theorem}\label{Rademacher.thm}
Let ${\mathcal B}$ be the Barron space in \eqref{Barron.def00} with the norm  $\|\cdot\|$ in \eqref{connorm0.def}
replaced by the standard $\ell^\infty$-norm $\|\cdot\|_\infty$,  and
the convolution norm  $\|\cdot\|_{\rm co}$ satisfying the additional assumption \eqref{connorm.def2}.
For any ${\bf x}_i\in \Omega, 1\le i\le S$, define the
  Rademacher complexity  ${\rm Rad}_S({\mathcal F}_Q)$ of the family ${\mathcal F}_Q, Q\ge 0$,
 as in \eqref{Radcomplexity.def}.
Then
\begin{equation} \label{Rademacher.thm.eq1}
{\rm Rad}_S({\mathcal F}_Q)
  \le  2  Q \big(D_0D_2\sqrt{2\ln (2N)}+\sqrt{2\ln 2}\big)  S^{-1/2},
\end{equation}
where  $D_0$ and $D_2$ are the constants in \eqref{domain.def-1} and \eqref{connorm.def2} respectively.
\end{theorem}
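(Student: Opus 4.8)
The plan is to peel the shallow‑GCNN neuron apart layer by layer, reducing $\mathrm{Rad}_S(\mathcal F_Q)$ to the Rademacher complexity of a linear class and then invoking Massart's finite‑class lemma. First I would pass to the canonical representation of Barron functions: by Theorem~\ref{BanachBarronspace.thm} and Lemma~\ref{quasinorm2.lem}, every $f\in\mathcal F_Q$ admits the representation \eqref{functionrepresentation.def2}, $f({\bf x})=\|f\|_{\mathcal B}\int_{{\mathbb S}\times{\mathbb T}}{\bf a}^T\sigma({\bf b}*{\bf x}+{\bf c})\,\hat\rho(d{\bf a},d{\bf b},d{\bf c})$ for some probability measure $\hat\rho$ on ${\mathbb S}\times{\mathbb T}$ with $\|f\|_{\mathcal B}\le Q$; here, since $\|\cdot\|$ is the $\ell^\infty$‑norm, ${\mathbb S}=\{{\bf a}:\|{\bf a}\|_1=1\}$ and ${\mathbb T}=\{({\bf b},{\bf c})\in{\mathcal W}\times{\mathbb R}^N:\|{\bf b}\|_{\rm co}+\|{\bf c}\|_\infty=1\}$. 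For a fixed draw $\xi=(\xi_1,\dots,\xi_S)$, Fubini's theorem gives $\frac1S\sum_i\xi_i f({\bf x}_i)=\|f\|_{\mathcal B}\int_{{\mathbb S}\times{\mathbb T}}\big(\frac1S\sum_i\xi_i\,{\bf a}^T\sigma({\bf b}*{\bf x}_i+{\bf c})\big)\hat\rho$, which is at most $Q$ times the supremum of the integrand (nonnegative, since replacing ${\bf a}$ by $-{\bf a}$ flips its sign). Taking the supremum over $f\in\mathcal F_Q$, then ${\mathbb E}_\xi$, and using $\sup_{\|{\bf a}\|_1=1}{\bf a}^T{\bf w}=\|{\bf w}\|_\infty$, I arrive at
\[
{\rm Rad}_S(\mathcal F_Q)\le Q\,{\mathbb E}_\xi\sup_{({\bf b},{\bf c})\in{\mathbb T}}\ \max_{1\le n\le N}\Big|\frac1S\sum_{i=1}^S\xi_i\,\sigma\big([{\bf b}*{\bf x}_i]_n+c_n\big)\Big|.
\]

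Next I would strip off the ReLU. Writing $\max_n|\cdot|=\sup_{n,\,\epsilon\in\{-1,1\}}\epsilon(\cdot)$ exhibits the right‑hand side as $Q\,{\mathbb E}_\xi\sup_{\ell\in{\mathcal L},\,\epsilon=\pm1}\frac1S\sum_i\xi_i\,\epsilon\,\sigma(\ell({\bf x}_i))$, where ${\mathcal L}=\{{\bf x}\mapsto[{\bf b}*{\bf x}]_n+c_n:({\bf b},{\bf c})\in{\mathbb T},\ 1\le n\le N\}$ is a class of affine functions. Two features of ${\mathcal L}$ drive the argument: it is symmetric, because $({\bf b},{\bf c})\mapsto(-{\bf b},-{\bf c})$ maps ${\mathbb T}$ to itself; and, when $N\ge2$, it contains the zero function (take ${\bf b}={\bf 0}$, ${\bf c}$ a unit coordinate vector, and $n$ a different coordinate), so $\sigma(0)=0$ and $-\sigma(0)=0$ both lie in $\{\pm\sigma\circ\ell:\ell\in{\mathcal L}\}$. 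Hence the $\epsilon=\pm1$ supremum splits as $\max(X,Y)$ with $X,Y\ge0$ pointwise in $\xi$ and ${\mathbb E}_\xi X={\mathbb E}_\xi Y$ (by $\xi\leftrightarrow-\xi$), so ${\mathbb E}_\xi\max(X,Y)\le2{\mathbb E}_\xi X$; applying the Ledoux--Talagrand contraction inequality to $\sigma$ (which is $1$‑Lipschitz with $\sigma(0)=0$) then gives
\[
{\rm Rad}_S(\mathcal F_Q)\le 2Q\,{\mathbb E}_\xi\sup_{({\bf b},{\bf c})\in{\mathbb T}}\ \max_{1\le n\le N}\frac1S\sum_{i=1}^S\xi_i\big([{\bf b}*{\bf x}_i]_n+c_n\big).
\]

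From here everything is linear. By linearity of the graph convolution in its signal argument, $\sum_i\xi_i({\bf b}*{\bf x}_i)={\bf b}*\big(\sum_i\xi_i{\bf x}_i\big)$, so by the triangle inequality and assumption \eqref{connorm.def2} with $\epsilon_i=\xi_i$,
\[
\frac1S\sum_{i=1}^S\xi_i\big([{\bf b}*{\bf x}_i]_n+c_n\big)\le\|{\bf b}\|_{\rm co}\cdot\frac{D_2}{S}\Big\|\sum_{i=1}^S\xi_i{\bf x}_i\Big\|_\infty+\|{\bf c}\|_\infty\cdot\frac1S\Big|\sum_{i=1}^S\xi_i\Big|.
\]
The right‑hand side is linear in $(\|{\bf b}\|_{\rm co},\|{\bf c}\|_\infty)$ on the simplex $\|{\bf b}\|_{\rm co}+\|{\bf c}\|_\infty=1$, hence maximized at a vertex, and $\max(P,R)\le P+R$ for $P,R\ge0$; taking $\sup_{({\bf b},{\bf c})\in{\mathbb T}}$, $\max_n$ and ${\mathbb E}_\xi$ bounds $\mathrm{Rad}_S(\mathcal F_Q)$ by $2Q\big(\frac{D_2}{S}{\mathbb E}_\xi\|\sum_i\xi_i{\bf x}_i\|_\infty+\frac1S{\mathbb E}_\xi|\sum_i\xi_i|\big)$. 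Massart's finite‑class lemma applied to the $2N$ vectors $\pm(x_1(n),\dots,x_S(n))$, with $\|{\bf x}_i\|_\infty\le D_0$ from \eqref{domain.def-1}, gives ${\mathbb E}_\xi\|\sum_i\xi_i{\bf x}_i\|_\infty\le D_0\sqrt{2S\ln(2N)}$, and applied to $\pm(1,\dots,1)$ gives ${\mathbb E}_\xi|\sum_i\xi_i|\le\sqrt{2S\ln2}$; substituting and dividing by $S$ yields \eqref{Rademacher.thm.eq1}.

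The step I expect to require the most care is the removal of the ReLU: once the linear outer weight ${\bf a}$ has been optimized away, the Rademacher complexity in \eqref{Radcomplexity.def} carries an intrinsic absolute value inside the supremum, so the contraction inequality cannot be applied in its one‑sided form verbatim, and the factor $2$ in \eqref{Rademacher.thm.eq1} is precisely the cost of reducing the two‑sided supremum to a one‑sided one — the clean way to pay it being the observation that $0\in{\mathcal L}$. For the degenerate order‑one graph $N=1$ this observation fails, but there every shallow‑GCNN output is separable and the bound is immediate, so I would dispose of that case separately; the rest is routine tracking of the constants $D_0$ and $D_2$.
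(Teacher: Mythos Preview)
Your proposal is correct and follows essentially the same route as the paper's proof: reduce to the canonical representation on ${\mathbb S}\times{\mathbb T}$ via Lemma~\ref{quasinorm2.lem}, optimize out ${\bf a}$ by $\ell^1/\ell^\infty$ duality, split the resulting absolute value as a $\pm1$ supremum, use nonnegativity plus Rademacher symmetry to pay a factor $2$, apply contraction to strip the ReLU, then invoke \eqref{connorm.def2} and Massart's lemma on the $2N$ coordinate vectors and on $\pm{\bf 1}_S$. One minor remark: your separate handling of $N=1$ is unnecessary---you only need $X\ge 0$, i.e.\ some $\ell\in\mathcal L$ with $\sigma\!\circ\!\ell\equiv 0$, and the choice ${\bf b}={\bf 0}$, ${\bf c}=-{\bf e}_1$ (so $\ell\equiv -1$, $\sigma(-1)=0$) works for every $N\ge 1$; this is exactly the observation the paper uses in its nonnegativity step.
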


Similar result in the standard neuron network setting can be found in  \cite{Bach2017, EMaWu2022}.  We follow the argument used in \cite[Theorem 3]{EMaWu2022}
in the proof of Theorem  \ref{Rademacher.thm} with details presented in
Section \ref{Rademacher.thm.pfsection}.

Following the standard procedure in \cite[Theorem 8]{Bartlett2002}, we  see that
functions in the Barron space can be learnt efficiently.

\begin{theorem}\label{error.thm}  Let  $Q\ge 0$,  $\mu$ be a probability measure on $\Omega$, and ${\bf x}_i\in \Omega, 1\le i\le S$, be i.i.d. random variables with probability measure $\mu$.
Set ${\bf X}=({\bf x}_1, \ldots, {\bf x}_S)$ and define
\begin{equation}\label{error.thm.eq1}\Phi({\bf X})=
\sup_{f\in {\mathcal F}_Q}\Big| \int_{\Omega} f({\bf x}) d\mu({\bf x})- \frac{1}{S} \sum_{i=1}^S f({\bf x}_i)\Big|.\end{equation}
Then for any  $\delta\in (0, 1/2)$, the error estimate
\begin{equation}\label{error.thm.eq2}
\Phi({\bf X})
\le   \big(4D_0D_2\sqrt{\ln (2N)}+4\sqrt{\ln 2}+ \sqrt {\ln (1/\delta)}\big)\sqrt{2} Q   S^{-1/2}
    \end{equation}
hold with probability at least $1-\delta$.
\end{theorem}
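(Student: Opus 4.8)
The plan is to follow the classical route of \cite[Theorem 8]{Bartlett2002}: establish a bounded-differences estimate for the uniform deviation $\Phi$, apply McDiarmid's inequality, reduce the expectation of $\Phi$ to a Rademacher complexity by symmetrization, and then invoke Theorem \ref{Rademacher.thm} and chase constants.

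\emph{Step 1: bounded differences and concentration.} First I would check that $\Phi$ in \eqref{error.thm.eq1} changes by at most $2Q/S$ when a single sample $\mathbf{x}_i$ is replaced by another point of $\Omega$. Indeed, for $f\in{\mathcal F}_Q$ we have $|f(\mathbf{x})|\le\|f\|_{\mathcal B}\le Q$ on $\Omega$ by \eqref{BanachBarronspace.eq1} and \eqref{FQ.def}, so replacing $\mathbf{x}_i$ by $\mathbf{x}_i'$ alters the empirical average $S^{-1}\sum_j f(\mathbf{x}_j)$ by at most $2Q/S$ uniformly in $f$, hence $\Phi$ changes by at most $2Q/S$. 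McDiarmid's inequality then yields, for every $t>0$,
\[
P\big(\Phi(\mathbf{X})\ge \E\,\Phi(\mathbf{X})+t\big)\le \exp\!\Big(-\frac{S t^2}{2Q^2}\Big),
\]
so choosing $t=Q\sqrt{2\ln(1/\delta)}\,S^{-1/2}$ shows that $\Phi(\mathbf{X})\le \E\,\Phi(\mathbf{X})+Q\sqrt{2\ln(1/\delta)}\,S^{-1/2}$ holds with probability at least $1-\delta$.

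\emph{Step 2: symmetrization.} Next I would bound $\E\,\Phi(\mathbf{X})$ by the Rademacher complexity. Introducing an independent ghost sample $\mathbf{x}_1',\ldots,\mathbf{x}_S'$ with law $\mu$ and i.i.d.\ Rademacher signs $\xi_1,\ldots,\xi_S$, Jensen's inequality (to pull the supremum through the ghost expectation) together with the i.i.d.\ structure gives
\[
\E\,\Phi(\mathbf{X})\le \E\,\sup_{f\in{\mathcal F}_Q}\frac{1}{S}\Big|\sum_{i=1}^S \xi_i\big(f(\mathbf{x}_i')-f(\mathbf{x}_i)\big)\Big|\le 2\,\E\,\sup_{f\in{\mathcal F}_Q}\frac{1}{S}\Big|\sum_{i=1}^S \xi_i f(\mathbf{x}_i)\Big|.
\]
Since ${\mathcal F}_Q$ is symmetric under $f\mapsto -f$ (because $\|-f\|_{\mathcal B}=\|f\|_{\mathcal B}$), the absolute value may be dropped inside the expectation, so $\E\,\Phi(\mathbf{X})\le 2\,{\rm Rad}_S({\mathcal F}_Q)$.

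\emph{Step 3: assemble.} Finally I would insert the bound ${\rm Rad}_S({\mathcal F}_Q)\le 2Q\big(D_0D_2\sqrt{2\ln(2N)}+\sqrt{2\ln 2}\big)S^{-1/2}$ from Theorem \ref{Rademacher.thm} to get $\E\,\Phi(\mathbf{X})\le 4\sqrt{2}\,Q\big(D_0D_2\sqrt{\ln(2N)}+\sqrt{\ln 2}\big)S^{-1/2}$, add the term $\sqrt{2}\,Q\sqrt{\ln(1/\delta)}\,S^{-1/2}$ from Step 1, and factor out $\sqrt{2}\,Q\,S^{-1/2}$ to land exactly on \eqref{error.thm.eq2}. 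The only step that is not pure bookkeeping is the symmetrization inequality of Step 2 (the ghost-sample argument and removing the absolute value using the symmetry of ${\mathcal F}_Q$); the concentration step and the constant-chasing are routine once Theorem \ref{Rademacher.thm} is available, so I would present the argument concisely.
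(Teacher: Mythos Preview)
Your proposal is correct and follows essentially the same route as the paper: bounded differences via \eqref{BanachBarronspace.eq1}, McDiarmid, the ghost-sample symmetrization to reach $\E\,\Phi\le 2\,{\rm Rad}_S({\mathcal F}_Q)$, and then Theorem~\ref{Rademacher.thm}. The only cosmetic difference is that the paper invokes the symmetry of ${\mathcal F}_Q$ at the outset to drop the absolute value in $\Phi$, whereas you do so after symmetrization; the constants line up exactly.
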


For the completeness of this paper, we include a brief proof in Section
\ref{error.thm.pfsection}.

\section{Numerical Simulations}
\label{numericaldemon.section}

In this section, we  consider  both synthetic and real data on the underlying undirected  graph  ${\mathcal G}=(V, E)$
of the data set of hourly temperature
collected at $32$ weather stations  in the region of Brest (France) \cite{perraudin17, segarra2017, Cong2023}.
 The temperature data set  
 is of size $32\times24\times 31$,
 and the weather station graph    is constructed by the 5 nearest neighboring stations in physical distances.
 In this section, we use stochastic gradient descent with Nesterov momentum to train shallow GCNNs on the weather station graph ${\mathcal G}$ and demonstrate
the approximation performance of shallow GCNNs presented in  Theorems \ref{approximation.thm} and \ref{approximation.thm2}.
 All experiments and computations  are performed using the PyTorch deep learning framework.

 Denote the symmetric normalized  Laplacian on the weather station graph ${\mathcal G}$ by ${\mathbf L}^{\rm sym}={\bf I}- {\bf D}^{-1/2} {\bf A} {\bf D}^{-1/2}$,
 where ${\bf A}$ and ${\mathbf D}$ are the adjacency and degree matrix of the graph ${\mathcal G}$ respectively. In our simulations, we set $N=32$,
 let the fundamental domain $\Omega$ of the GCNN contain all graph signals  ${\bf x}=(x(i))_{i\in V}$ with entries contained in $[-1, 1]$, i.e., $-1\le x(i)\le 1$ for all $i\in V$, and
we use
 $${\mathcal W}_L=\Big\{\sum_{l=0}^L b(l) ({\mathbf L}^{\rm sym})^l, \ b(0),  \ldots, b(L) \in {\mathbb R}\Big\}$$
 in \eqref{convolutionspace.def} as the convolution space.

 Given input  graph signals ${\bf x}_i\in {\mathbb R}^{N}, 1\le i\le S$, and output values $y_i=f({\bf x}_i), 1\le i\le S$ of  a function $f$ on the domain $\Omega$, we use
 stochastic gradient descent with Nesterov momentum $\mu=0.9$, SGDM for abbreviation,  as the optimization strategy
 to learn the parameters
 ${\pmb \Theta}$ of the desired GCNN to approximate the function $f$, see Algorithm \ref{sgdm.algorithm}. The loss function in the SGDM
 is the conventional relative mean square error (RMSE for abbreviation),
 	\begin{equation}\label{mseloss.def}
	F({\pmb \Theta}) = \frac{\sum_{i=1}^S  (y_i - f_M({\bf x}_i, {\pmb  \Theta}))^2}{\|{\bf y}\|_2^2 },
	\end{equation}
where ${\pmb \Theta}=({\pmb \theta}_1, \ldots, {\pmb \theta}_M)$ with
${\pmb \theta}_m=({\bf a}_m, {\bf b}_m, {\bf c}_m)\in {\mathbb R}^N\times {\mathbb R}^{L+1} \times {\mathbb R}^N$ and ${\bf b}_m=[b_m(0), \ldots, b_m(L)]^T\in {\mathbb R}^{L+1}, 1\le m\le M$, $\|{\bf y}\|_2=(\sum_{i=1}^S y_i^2)^{1/2}$, and
\begin{equation}\label{mseloss.def2}
    f_M ({\bf x}_i, {\pmb \Theta}) = \frac{1}{M} \sum_{m=1}^M {\bf a}_{m}^T \sigma
     \Big(\sum_{l=0}^L b_m(l) ({\mathbf L}^{\rm sym})^l{\bf x}_i+ {\bf c}_{m}\Big), 1\le i\le S.
\end{equation}
 For the case that  ${\bf x}_i, 1\le i\le S$, are randomly and independently selected  with uniform distribution on $\Omega:=[-1, 1]^N$,
 for  large sampling size $S$ we may show that  the  RMSE $F({\pmb \Theta})$ is  about
 the relative approximation error of the GCNN in square norm,
 $$ F({\pmb \Theta})\approx \frac{\int_{\Omega} |f({\bf x})-f_M({\bf x})|^2 d{\bf x}} {\int_{\Omega} |f({\bf x})|^2 d{\bf x}}.$$

As the ReLU function $\sigma$ is not differentiable, we define its  approximate derivative
$\sigma_{\rm app}^\prime$ by
$$\sigma_{\rm app}^\prime(t)=
\frac{\sigma(t+\epsilon)-\sigma(t-\epsilon)}{2\epsilon}=\left\{\begin{array}{ll}
1 & {\rm if} \  t\in [\epsilon, +\infty)\\
\frac{t+\epsilon}{2\epsilon}& {\rm if} \  t\in (-\epsilon, \epsilon)\\
0 & {\rm if} \ t\in (-\infty,  -\epsilon],
\end{array}\right.
$$
which is also
the derivative of the regularization
$$\sigma_{\epsilon}(t)=  \left\{\begin{array}{ll}
t & {\rm if} \  t\in [\epsilon, +\infty)\\
{(t+\epsilon)^2}/{(4\epsilon)}& {\rm if} \  t\in (-\epsilon, \epsilon)\\
0 & {\rm if} \ t\in (-\infty,  -\epsilon]
\end{array}\right.
$$
of the ReLU function $\sigma$,
where $\epsilon=10^{-5}$ in our simulations.
Set
$${\bf b}_m({\mathbf L}^{\rm sym})=\sum_{l=0}^L b_m(l) ({\mathbf L}^{\rm sym})^l, \ 1\le m\le M$$
and
$${\bf z}_i({\pmb \Theta})=y_i - f_M({\bf x}_i, {\pmb  \Theta}),\  1\le i\le S.$$
In the SGDM,   we use the approximate gradient $\nabla F_{\rm app}$ of the loss function $F$:
\begin{equation}
\frac{\partial F_{\rm app}({\pmb \Theta})}{\partial {\bf a}_m}=-\frac{2}{M\|{\bf y}\|_2^2 } \sum_{i=1}^S { z}_i({\pmb \Theta})
\sigma
     \big({\bf b}_m({\mathbf L}^{\rm sym}){\bf x}_i+ {\bf c}_{m}\big),
\end{equation}
\begin{equation}
\frac{\partial F_{\rm app}({\pmb \Theta})}{\partial {\bf c}_m}=-\frac{2}{M\|{\bf y}\|_2^2 } \sum_{i=1}^S { z}_i({\pmb \Theta})
{\rm diag} \big( \sigma_{\rm app}'     \big({\bf b}_m({\mathbf L}^{\rm sym}){\bf x}_i+ {\bf c}_{m}\big)\big) {\bf a}_m,
\end{equation}
and
\begin{equation}
\frac{\partial F_{\rm app}({\pmb \Theta})}{\partial {{\bf b}_m(l)}}=-\frac{2}{M\|{\bf y}\|_2^2 } \sum_{i=1}^S { z}_i({\pmb \Theta}) {\bf a}_m^T
{\rm diag} \big( \sigma_{\rm app}'     \big({\bf b}_m({\mathbf L}^{\rm sym}){\bf x}_i+ {\bf c}_{m}\big)\big)
 ({\mathbf L}^{\rm sym})^l{\bf x}_i,
\end{equation}
where  $1\le m\le M$ and $0\le l\le L$.

\begin{algorithm}
[t]
\caption{Stochastic Gradient Descent  Algorithm with Nesterov Momentum to learn GCNNs}
\label{sgdm.algorithm}
\begin{algorithmic}

\STATE {\em Inputs}: Order $L$ of polynomial filter, number $M$ of neurons, order $N$ of the underlying graph ${\mathcal G}$, number $S$ of samples,
 Nesterov momentum $\mu=0.9$, learning rate $\gamma=0.003$, number of iteration ${\rm Iter}$,
the symmetrically normalized Laplacian ${\bf L}^{\rm sym}$, input signals ${\bf x}_i\in {\mathbb R}^N, 1\le i\le S$, and   outputs $y_i=f({\bf x}_i), 1\le i\le S$
of the function $f$ to be learnt.

\STATE {\em Initial}:   ${\pmb \Theta} \leftarrow {\bf 0}$ and ${\bf Temp} \leftarrow \nabla F_{\rm app}({\pmb \Theta})$.

\STATE {\em Iteration}:
\vskip-8mm
\begin{align*}
    &\textbf{for}\: n=1\\
            &\hspace{10mm} {\bf Grad}\leftarrow     \nabla F_{\rm app}({\pmb \Theta})                   \\
            &\hspace{10mm} {\bf Temp}  \leftarrow \mu * {\bf Temp} +{\bf Grad}         \\
            &\hspace{10mm} {\pmb \Theta}  \leftarrow {\pmb \Theta}-\gamma*{\bf Temp}                                         \\
            &\hspace{10mm} n \leftarrow n+1                   \\
    &\textbf{stop}\: {\rm if}\  n > {\rm Iter}                                 \\
            \end{align*}
\vskip-10mm
\STATE {\em Output}: $ {\pmb \Theta}_{\rm Iter}$.
\end{algorithmic}
\end{algorithm}

\medskip

In our first simulation, we consider the quadratic function
\begin{equation} \label{randomquadraticfunction.def} f(x)= \|{\bf B}{\bf x}\|_2^2,\  {\bf x}\in \Omega,\end{equation}
where ${\bf B}=(b(i,j))_{i,j\in V}$
has zero entries except that $b(i, i), i\in V$ and $b(i,j), (i,j)\in E$ are taken randomly and  independently with uniform distribution on $[-1, 1]$.
To learn the above function $f$ from the SGDM, we assume that
 the given input graph signals ${\bf x}_i, 1\le i\le S$, are randomly and independently selected with uniform distribution on $[-1, 1]^{N}$, and the output values
 $y_i=f(x_i)=\|{\bf B}{\bf x}_i\|_2^2, 1\le i\le S$.
Shown in  Figure  \ref{square1.fig} is the performance of the SGDM to learn the function $f$ from its sampling data $({\bf x}_i, f({\bf x}_i)), 1\le i\le S$.
 We observe from Figure \ref{square1.fig}
  that  the SGDM
  converges and has better performance when the sampling size $S$ increases.
  This demonstrates the theoretical result in Theorem \ref{error.thm} on  higher learnability of functions
   from their random samples of larger size.
   \begin{figure}[t] 
\centering
\includegraphics[width=68mm, height=48mm]{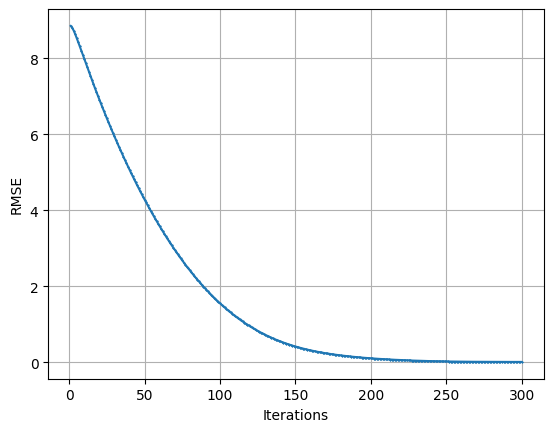}
\includegraphics[width=68mm, height=48mm]{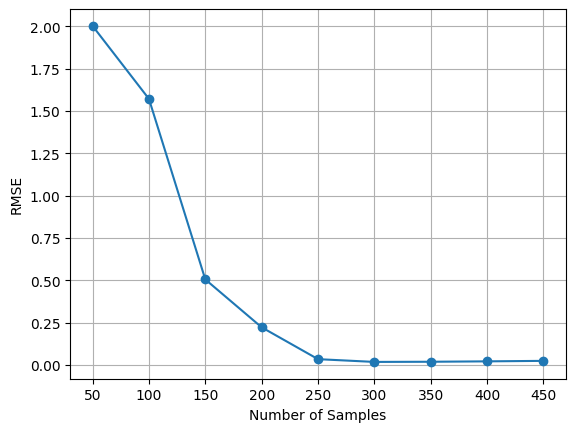}
\caption{Plotted on the  left is the average  RMSE $F({\pmb \Theta}_n),
 1\le n\le 300$, over 100 trials, where $M=4, L=5$ and $S=100$
and ${\pmb \Theta_n}, 1\le n\le 300$, are the parameters  in  the
  $n$-th iteration of the SGDM.
   The average energy  $S^{-1} \|{\bf y}\|_2^2$  of the output data over 100 trial is $21.8736$, 
   while the uniform bound $\|{\bf y}\|_\infty=\sup_{1\le i\le S} |y_i|$ over 100 trial is $6.4467$. 
   Presented on the right is the
the average  RMSE $F({\pmb \Theta}_{\rm Iter})$ over 100 trials with respect to different sampling size $S$,
where  $M=4$, $L=5$ and ${\rm Iter}=100$.
}\label{square1.fig}
\end{figure}

Define the relative uniform approximation error  (RUAE) of the GCNN with parameter ${\pmb \Theta}$
  by
$$U ({\pmb \Theta}):= \frac{\sup_{1\le i\le S}|f({\bf x}_i)-f_M({\bf x}_i, {\pmb \Theta})|}
{\sup_{1\le i\le S} |f({\bf x}_i)|+10^{-6}},$$
where $f$ is the original function and $f_M({\bf x}, {\pmb \Theta})$ is the out of the GCNN given  in \eqref{mseloss.def2}.
   Presented in  Figure  \ref{square.fig} is  how the RMSE and RUAE vary with the number of neurons per vertex.
   This demonstrates the theoretical result in Theorems \ref{approximation.thm} and \ref{approximation.thm2}
 on the approximation property of GCNNs.
We observe from Figure \ref{square.fig} that  increasing the number of neurons at each vertex generally improves the accuracy of the GCNN, as measured by both RMSE and RUAE, as long as the number of iterations in SGDM is not too high.
 However, when the number of iterations is high (then RMSE and RUAE are low), adding more neurons does not help and may even hurt the performance of the GCNN.
We hypothesize that this is because the GCNN becomes overfitted to the training data and loses its ability to generalize to new data.

\begin{figure}[t] 
\centering
\includegraphics[width=65mm, height=48mm]{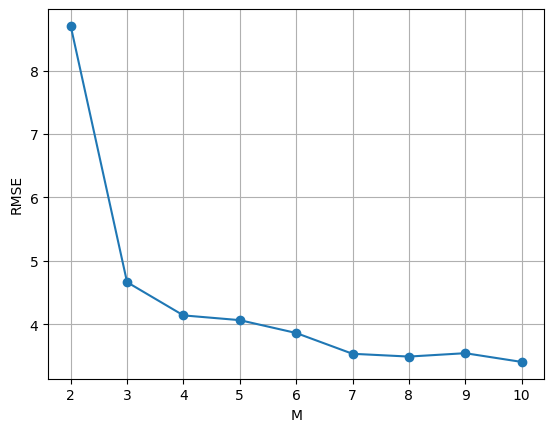}
\includegraphics[width=65mm, height=48mm]{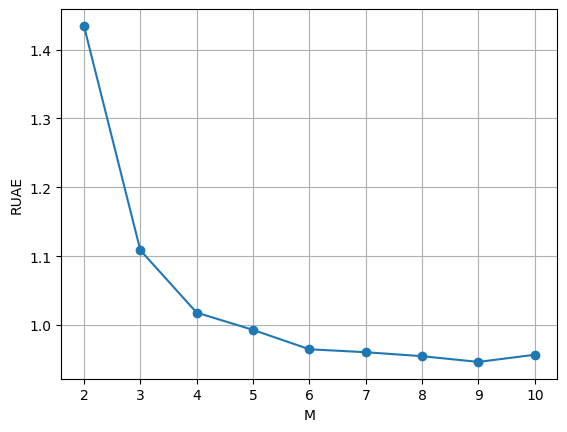}\\
\includegraphics[width=65mm, height=48mm]{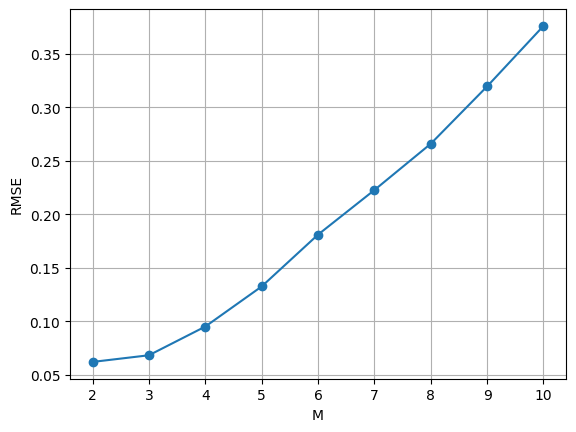}
\includegraphics[width=65mm, height=48mm]{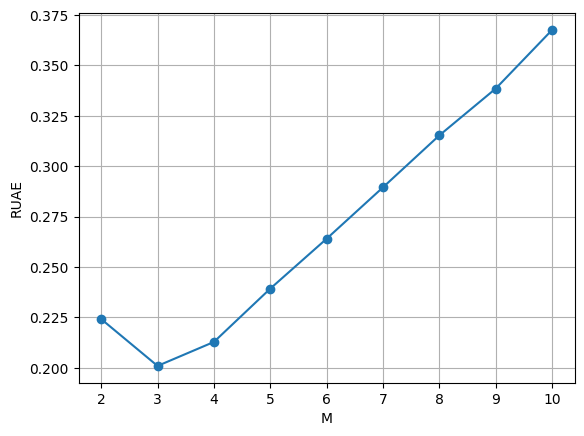}
\caption{Plotted  are the
 average RMSE $F({\pmb \Theta}_{\rm Iter})$ (left) and RUAE $U({\pmb \Theta}_{\rm Iter})$ (right) over 100 trials with respect to different number  $M$ of neurons per vertex,
where  $S=100, L=5$, and ${\rm Iter}= 50$ (top left and right) and $200$ (bottom left and right) respectively. }
\label{square.fig}
\end{figure}

\medskip

In the second simulation, we consider  the real data set
  of hourly temperature measured in Celsius
collected at $32$ weather stations  in the region of Brest (France) in
 January 2014.
 Denote the regional temperature  at $t_i$-th hour of  $d$-th day by
   ${\bf x}^{\rm org}_d(t_i), 0\le i\le 23, 1\le d\le 31$.
Before we apply GCNNs to learn functions,   we  pre-process the temperature data set
by eliminating the average temperature and rescaling the range to $[-1, 1]$,
$${\bf x}_d(t_i)= B^{-1}\big({\bf x}^{\rm org}_d(t_i)-{\bf x}^{\rm org}_{\rm ave}\big),$$
where  ${\bf x}^{\rm org}_{\rm ave}= (23\times 31)^{-1}
 \sum_{i=0}^{23} \sum_{d=1}^{31} {\bf x}_d^{\rm org}(t_i)$  the average temperature in the region of Brest (France)
 for January 2014,  and  $B$ is chosen so that ${\bf x}_d(t_i)\in [-1, 1]$ for all $1\le d\le 31$ and $0\le i\le 23$. In particular,
 we take $B=10.35$ 
 in our simulation.
 In the second simulation, we want to learn GCNNs to approximate the squared variance function $f_{\rm sv}$  of next day,
 \begin{equation}\label{fsv.def00}
  f_{\rm sv}({\bf x}_d(t_i))=  \|{\bf x}_{d+1}(t_i)\|_2^2- ({\bar {\bf x}}_{d+1}(t_i))^2, \    1\le d\le 30, 0\le i\le 23,
   \end{equation}
   where ${\bar {\bf x}}_{d+1}(t_i)$ is the average pre-processed temperature data of the whole Brest region at $t_i$-th hour of  $(d+1)$-th day.

 Learning GCNNs from real-world data is a challenging 
 task. In the second simulation, we try to learn GCNN from about 20\% of the weather data set, particularly,
  ${\bf x}_d(t_i)$ and $f_{\rm sv}, 0\le i\le 23,  d\in \{1, 6, 11, 16, 21, 26\}$,
 to learn the squared variance function  $f_{\rm sv}$ of next day.
Shown in Figure \ref{weatherdata.fig} is the approximation property of the output of the GCNN obtained from the SGDM, where
the relative mean square error (RMSE) and relative uniform approximation error  (RUAE) on the {\bf whole} weather data set are defined by
$${\rm WMSE}=\frac{ \sum_{i=0}^{23} \sum_{d=1}^{30} |f_{\rm sv}({\bf x}_d(t_i))-f_M({\bf x}_d(t_i), {\pmb \Theta}_{\rm Iter})|^2}
{\sum_{i=0}^{23} \sum_{d=1}^{30} |f_{\rm sv}({\bf x}_d(t_i))|^2 }
$$
and on the right is the uniform error
$${\rm WUAE}=\frac{\sup_{1\le d\le 30, 0\le i\le 23} |f_{\rm sv}({\bf x}_d(t_i))-f_M({\bf x}_d(t_i), {\pmb \Theta}_{\rm Iter})|}
{\sup_{1\le d\le 30, 0\le i\le 23} |f_{\rm sv}({\bf x}_d(t_i)|},$$
where ${\pmb \Theta}_{\rm Iter}$ is the output parameter of the SGDM.
Comparing with the approximation of the quadratic function $f$ in \eqref{randomquadraticfunction.def} with the squared variance function $f_{\rm sv}$ in \eqref{fsv.def00} by GCNNs, the number of neurons at each vertex has a positive impact on the accuracy of the GCNN, when the number of iterations in SGDM is not too high. However, when the number of iterations is high, adding more neurons does not improve and may even degrade the performance of the GCNN.

From the approximation property presented by
 Figures \ref{square.fig} and \ref{weatherdata.fig}, we observe that
there is a trade-off between the number $M$ of neurons and the number ${\rm Iter}$ of iterations in the SGDM that needs to be carefully balanced to achieve the optimal performance of the GCNNs.

 \begin{figure}[t] 
\centering
\includegraphics[width=68mm, height=48mm]{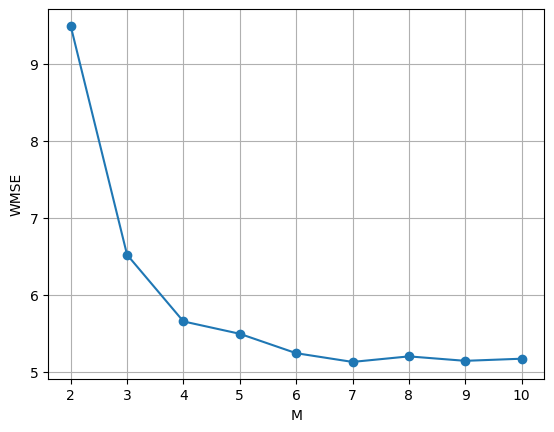}
\includegraphics[width=68mm, height=48mm]{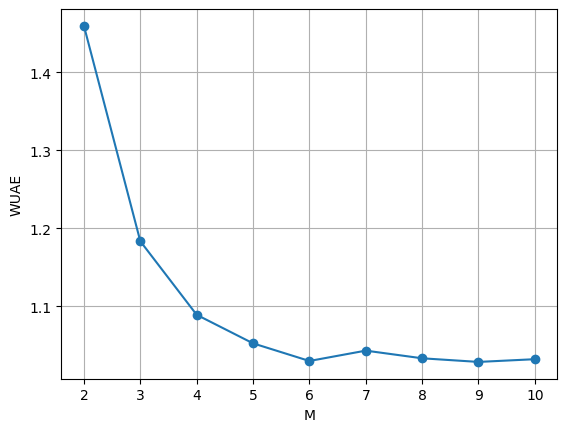}\\
\includegraphics[width=68mm, height=48mm]{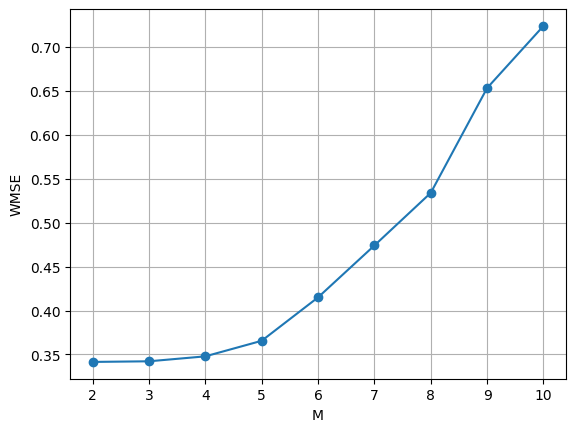}
\includegraphics[width=68mm, height=48mm]{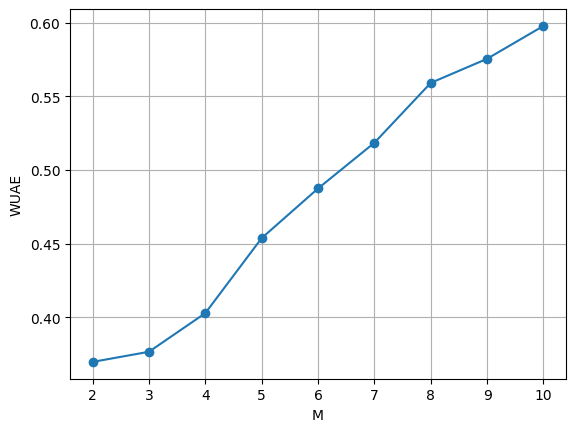}
\caption{Plotted  are the
 average WMSE  (left) and WUAE  (right) over 100 trials with respect to different number  $M$ of neurons per vertex,
where  $S=218=24\times 12, L=5$, and ${\rm Iter}= 30$ (top left and right) and $100$ (bottom left and right) respectively.
}
\label{weatherdata.fig}
\end{figure}

\section{Proofs}
\label{proofs.section}

In this section, we collect the proofs of Theorems  \ref{BanachBarronspace.thm}, \ref{rkhs.thm},  \ref{BarronRkhs.thm}, \ref{approximation.thm}, \ref{approximation.thm2}, \ref{inverseapproximaton.thm}, \ref{density.thm}, \ref{Rademacher.thm}  and \ref{error.thm}.

\subsection{Proof of Theorem \ref{BanachBarronspace.thm}}
\label{BanachBarronspace.thm.pfsection}

To prove Theorem \ref{BanachBarronspace.thm}, we first show that $\|\cdot\|_{{\mathcal B}_1}$  in \eqref{barronnorm.def}
defines a norm on  the Barron space ${\mathcal B}_1$.

\begin{lemma}\label{quasinorm.lem} Let $\|\cdot\|_{{\mathcal B}_1}$ be as in \eqref{barronnorm.def}.
Then 
\begin{itemize}
\item[{(i)}]  $f=0$ if and only if $\|f\|_{{\mathcal B}_1}=0$.
\item[{(ii)}] $\|\alpha f\|_{{\mathcal B}_1}= |\alpha| \|f\|_{{\mathcal B}_1}$ for all $f\in {\mathcal B}_1$ and $\alpha\in {\mathbb R}$.

\item[{(iii)}] $\|f+g\|_{{\mathcal B}_1}\le \|f\|_{{\mathcal B}_1}+ \|g\|_{{\mathcal B}_1}$ for all $f, g\in {\mathcal B}_1$.

\end{itemize}
\end{lemma}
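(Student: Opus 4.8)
The plan is to verify the three norm axioms directly from \eqref{barronnorm.def} with $r=1$, relying on two structural facts: positive homogeneity of the neuron $({\bf a},{\bf b},{\bf c})\mapsto {\bf a}^T\sigma({\bf b}*{\bf x}+{\bf c})$ in the variable ${\bf a}$, and linearity of the representation integral \eqref{functionrepresentation.def} in the probability measure $\rho$. For brevity write $w({\bf a},{\bf b},{\bf c})=\|{\bf a}\|_*(\|{\bf b}\|_{\rm co}+\|{\bf c}\|)$, and for $\alpha\in{\mathbb R}$ let $T_\alpha$ be the continuous (hence Borel measurable) map $({\bf a},{\bf b},{\bf c})\mapsto(\alpha{\bf a},{\bf b},{\bf c})$ on ${\mathbb R}^N\times{\mathcal W}\times{\mathbb R}^N$, with $(T_\alpha)_\#\rho$ its pushforward. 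First I would record the pointwise bound that drives the whole argument: for $f\in{\mathcal B}_1$, $\rho\in{\mathcal P}_f$ and ${\bf x}\in\Omega$ one has $|f({\bf x})|\le\int|{\bf a}^T\sigma({\bf b}*{\bf x}+{\bf c})|\,\rho(d{\bf a},d{\bf b},d{\bf c})\le\E_\rho[w]$, obtained by pulling the absolute value inside the integral and then applying the duality inequality $|{\bf a}^T{\bf y}|\le\|{\bf a}\|_*\|{\bf y}\|$, the contraction property \eqref{connorm0.def}, and $\|{\bf b}*{\bf x}+{\bf c}\|\le\|{\bf b}*{\bf x}\|+\|{\bf c}\|\le\|{\bf b}\|_{\rm co}+\|{\bf c}\|$ from \eqref{connorm.def}; taking the infimum over $\rho\in{\mathcal P}_f$ gives $|f({\bf x})|\le\|f\|_{{\mathcal B}_1}$, which already supplies half of (i).

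For homogeneity (ii), the idea is to transport a representing measure by the scaling map $T_\alpha$. When $\alpha=0$ the function $\alpha f\equiv0$ is represented by the Dirac mass $\delta_{({\bf 0},{\bf 0},{\bf 0})}$, whose $w$-integral is $0$, so $\|\alpha f\|_{{\mathcal B}_1}=0=|\alpha|\,\|f\|_{{\mathcal B}_1}$. When $\alpha\ne0$, the change-of-variables formula shows that integrating ${\bf a}^T\sigma({\bf b}*{\bf x}+{\bf c})$ against $(T_\alpha)_\#\rho$ returns $\alpha f({\bf x})$, so $(T_\alpha)_\#\rho\in{\mathcal P}_{\alpha f}$ with $\E_{(T_\alpha)_\#\rho}[w]=|\alpha|\,\E_\rho[w]$; since $T_{1/\alpha}$ inverts $T_\alpha$, the assignment $\rho\mapsto(T_\alpha)_\#\rho$ is a bijection of ${\mathcal P}_f$ onto ${\mathcal P}_{\alpha f}$, and passing to infima yields $\|\alpha f\|_{{\mathcal B}_1}=|\alpha|\,\|f\|_{{\mathcal B}_1}$ (in particular $\alpha f\in{\mathcal B}_1$).

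For the triangle inequality (iii), given $\epsilon>0$ I would pick near-optimal representing measures $\rho_f\in{\mathcal P}_f$ and $\rho_g\in{\mathcal P}_g$ with $\E_{\rho_f}[w]\le\|f\|_{{\mathcal B}_1}+\epsilon$ and $\E_{\rho_g}[w]\le\|g\|_{{\mathcal B}_1}+\epsilon$, and then form the probability measure $\rho=\tfrac12(T_2)_\#\rho_f+\tfrac12(T_2)_\#\rho_g$. By the $\alpha=2$ computation of (ii), each $(T_2)_\#\rho_f\in{\mathcal P}_{2f}$, so the factors $\tfrac12$ and the dilations by $2$ cancel and $\rho$ represents $f+g$; hence $f+g\in{\mathcal B}_1$ and $\E_\rho[w]=\tfrac12\E_{(T_2)_\#\rho_f}[w]+\tfrac12\E_{(T_2)_\#\rho_g}[w]=\E_{\rho_f}[w]+\E_{\rho_g}[w]\le\|f\|_{{\mathcal B}_1}+\|g\|_{{\mathcal B}_1}+2\epsilon$. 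Letting $\epsilon\downarrow0$ gives $\|f+g\|_{{\mathcal B}_1}\le\|f\|_{{\mathcal B}_1}+\|g\|_{{\mathcal B}_1}$.

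Part (i) is then finished: $f=0\Rightarrow\|f\|_{{\mathcal B}_1}=0$ by the origin Dirac mass as in (ii), and conversely $\|f\|_{{\mathcal B}_1}=0$ forces $|f({\bf x})|\le\|f\|_{{\mathcal B}_1}=0$ for every ${\bf x}\in\Omega$ by the pointwise bound of the first paragraph, so $f\equiv0$. I expect no serious obstacle; the only thing requiring care is the measure-theoretic bookkeeping — measurability of the maps $T_\alpha$, validity of the change-of-variables formula for pushforwards, and integrability of the various integrands — but this is routine because every integrand appearing is nonnegative and dominated by $w$, which is $\rho$-integrable exactly by the defining condition of ${\mathcal B}_1$. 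The genuine content of the lemma is the homogeneity-plus-convex-combination trick used for (ii) and (iii), together with the contraction-and-duality estimate used for (i).
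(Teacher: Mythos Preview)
Your proposal is correct and follows essentially the same approach as the paper: the Dirac mass at the origin for the zero function, the pointwise bound $|f({\bf x})|\le \E_\rho[w]$ via duality/\eqref{connorm0.def}/\eqref{connorm.def} for positive definiteness, the pushforward by ${\bf a}\mapsto\alpha{\bf a}$ for homogeneity, and a convex combination of (scaled) representing measures for the triangle inequality. The only cosmetic differences are that the paper writes out the pushforward as $\tilde\rho(A)=\rho(\tilde A)$ and proves (ii) via the one-sided inequality with an $\epsilon$, and for (iii) it first reduces to the convexity form $\|\alpha f_1+(1-\alpha)f_2\|_{{\mathcal B}_1}\le\alpha\|f_1\|_{{\mathcal B}_1}+(1-\alpha)\|f_2\|_{{\mathcal B}_1}$ and then uses $\rho=\alpha\rho_1+(1-\alpha)\rho_2$, which is exactly your $T_2$/half--half construction unpacked.
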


\begin{proof} (i)\ \ Taking the Dirac measure  $\delta_0$ at the origin  as the probability measure
in \eqref{functionrepresentation.def} gives a representation for the zero function. This shows that $\|f\|_{{\mathcal B}_1}=0$ for the zero function $f=0$.  Conversely,
given  $f\in {\mathcal B}_1$ with $\|f\|_{{\mathcal B}_1}=0$, there exists a probability measure $\rho$ for any $\epsilon>0$ such that
\eqref{functionrepresentation.def} holds and
${\mathbb E}_\rho (\|{\bf a}\|_*(\|{\bf b}\|_{\rm co}+\|{\bf c}\|))\le \epsilon$.
Therefore for any ${\bf x}\in \Omega$, we have
\begin{eqnarray*}
|f({\bf x})| &\hskip-0.08in  \le &  \hskip-0.08in \int_{{\mathbb R}^N \times {\mathcal W}\times {\mathbb R}^N} \|{\bf a}\|_*(\|{\bf b}\|_{\rm co}+\|{\bf c}\|) \rho(d{\bf a}, d{\bf b}, d{\bf c})\nonumber\\
 & \hskip-0.08in= & \hskip-0.08in   {\mathbb E}_\rho (\|{\bf a}\|_*(\|{\bf b}\|_{\rm co}+\|{\bf c}\|))\le \epsilon.
\end{eqnarray*}
As $\epsilon>0$ is arbitrary chosen, we conclude that $f$ must be the zero function. This proves the conclusion (i).

\medskip
(ii)\ \ Clearly it suffices to show that
\begin{equation}\label{quasinorm.thm.pfeq1}
\|\alpha f\|_{{\mathcal B}_1}\le |\alpha| \|f\|_{{\mathcal B}_1}
\end{equation}
for all $0\ne \alpha\in {\mathbb R}$ and $f\in {\mathcal B}_1$.
Take arbitrary $\epsilon>0$ and let $\rho$ be a probability measure in ${\mathcal P}_f$ such that
\begin{equation}\label{quasinorm.thm.pfeq2}
{\mathbb E}_\rho (\|{\bf a}\|_*(\|{\bf b}\|_{\rm co}+\|{\bf c}\|))\le \|f\|_{{\mathcal B}_1}+\epsilon.\end{equation}
Define a new probability measure $\tilde \rho (A)= \rho(\tilde A)$
for any Borel set $A$, where $\tilde A=\{({\bf a}, {\bf b}, {\bf c})| ({\bf a}/\alpha, {\bf b}, {\bf c})\in A\}$.
Then one may verify that
\begin{eqnarray*} \alpha f({\bf x}) & \hskip-0.08in = & \hskip-0.08in \int_{({\bf a}, {\bf b}, {\bf c})\in {\mathbb R}^N\times {\mathcal W}\times {\mathbb R}^N}
(\alpha {\bf a})^T \sigma({\bf b}*{\bf x}+{\bf c}) \rho(d{\bf a}, d{\bf b}, d{\bf c}) \nonumber\\
& \hskip-0.08in = & \hskip-0.08in
\int_{(\tilde {\bf a}, {\bf b}, {\bf c})\in {\mathbb R}^N\times {\mathcal W}\times {\mathbb R}^N}
\tilde {\bf a}^T \sigma({\bf b}*{\bf x}+{\bf c}) \tilde \rho(d\tilde {\bf a}, d{\bf b}, d{\bf c})
\end{eqnarray*}
and
\begin{eqnarray*}
 {\mathbb E}_{\tilde \rho} \big(\|{\bf a}\|_*(\|{\bf b}\|_{\rm co}+\|{\bf c}\|)\big )
 & \hskip-0.08in = & \hskip-0.08in
\int_{({\bf a}, {\bf b}, {\bf c})\in {\mathbb R}^N\times {\mathcal W}\times {\mathbb R}^N}
 \|{\bf a}\|_*(\|{\bf b}\|_{\rm co}+\|{\bf c}\|) \tilde \rho(d{\bf a}, d{\bf b}, d{\bf c})\\
 & \hskip-0.08in = & \hskip-0.08in
\int 
 \|\alpha \tilde {\bf a}\|_*(\|{\bf b}\|_{\rm co}+\|{\bf c}\|)  \rho(d\tilde {\bf a}, d{\bf b}, d{\bf c})\nonumber\\
 & \hskip-0.08in = & \hskip-0.08in |\alpha| {\mathbb E}_{\tilde \rho} (\|\tilde {\bf a}\|_*(\|{\bf b}\|_{\rm co}+\|{\bf c}\|))
 \le |\alpha| \|f\|_{{\mathcal B}_1}+ |\alpha| \epsilon.
\end{eqnarray*}
Then the desired estimate \eqref{quasinorm.thm.pfeq1} follows from the above estimate and  the arbitrary selection of $\epsilon>0$.

\medskip

(iii) \ \  
By the second conclusion, it suffices to prove that
\begin{equation}\label{quasinorm.thm.pfeq3}
\|\alpha f_1+ (1-\alpha)f_2 \|_{{\mathcal B}_1}\le  \alpha \|f_1\|_{{\mathcal B}_1}+(1-\alpha) \|f_2\|_{{\mathcal B}_1}, \ f_1, f_2\in {\mathcal B}_1
\end{equation}
where $0\le \alpha\le 1$.
Take arbitrary $\epsilon>0$ and let $\rho_1\in {\mathcal P}_{f_1}$ and $\rho_2\in {\mathcal P}_{f_2}$ be
two probability measures so that
\begin{equation}\label{quasinorm.thm.pfeq4}
{\mathbb E}_{\rho_l} (\|{\bf a}\|_*(\|{\bf b}\|_{\rm co}+\|{\bf c}\|))\le \|f_l\|_{{\mathcal B}_1}+\epsilon, \ l=1, 2.
\end{equation}
Define $\rho=\alpha \rho_1+ (1-\alpha)\rho_2$ and set $f=\alpha f_1+ (1-\alpha) f_2$. Then one may verify that  $\rho$ is a probability measure in ${\mathcal P}_{f}$ and
\begin{equation*}
{\mathbb E}_{\rho} (\|{\bf a}\|_*(\|{\bf b}\|_{\rm co}+\|{\bf c}\|))
\le  \alpha\|f_1\|_{{\mathcal B}_1}+(1-\alpha)\|f_2\|_{{\mathcal B}_1}+ \epsilon.
\end{equation*}
This together with the arbitrary selection of $\epsilon>0$ proves \eqref{quasinorm.thm.pfeq3}.
\end{proof}

To prove Theorem \ref{BanachBarronspace.thm}, we next show that the probability measure $\rho$ in the representation \eqref{functionrepresentation.def}
of  any function $f$ in ${\mathcal B}_1$ could be selected to be supported on the dilated unit
sphere.

\begin{lemma}
\label{quasinorm2.lem}
Let ${\mathbb S}$ and ${\mathbb T}$ be as in \eqref{ST.def}.
Then for any $ f\in {\mathcal B}_1$, there exists a probability measure $\hat \rho$
supported on ${\mathbb S}\times {\mathbb T}$ such that
\begin{equation} \label{quasinorm2.eq1}
f({\bf x})=\|f\|_{{\mathcal B}_1} \int_{{\mathbb S}\times {\mathbb T}} {\bf a}^T
\sigma({\bf b}*{\bf x}+{\bf c}) \hat\rho(d{\bf a}, d{\bf b}, d{\bf c}), \  {\bf x}\in \Omega.\end{equation}
\end{lemma}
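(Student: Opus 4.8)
The plan is to start from a near-optimal representing measure for $f$ in \eqref{functionrepresentation.def}, use the positive homogeneity of the ReLU activation together with the linearity of the convolution in its first argument to ``radialize'' this measure onto ${\mathbb S}\times {\mathbb T}$, and then pass to a weak-$*$ limit to remove the approximation slack. If $f=0$, then $\|f\|_{{\mathcal B}_1}=0$ by Lemma \ref{quasinorm.lem}(i) and \eqref{quasinorm2.eq1} holds trivially (both sides vanish) for any probability measure $\hat\rho$ supported on ${\mathbb S}\times {\mathbb T}$. So assume $f\ne 0$; then $\|f\|_{{\mathcal B}_1}>0$, and by the definition \eqref{barronnorm.def} one can choose probability measures $\rho_k\in {\mathcal P}_f$ satisfying \eqref{functionrepresentation.def} with ${\mathbb E}_{\rho_k}\big[\|{\bf a}\|_*(\|{\bf b}\|_{\rm co}+\|{\bf c}\|)\big]\to \|f\|_{{\mathcal B}_1}$ as $k\to\infty$.

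For a triple $({\bf a},{\bf b},{\bf c})$ with $r:=\|{\bf a}\|_*>0$ and $s:=\|{\bf b}\|_{\rm co}+\|{\bf c}\|>0$, put $\hat{\bf a}={\bf a}/r\in{\mathbb S}$ and $(\hat{\bf b},\hat{\bf c})=({\bf b}/s,{\bf c}/s)\in{\mathbb T}$. Since ${\bf b}*{\bf x}$ is linear in ${\bf b}$, the convolution norm is absolutely homogeneous, and $\sigma$ is positively homogeneous of degree one (componentwise), we get
$$ {\bf a}^T\sigma({\bf b}*{\bf x}+{\bf c})=rs\,\hat{\bf a}^T\sigma(\hat{\bf b}*{\bf x}+\hat{\bf c}),\qquad {\bf x}\in\Omega. $$
When $rs=0$ the left-hand side vanishes for all ${\bf x}\in\Omega$ (by \eqref{connorm.def} and $\sigma({\bf 0})={\bf 0}$), so these triples are irrelevant to \eqref{functionrepresentation.def}. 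Hence the push-forward of $\rho_k$ under the Borel map $({\bf a},{\bf b},{\bf c})\mapsto(\hat{\bf a},\hat{\bf b},\hat{\bf c})$, weighted by $rs$, is a finite nonnegative Borel measure $\nu_k$ on ${\mathbb S}\times{\mathbb T}$ with
$$ f({\bf x})=\int_{{\mathbb S}\times{\mathbb T}} {\bf a}^T\sigma({\bf b}*{\bf x}+{\bf c})\,\nu_k(d{\bf a},d{\bf b},d{\bf c}),\qquad {\bf x}\in\Omega, $$
and total mass $c_k:=\nu_k({\mathbb S}\times{\mathbb T})={\mathbb E}_{\rho_k}\big[\|{\bf a}\|_*(\|{\bf b}\|_{\rm co}+\|{\bf c}\|)\big]\to\|f\|_{{\mathcal B}_1}$. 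Moreover $|{\bf a}^T\sigma({\bf b}*{\bf x}+{\bf c})|\le 1$ on ${\mathbb S}\times{\mathbb T}$ (by \eqref{connorm0.def} and \eqref{connorm.def}), so $c_k\ge \sup_{{\bf x}\in\Omega}|f({\bf x})|>0$ and $\hat\rho_k:=c_k^{-1}\nu_k$ is a probability measure on ${\mathbb S}\times{\mathbb T}$.

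Finally, ${\mathbb S}\times{\mathbb T}$ is a compact metric space (here one uses that $\|\cdot\|_{\rm co}$ is a norm on ${\mathcal W}$, so that ${\mathbb T}$ is closed and bounded in a finite-dimensional space), hence by weak-$*$ sequential compactness of probability measures on it, a subsequence of $\hat\rho_k$ converges weakly-$*$ to a Borel probability measure $\hat\rho$ on ${\mathbb S}\times{\mathbb T}$. For each fixed ${\bf x}\in\Omega$ the map $({\bf a},{\bf b},{\bf c})\mapsto {\bf a}^T\sigma({\bf b}*{\bf x}+{\bf c})$ is continuous and bounded on ${\mathbb S}\times{\mathbb T}$, so along that subsequence $\int {\bf a}^T\sigma({\bf b}*{\bf x}+{\bf c})\,\hat\rho_k\to\int {\bf a}^T\sigma({\bf b}*{\bf x}+{\bf c})\,\hat\rho$; together with $f({\bf x})=c_k\int {\bf a}^T\sigma({\bf b}*{\bf x}+{\bf c})\,\hat\rho_k$ and $c_k\to\|f\|_{{\mathcal B}_1}$ this yields \eqref{quasinorm2.eq1}. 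The step I expect to require the most care is the weighted push-forward: checking measurability of the radial map off the coordinate subspaces, verifying that the degenerate triples with $rs=0$ contribute nothing, and securing the bound $c_k\ge\sup_{{\bf x}\in\Omega}|f({\bf x})|$ that makes the normalization legitimate; after that, the weak-$*$ compactness argument closes the $\epsilon$-gap routinely.
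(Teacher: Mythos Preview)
Your proof is correct and follows essentially the same approach as the paper: take near-optimal representing measures $\rho_k$, use the positive homogeneity of $\sigma$ and the bilinearity of the convolution to push them forward (with the weight $\|{\bf a}\|_*(\|{\bf b}\|_{\rm co}+\|{\bf c}\|)$) onto the compact set ${\mathbb S}\times{\mathbb T}$, normalize, and then extract a weak-$*$ limit via Prokhorov. Your treatment is in fact slightly more explicit than the paper's about the degenerate triples with $rs=0$ and about why the normalizing constants $c_k$ stay bounded away from zero.
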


\begin{proof}  The conclusion is obvious for the zero function. Now we assume that $f\ne 0$.
 By \eqref{barronnorm.def}, there exist probability measures $\rho_n\in {\mathcal P}_f, n\ge 1$, on ${\mathbb R}^N\times {\mathcal W}\times {\mathbb R}^N$ such that
\begin{equation} \label{barronnorm2.prop.pfeq2}
\|f\|_{{\mathcal B}_1}\le A_n:={\mathbb E}_{\rho_n}\big(\|{\bf a}\|_* (\|{\bf b}\|_{\rm co}+\|{\bf c}\|)\big)\le \|f\|_{{\mathcal B}_1}+2^{-n}, \ n\ge 1.
\end{equation}
For $n\ge 1$, define  the measure $\hat\rho_n(E)$ of a Borel measurable subset  $E\subset {\mathbb S}\times {\mathbb T}$ by
\begin{equation*}
\hat \rho_n (E)=A_n^{-1} {\mathbb E}_{\rho_n} \big( \|{\bf a}\|_* (\|{\bf b}\|_{{\rm co}}+\|{\bf c}\|)\chi_{\hat E} ({\bf a}, {\bf b}, {\bf c})\big)
\end{equation*}
where
$${\hat E}=\Big \{({\bf a}, {\bf b}, {\bf c})\in {\mathbb R}^N\times {\mathcal W}\times {\mathbb R}^N:\
 \Big(\frac{  {\bf a}}{\|{\bf a}\|_*}, \frac{{\bf b}}{\|{\bf b}\|_{\rm co}+\|{\bf c}\|},
 \frac{{\bf c}}{\|{\bf b}\|_{\rm co}+\|{\bf c}\|}\Big)\in E\Big\},$$
and  $\chi_{\hat E}$ is the characteristic function on the set $\hat E$.
One may verify that $\hat \rho_n,  n\ge 1$, are probability measures on
${\mathbb S}\times {\mathbb T}$, and
\begin{eqnarray} \label{barronnorm2.prop.pfeq3}
f({\bf x}) & = & \int_{{\mathbb R}^N\times {\mathcal W}\times {\mathbb R}^N} {\bf a}^T \sigma({\bf b}*{\bf x}+{\bf c}) \rho_n ( d{\bf a}, d {\bf b}, d{\bf c})\nonumber\\
& = & \int_{{\mathbb R}^N\times {\mathcal W}\times {\mathbb R}^N}  \|{\bf a}\|_* (\|{\bf b}\|_{\rm co}+\|{\bf c}\|) \nonumber\\
& & \times  \Big(\frac{{\bf a}}{\|{\bf a}\|_*}\Big )^T
\sigma\Big(\frac{\bf b}{\|{\bf b}\|_{\rm co}+\|{\bf c}\|}*{\bf x}+\frac{{\bf c}}{\|{\bf b}\|_{\rm co}+\|{\bf c}\|}\Big)
 \rho_n ( d{\bf a}, d {\bf b}, d{\bf c})\nonumber\\
 & = & A_n \int_{{\mathbb S}\times {\mathbb T}}
 {\hat {\bf a}}^T
\sigma\big(\hat {\bf b}*{\bf x}+\hat {\bf c})
 \hat \rho_n ( d\hat{\bf a}, d \hat{\bf b}, d\hat{\bf c}).
\end{eqnarray}
Recall that  $\hat \rho_n, n\ge 1$, is a sequence of probability measures
on the compact set ${\mathbb S}\times {\mathbb T}$. Then by  Prokhorov theorem \cite{Prokhorov1956},   without loss of generality, we assume that $\hat \rho_n, n\ge 1$, converges weakly to a probability measure $\hat \rho$ on ${\mathbb S}\times {\mathbb T}$
\begin{equation} \label{barronnorm2.prop.pfeq4}
\lim_{n\to \infty} \hat \rho_n=\hat \rho \ \ {\rm weakly},
\end{equation}
otherwise replacing the sequence by a weakly convergent subsequence.
For any ${\bf x}\in\Omega$, the function ${\bf a}^T ({\bf b}*{\bf x}+{\bf c})$ is continuous with respect to $({\bf a}, {\bf b}, {\bf c})\in {\mathbb S}\times {\mathbb T}$ and is bounded by one. Therefore
the desired conclusion \eqref{quasinorm2.eq1}
follows from \eqref{barronnorm2.prop.pfeq2}, \eqref{barronnorm2.prop.pfeq3} and \eqref{barronnorm2.prop.pfeq4}.
\end{proof}

Now we are ready to prove Theorem \ref{BanachBarronspace.thm}.

\begin{proof}[Proof of Theorem \ref{BanachBarronspace.thm}]  First we prove that ${\mathcal B}_1$ is a Banach space.
By  Lemma \ref{quasinorm.lem}, it suffices to prove every Cauchy sequence  $f_n, n\ge 1$,
 in ${\mathcal B}_1$ converges to some function in ${\mathcal B}_1$.
 In particular,
 without loss of generality, we may assume that
 \begin{equation} \label{BanachBarronspace.thm.pfeq1}
 \|f_{n+1}-f_{n}\|_{{\mathcal B}_1}\le 2^{-n}, \ n\ge 1,
 \end{equation}
 other replacing it by one of its subsequences satisfying \eqref{BanachBarronspace.thm.pfeq1}.

 By  Lemma \ref{quasinorm2.lem}, there exist probability  measures $\hat \rho_n, n\ge 1$, on ${\mathbb S}\times {\mathbb T}$
 such that
\begin{eqnarray} \label{BanachBarronspace.thm.pfeq2}
f_{1}({\bf x})=\|f_{1}\|_{{\mathcal B}_1} \int_{{\mathbb S}\times {\mathbb T}} {\bf a}^T
\sigma({\bf b}*{\bf x}+{\bf c}) \hat\rho_1(d{\bf a}, d{\bf b}, d{\bf c})
\end{eqnarray}
and
\begin{equation} \label{BanachBarronspace.thm.pfeq3}
f_{n}({\bf x})- f_{n-1}({\bf x})=\|f_{n}-f_{n-1}\|_{{\mathcal B}_1} \int_{{\mathbb S}\times {\mathbb T}} {\bf a}^T
\sigma({\bf b}*{\bf x}+{\bf c}) \hat\rho_n(d{\bf a}, d{\bf b}, d{\bf c}), \  {\bf x}\in \Omega\end{equation}
for all $n\ge 2$.
 Define
\begin{equation}\label{BanachBarronspace.thm.pfeq4} f({\bf x})= A \int_{{\mathbb S}\times {\mathbb T}} {\bf a}^T
\sigma({\bf b}*{\bf x}+{\bf c}) \hat\rho(d{\bf a}, d{\bf b}, d{\bf c}), \  {\bf x}\in \Omega,
\end{equation}
where $A=\|f_{1}\|_{{\mathcal B}_1} +\sum_{n=2}^\infty  \|f_{n}-f_{n-1}\|_{{\mathcal B}_1}<\infty$ and
 the probability measure  $\hat \rho$  on ${\mathbb S}\times {\mathbb T}$  is given  by
\begin{equation}\label{BanachBarronspace.thm.pfeq5}\hat \rho= A^{-1}\Big (\|f_{1}\|_{{\mathcal B}_1} \hat \rho_1+ \sum_{n=2}^\infty  \|f_{n}-f_{n-1}\|_{{\mathcal B}_1} \hat \rho_n\Big).\end{equation}

Dilate the measure $\hat\rho$ on ${\mathbb S}\times {\mathbb T}$
to a probability measure on $(A{\mathbb S})\times {\mathbb T}$
and then extend  to a probability measure on ${\mathbb R}^N\times {\mathcal W}\times {\mathbb R}^N$ with support on
$(A{\mathbb S})\times {\mathbb T}$.  Denote the dilated extension measure by $\tilde \rho$.
By  \eqref{BanachBarronspace.thm.pfeq4} and \eqref{BanachBarronspace.thm.pfeq5}, the dilated extension measure $\tilde \rho$ is a probability measure
on ${\mathbb R}^N\times {\mathcal W}\times {\mathbb R}^N$ satisfying \eqref{functionrepresentation.def}, i.e.,  $\tilde \rho\in {\mathcal P}_f$.
Again  from  \eqref{BanachBarronspace.thm.pfeq4}  and \eqref{BanachBarronspace.thm.pfeq5} we obtain
\begin{equation*}\|f\|_{{\mathcal B}_1}\le {\mathbb E}_{\tilde \rho} (\|{\bf a}\|_* (\|{\bf b}\|_{\rm co}+\|{\bf c}\|))= A,
\end{equation*}
which  proves that $f\in {\mathcal B}_1$.

Dilate and extend the probability measure $\hat \rho_m, m\ge 1$ on ${\mathbb S}\times {\mathbb T}$
to probability measures $\tilde \rho_m$  on ${\mathbb R}^N\times {\mathcal W}\times {\mathbb R}^N$ with support on
${\mathbb S}\times {\mathbb T}$.
We observe that
\begin{eqnarray*}
\|f_{n}-f\|_{{\mathcal B}_1}
&\hskip-0.08in = & \hskip-0.08in \Big\|\sum_{m=n+1}^\infty
\|f_{m}-f_{m-1}\|_{{\mathcal B}_1} \int_{{\mathbb S}\times {\mathbb T}} {\bf a}^T
\sigma({\bf b}*{\bf x}+{\bf c}) \hat\rho_m(d{\bf a}, d{\bf b}, d{\bf c})\Big\|_{{\mathcal B}_1}\nonumber\\
& \hskip-0.08in \le & \hskip-0.08in \sum_{m=n+1}^\infty  \|f_{m}-f_{m-1}\|_{{\mathcal B}_1}
\Big\|\int_{{\mathbb S}\times {\mathbb T}} {\bf a}^T
\sigma({\bf b}*{\bf x}+{\bf c}) \hat\rho_m(d{\bf a}, d{\bf b}, d{\bf c})\Big\|_{{\mathcal B}_1}\nonumber\\
& \hskip-0.08in \le & \hskip-0.08in  \sum_{m=n+1}^\infty  \|f_{m}-f_{m-1}\|_{{\mathcal B}_1}
{\mathbb E}_{\tilde \rho_m} (\|{\bf a}\|_* (\|{\bf b}\|_{\rm co}+\|{\bf c}\|)) \le 2^{-n+1},
\end{eqnarray*}
where the equality holds by  \eqref{BanachBarronspace.thm.pfeq2},  \eqref{BanachBarronspace.thm.pfeq3} and \eqref{BanachBarronspace.thm.pfeq4},
and the first, second and third  inequality follows from Lemma \ref{quasinorm.lem},  the definition of Barron norm
and  \eqref{BanachBarronspace.thm.pfeq1} respectively.  Therefore $f_n, n\ge 1$, converges to $f\in {\mathcal B}_1$ and hence
${\mathcal B}_1$ is a Banach space.

 By \eqref{connorm0.def},  \eqref{connorm.def} and \eqref{quasinorm2.eq1},
 we have
 \begin{eqnarray*}
 |f({\bf x})| & \hskip-0.08in \le & \hskip-0.08in \|f\|_{{\mathcal B}_1}
 \int_{{\mathbb S}\times {\mathbb T}} |{\bf a}^T\sigma({\bf b}*{\bf x}+{\bf c})| \hat \rho(d{\bf a}, d{\bf b}, d{\bf c})\\
 & \hskip-0.08in \le & \hskip-0.08in \|f\|_{{\mathcal B}_1}
 \int_{{\mathbb S}\times {\mathbb T}} \|{\bf a}\|_* \|{\bf b}*{\bf x}+{\bf c}\| \hat \rho(d{\bf a}, d{\bf b}, d{\bf c})\\
 & \hskip-0.08in \le & \hskip-0.08in \|f\|_{{\mathcal B}_1}
 \int_{{\mathbb S}\times {\mathbb T}} \|{\bf a}\|_* (\|{\bf b}\|_{\rm co}+\|{\bf c}\|) \hat \rho(d{\bf a}, d{\bf b}, d{\bf c})
 = \|f\|_{{\mathcal B}_1}.
 \end{eqnarray*}
 This proves the reproducing kernel property \eqref{BanachBarronspace.eq1} for the Banach space ${\mathcal B}_1$.

 Applying H\"older inequality, we have
\begin{equation}\label{BanachBarronspace.thm.pfeq6}
\|f\|_{{\mathcal B}_1}\le
\|f\|_{{\mathcal B}_{r}}\le \|f\|_{{\mathcal B}_{\infty}} \ {\rm for \ all} \ f\in {\mathcal B}_\infty \ {\rm and} \ 1\le r\le \infty.
\end{equation}
Therefore the proof of the norm equivalence in \eqref{BanachBarronspace.eq2} reduces to establishing
\begin{equation}\label{BanachBarronspace.thm.pfeq7}
\|f\|_{{\mathcal B}_\infty}\le \|f\|_{{\mathcal B}_{1}} \ {\rm for \ all} \ f\in {\mathcal B}_1.
\end{equation}
By Lemma
 \ref{quasinorm2.lem}, there exist a probability  measure $\hat \rho$ on ${\mathbb S}\times {\mathbb T}$
 such that
\begin{eqnarray} \label{BanachBarronspace.thm.pfeq8}
f({\bf x})=\|f\|_{{\mathcal B}_1} \int_{{\mathbb S}\times {\mathbb T}} {\bf a}^T
\sigma({\bf b}*{\bf x}+{\bf c}) \hat\rho(d{\bf a}, d{\bf b}, d{\bf c}).
\end{eqnarray}
Dilate the measure $\hat\rho$ on ${\mathbb S}\times {\mathbb T}$
to a probability measure on $(\|f\|_{{\mathcal B}_1}{\mathbb S})\times {\mathbb T}$
and then extend  to a probability measure  $\tilde \rho$ on ${\mathbb R}^N\times {\mathcal W}\times {\mathbb R}^N$ with support on
$(\|f\|_{{\mathcal B}_1}{\mathbb S})\times {\mathbb T}$.
Then one may verify that
$\tilde \rho\in {\mathcal P}_f$ and
$$ \|f\|_{{\mathcal B}_\infty}\le \sup_{ ({\bf a}, {\bf b}, {\bf c})\in {\rm supp} \tilde \rho} \|{\bf a}\|_* (\|{\bf b}\|_{\rm co}+\|{\bf c}\|)=
\|f\|_{{\mathcal B}_1}.$$
This proves \eqref{BanachBarronspace.thm.pfeq7}.  Hence
the desired conclusion that ${\mathcal B}_r, 1\le r\le \infty$, are Banach spaces independent on $1\le r\le \infty$.
\end{proof}

\subsection{Proof of Theorem \ref{rkhs.thm}}
\label{rkhs.thm.pfsection}

Let ${\mathcal L}_{\hat\rho}^o$ be the linear space spanned by $ {\bf a}^T \sigma({\bf b}*{\bf x}+{\bf c}), {\bf x}\in \Omega$. One may verify that $g\in {\mathcal H}^o_{\hat \rho}$ if and only if $g=\sum_{i=1}^I c_i  K_{\hat \rho} (\cdot, {\bf x}_i)$ for some $c_i\in {\mathbb R}$ and ${\bf x}_i\in \Omega, 1\le i\le I$, if and only if
\begin{equation}
 \label{rkhs.thm.pfeq1}
g({\bf x})
=\int_{{\mathbb S}\times {\mathbb T}}
 {\bf a}^T \sigma({\bf b}*{\bf x}+{\bf c}) \eta({\bf a}, {\bf b}, {\bf c}) \hat\rho(d{\bf a}, d{\bf b}, d{\bf c})
\end{equation}
 for some function $\eta\in {\mathcal L}_{\hat\rho}^o$.
Moreover,
\begin{equation}  \label{rkhs.thm.pfeq2}
\|g\|_{{\mathcal H}_{\hat \rho}}= \Big( \int_{{\mathbb S}\times {\mathbb T}} |\eta({\bf a}, {\bf b}, {\bf c})|^2\hat\rho(d{\bf a}, d{\bf b}, d{\bf c})\Big)^{1/2}
\end{equation}
and
\begin{eqnarray} \label{rkhs.thm.pfeq3}
|g({\bf x})| &\hskip-0.08in \le &\hskip-0.08in
\int_{{\mathbb S}\times {\mathbb T}}
 |{\bf a}^T \sigma({\bf b}*{\bf x}+{\bf c})|
|\eta({\bf a}, {\bf b}, {\bf c})| \hat\rho(d{\bf a}, d{\bf b}, d{\bf c})\nonumber\\
& \hskip-0.08in \le & \hskip-0.08in
\int_{{\mathbb S}\times {\mathbb T}}
  |\eta({\bf a}, {\bf b}, {\bf c})| \hat\rho(d{\bf a}, d{\bf b}, d{\bf c})\nonumber\\
  & \hskip-0.08in   \le & \hskip-0.08in
\Big(\int_{{\mathbb S}\times {\mathbb T}}
  |\eta ({\bf a}, {\bf b}, {\bf c})|^2 \hat\rho(d{\bf a}, d{\bf b}, d{\bf c}) \Big)^{1/2}=\|g\|_{{\mathcal H}_{\hat \rho}}, \ {\bf x}\in \Omega.
\end{eqnarray}
This proves  \eqref{rkhs.thm.eq1}, \eqref{rkhs.thm.eq2} and
\eqref{rkhs.thm.eq3} for functions $g\in {\mathcal H}_{\hat \rho}^o$.
Recall that ${\mathcal H}_{\hat \rho}$  and ${\mathcal L}_{\hat \rho}^2$
are the completion of  ${\mathcal H}_{\hat \rho}^o$ and
 ${\mathcal L}_{\hat \rho}^o$  respectively.  Hence taking limits in
 \eqref{rkhs.thm.pfeq1}, \eqref{rkhs.thm.pfeq2}  and \eqref{rkhs.thm.pfeq3}, and
 using the reproducing kernel property of  ${\mathcal H}_{\hat \rho}$ and the conclusion in Remark \ref{rkhs.rem1} completes the proof.

\subsection{Proof of Theorem \ref{BarronRkhs.thm}}
\label{BarronRkhs.thm.pfsection}

Take $f\in {\mathcal B}$ and let $\hat \rho\in \hat P$ be the probability measure on ${\mathbb S}\times {\mathbb T}$ such that
\eqref{functionrepresentation.def2} holds.  Then by
Theorem \ref{rkhs.thm}, we conclude that $f\in  {\mathcal H}_{\hat \rho}$ and
$$\|f\|_{{\mathcal H}_{\hat \rho}}\le \|f\|_{\mathcal B}
\Big(\int_{{\mathbb S}\times {\mathbb T}} | (P_{\hat\rho} 1)({\bf a}, {\bf b}, {\bf c})|^2 \hat \rho(d{\bf a}, d{\bf b}, d{\bf c})\Big)^{1/2}\le
\|f\|_{\mathcal B}.$$
This shows that
\begin{equation}\label{BarronRkhs.thm.pfeq1}
{\mathcal B}\subset\cup_{{\hat \rho}\in \widehat P} {\mathcal H}_{\hat \rho}\
\ {\rm and} \ \
\inf _{f\in H_{\hat \rho}, \hat \rho\in \hat P} \|f\|_{{\mathcal H}_{\hat \rho}}\le \|f\|_{\mathcal B}.
\end{equation}

Let $f\in {\mathcal H}_{\hat\rho}$ for some $\hat\rho\in \hat P$ and $\eta\in {\mathcal L}_{\hat \rho}^2$ so that
\eqref{rkhs.thm.eq1} holds.  The existence of such a function $\eta$ follows from \eqref{rkhs.thm.projectioneq}
and Theorem \ref{rkhs.thm}. Moreover, we have
\begin{equation}  \label{BarronRkhs.thm.pfeq2}
\|f\|_{{\mathcal H}_{\hat \rho}}=\|\eta\|_{L^2_{\hat \rho}}.
\end{equation}

Define
a probability measure $\hat {\tilde \rho}$ on ${\mathbb S}\times {\mathbb T}$ by
\begin{equation}  \label{BarronRkhs.thm.pfeq3}
\hat {\tilde \rho}(A)= \frac{\int_{E_1\cap A}
 \eta({\bf a}, {\bf b}, {\bf c})  \hat \rho(d{\bf a}, d{\bf b}, d{\bf c})
 -\int_{E_2\cap \tilde A}
 \eta({\bf a}, {\bf b}, {\bf c})  \hat \rho(d{\bf a}, d{\bf b}, d{\bf c})}
 { \|\eta\|_{L^1_{\hat \rho}}},
\end{equation}
where $E_1=\{({\bf a}, {\bf b}, {\bf c})\in {\mathbb S}\times {\mathbb T}: \  \eta({\bf a}, {\bf b}, {\bf c})\ge 0\}$,
$E_2=\{({\bf a}, {\bf b}, {\bf c})\in {\mathbb S}\times {\mathbb T}: \  \eta({\bf a}, {\bf b}, {\bf c})<0\}$,
and $\tilde A=\{({\bf a}, {\bf b}, {\bf c})\in {\mathbb S}\times {\mathbb T}: \  (-{\bf a}, {\bf b}, {\bf c})\in A\}$.
By \eqref{rkhs.thm.eq1} and the definition  \eqref{BarronRkhs.thm.pfeq3} of the probability measure $\hat {\tilde \rho}\in \hat P$, we have
\begin{eqnarray*}
f({\bf x}) & \hskip-0.08in = & \hskip-0.08in  \int_{E_1}
 {\bf a}^T \sigma({\bf b}*{\bf x}+{\bf c})\eta({\bf a}, {\bf b}, {\bf c}) \hat\rho(d{\bf a}, d{\bf b}, d{\bf c})\nonumber\\
 & \hskip-0.08in & \hskip-0.08in
 + \int_{E_2}   (-{\bf a})^T \sigma({\bf b}*{\bf x}+{\bf c}) (-\eta)({\bf a}, {\bf b}, {\bf c}) \hat\rho(d{\bf a}, d{\bf b}, d{\bf c})\nonumber\\
  & \hskip-0.08in = & \hskip-0.08in
\|\eta\|_{L^1_{\hat \rho}}
   \int_{{\mathbb S}\times{\mathbb T}}
 {\bf a}^T \sigma({\bf b}*{\bf x}+{\bf c})\eta({\bf a}, {\bf b}, {\bf c}) \hat {\tilde \rho}(d{\bf a}, d{\bf b}, d{\bf c}).
\end{eqnarray*}
This implies that $f\in {\mathcal B}$ and
$\|f\|_{\mathcal B}\le \|\eta\|_{L^1_{\hat \rho}}$. This together with \eqref{BarronRkhs.thm.pfeq2}
and the observation $\|\eta\|_{L^1_{\hat \rho}}\le \|\eta\|_{L^2_{\hat \rho}}$ implies  that
\begin{equation} \label{BarronRkhs.thm.pfeq4}
{\mathcal H}_{\hat \rho}\subset {\mathcal B}\ \ {\rm and}\  \ \|f\|_{\mathcal B}\le \|f\|_{{\mathcal H}_{\hat \rho}} \ {\rm for \ all} \ f\in {\mathcal H}_{\hat \rho}.
\end{equation}

Combining \eqref{BarronRkhs.thm.pfeq2} and \eqref{BarronRkhs.thm.pfeq4} completes the proof.

\subsection{Proof of Theorem \ref{approximation.thm}}
\label{approximation.thm.pfsection}

 Without loss of generality, we assume that $f\in {\mathcal B}$ is a nonzero function with $\|f\|_{\mathcal B}=1$, otherwise replacing
$f$ by $f/\|f\|_{\mathcal B}$.
Then by Lemma \ref{quasinorm2.lem}, there exists a probability measure $\hat \rho$ on ${\mathbb S}\times {\mathbb T}$ such that
\begin{equation} \label{approximation.thm.pfeq1}
f({\bf x})=  \int_{{\mathbb S}\times {\mathbb T}} {\bf a}^T \sigma({\bf b}*{\bf x}+{\bf c}) \hat\rho(d{\bf a}, d{\bf b}, d{\bf c}), \ {\bf x}\in \Omega.
\end{equation}
Let ${\pmb \theta}_m=({\bf a}_m, {\bf b}_m, {\bf c}_m)\in {\mathbb S}\times {\mathbb T}, 1\le m\le M$,
be i.i.d. random variables following the probability measure $\hat \rho$.
Set  ${\pmb \Theta}=({\pmb \theta}_1, \ldots, {\pmb \theta}_M)$ and define
$$f_M({\bf x}, {\pmb \Theta})=\frac{1}{M} \sum_{m=1}^M \phi ({\bf x}, \theta_m), \ \ {\bf x}\in \Omega. $$   

Clearly, we have
$$\|{\pmb \Theta}\|_{P, \infty}= 1=\|f\|_{\mathcal B}.$$
Observe that  $ \phi({\bf x}, \pmb \theta_m)= {\bf a}_m^T \sigma  ({\bf b}_m*{\bf x}+{\bf c}_m)$
 are i.i.d random variables with
$\phi({\bf x}, \pmb\theta_m)\in [-1, 1]$ almost surely and
${\mathbb E} \phi({\bf x}, \pmb \theta_m)= f({\bf x})$ hold for all $1\le m\le M$ and ${\bf x}\in \Omega$.
Combining the above observation with   \eqref{fM.def} and \eqref{approximation.thm.pfeq1}, we obtain
\begin{eqnarray*}
{\mathbb E} \int_{\Omega}  ( f_M({\bf x}, {\pmb\Theta})- f({\bf x}))^2 d\mu({\bf x})
     &\hskip-0.08in = & \hskip-0.08in
     \frac{1}{M} \int_{\Omega}    {\mathbb E} (  \phi({\bf x}, \pmb \theta) -  {\mathbb E} \phi({\bf x}, \pmb \theta))^2  d\mu({\bf x})\nonumber\\
     &\hskip-0.08in \le  & \hskip-0.08in
     \frac{1}{M} \int_{\Omega}
      {\mathbb E} (  \phi({\bf x}, \pmb \theta))^2  d\mu({\bf x}) \le  \frac{1}{M}.
\end{eqnarray*}
Applying the Markov's inequality yields
\begin{equation*}
{\mathbb P}\Big\{  \int_{\Omega}  ( f_M({\bf x}, {\pmb\Theta})- f({\bf x}))^2 d\mu({\bf x})>  \frac{1+\epsilon}{M}\Big\}\le \frac{1}{1+\epsilon}<1.
\end{equation*}
This completes the proof.

\subsection{Proof of Theorem \ref{approximation.thm2}}
\label{approximation.thm2.pfsection}

We follow the argument of Theorem \ref{approximation.thm}.  Without loss of generality, we assume that $\|f\|_{\mathcal B}=1$.
Let $\hat \rho$ be the probability measure on ${\mathbb S}\times {\mathbb T}$ in \eqref{approximation.thm.pfeq1},
${\pmb \Theta}=({\pmb \theta}_1, \ldots, {\pmb \theta}_M)$ be the i.i.d random variables following the probability measure $\hat \rho$, and define
$f_M({\bf x}, {\pmb \Theta})$  be
as in \eqref{fM.def}.

Set $I=N_\epsilon^{\rm ext}$ and take a family of  balls $B({\bf x}_i, \epsilon), 1\le i\le I$,   with center ${\bf x}_i\in \Omega$ and radius $\epsilon$ that covers the domain $\Omega$,
\begin{equation} \label{approximation.thm2.pfeq0}
\cup_{i=1}^I B({\bf x}_i, \epsilon)=\Omega.
\end{equation}
For any random variable ${\pmb \theta}=({\bf a}, {\bf b}, {\bf c})\in {\mathbb S}\times {\mathbb T}$,
we obtain from \eqref{fM.def} and the definition of ${\mathbb S}\times {\mathbb T}$ that
$\phi({\bf x}_i, {\pmb \theta})= {\bf a}^T \sigma({\bf b}*{\bf x}_i+{\bf c})\in [-1, 1]$ almost surely and  
 ${\mathbb E} \phi({\bf x}_i, {\pmb \theta})=f({\bf x}_i)$.
Therefore $\phi({\bf x}_i, \theta)-f({\bf x}_i)$ is subGaussian with variance proxy 1
by Hoeffding’s inequality, i.e.,
\begin{equation*}
{\mathbb E} \exp \big(  s( \phi({\bf x}_i, {\pmb \theta})-f({\bf x}_i))\big)\le \exp (s^2/2), \ s\in {\mathbb R}.
\end{equation*}
Therefore for all $t>0$, we have
\begin{equation}\label{approximation.thm2.pfeq1}
{\mathbb P}\{ |f_M({\bf x}_i, {\pmb \Theta})-f({\bf x}_i)|> t \}\le 2 \exp(- M t^2/2).
\end{equation}
This  together with  \eqref{approximation.thm2.eq0}
implies the existence of ${\pmb \Theta}^*\in ({\mathbb S}\times {\mathbb T})^M$ such that
\begin{equation} \label{approximation.thm2.pfeq2}
\|{\pmb \Theta}^*\|_{P, \infty}\le  1=\|f\|_{\mathcal B}\end{equation}
and
\begin{equation}
\label{approximation.thm2.pfeq3}
 |f_M({\bf x}_i, {\pmb \Theta}^*)-f({\bf x}_i)|\le  \epsilon, \ 1\le i\le I.
 \end{equation}

For the above  shallow GCNN with parameter ${\pmb \Theta}^*$,
we obtain from  \eqref{fM.eq1},  \eqref{approximation.thm2.pfeq0},
 \eqref{approximation.thm2.pfeq3}, and Corollary
 \ref{lipschitz.cor}
that
\begin{eqnarray*}
& &
|f_M({\bf x}, {\pmb \Theta}^*)-f({\bf x})| \nonumber\\
& \hskip-0.08in \le & \hskip-0.08in
 |f_M({\bf x}, {\pmb \Theta}^*)-f_M({\bf x}_i, {\pmb \Theta}^*)|+|f_M({\bf x}_i, {\pmb \Theta}^*)-f({\bf x}_i)|+|f({\bf x})-f({\bf x}_i)|\nonumber\\
& \hskip-0.08in \le & \hskip-0.08in
D_1 \|\sigma\|_{\rm Lip} \epsilon \|{\pmb \Theta}^*\|_{P, \infty}+ \epsilon +
D_1 \|\sigma\|_{\rm Lip} \epsilon \|f\|_{\mathcal B} \le  (2D_1 \|\sigma\|_{\rm Lip}+1) \epsilon, \ {\bf x}\in \Omega,
\end{eqnarray*}
where  ${\bf x}_i$ is chosen so that ${\bf x}\in B({\bf x}_i, \epsilon)$.
This together with \eqref{approximation.thm2.pfeq2} completes the proof.

\subsection{Proof of Theorem \ref{inverseapproximaton.thm}}
\label{inverseapproximation.thm.pfsection}

By the assumption $f_n\in {\mathcal C}_{Q, p}, n\ge 1$, we can write
 \begin{equation} \label{inverseapproximaton.thm.pfeq0}
 f_n(x)=\frac{1}{M_n} \sum_{m=1}^{M_n}  \alpha_{n, m}
  {\bf a}_{n, m}^T \sigma ( {\bf  b}_{n,m} * {\bf x} +{\bf c}_{n,m} ), \ {\bf x}\in \Omega,
 \end{equation}
 where $({\bf a}_{n, m}, {\bf b}_{n, m}, {\bf c}_{n, m})\in {\mathbb S}\times  {\mathbb T}, 1\le m\le M_n$, and
 $0<M_n^{-1}\sum_{m=1}^{M_n}\alpha_{n,m}\le Q$.
Define the probability measure $\hat \rho_n$ on ${\mathbb S}\times {\mathbb T}$ by
    \begin{equation*}
        \rho_n (d{\bf a}, d{\bf b}, d{\bf c})= \Big(\sum_{m=1}^{M_n} \alpha_{n, m}\Big)^{-1}
         \sum_{m=1}^{M_n}\alpha_{n, m} \delta \big({\bf a}-{\bf a}_{m,n}, {\bf b}-{\bf b}_{m,n}, {\bf c}- {\bf c}_{m,n}\big),
    \end{equation*}
where $\delta$ is the Dirac measure centered at  the origin.
Then we rewrite the representation in \eqref{inverseapproximaton.thm.pfeq0} as follows:
    \begin{equation} \label{inverseapproximaton.thm.pfeq1}
        f_n({\bf x}) = \frac{\sum_{m=1}^{M_n} \alpha_{n, m}}{M_n}
        \int_{{\mathbb S}\times {\mathbb T}} {\bf a}^T\sigma ( {\bf b}* {\bf x} +{\bf  c}) \hat \rho_n(d{\bf a},d{\bf b},d{\bf c}), \ {\bf x}\in \Omega.
    \end{equation}

As  $\sum_{m=1}^{M_n} \alpha_{n,m}/M_n, n\ge 1$, is  a bounded sequence   contained in $[0, Q]$, without loss of generality, we assume that
it is convergent,
\begin{equation}\label{inverseapproximaton.thm.pfeq2}
\lim_{n\to \infty} \frac{1}{M_n} \sum_{m=1}^{M_n} \alpha_{n, m}= \alpha\in [0, Q],
\end{equation}
 otherwise replacing  $f_n, n\ge 1$, by its appropriate subsequence.

Observe that the sequence $\hat \rho_n, n\ge 1$, of probability measures on   ${\mathbb S}\times {\mathbb T}$. By applying Prokhorov's theorem \cite{Prokhorov1956},
there exists a subsequence of probability measures  $\hat \rho_n$ on ${\mathbb S}\times {\mathbb T}$ that converges weakly to some
 measure $ {\hat \rho}$ on ${\mathbb S}\times {\mathbb T}$.  Without loss of generality, we assume that
 the original sequence  $\hat \rho_n, n\ge 1$, of probability measures  converges weakly to some measure $\hat \rho$ on ${\mathbb S}\times {\mathbb T}$, i.e.,
 \begin{equation}  \label{inverseapproximaton.thm.pfeq3}
 \lim_{n\to \infty} \int_{{\mathbb S}\times {\mathbb T}} g({\bf a}, {\bf b}, {\bf c}) \hat \rho_{n} (d{\bf a}, d{\bf b}, d{\bf c})=
 \int_{{\mathbb S}\times {\mathbb T}} g({\bf a}, {\bf b}, {\bf c}) \hat \rho (d{\bf a}, d{\bf b}, d{\bf c})
  \end{equation}
hold for all continuous functions $g$ on ${\mathbb S}\times {\mathbb T}$.

As $\hat \rho_n, n\ge 1$, are probability measures on ${\mathbb S}\times {\mathbb T}$ and the constant function is a continuous function on ${\mathbb S}\times {\mathbb T}$, we
obtain from \eqref{inverseapproximaton.thm.pfeq3}
that
$\hat \rho$ is also a  probability measure on ${\mathbb S}\times {\mathbb T}$.
Define
\begin{equation} \label{inverseapproximaton.thm.pfeq4}
\tilde f({\bf x})= \alpha \int_{{\mathbb S}\times {\mathbb T}} {\bf a}^T \sigma({\bf b}*{\bf x}+{\bf c}) \hat \rho( d{\bf a}, d{\bf b}, d{\bf c}), \ {\bf x}\in \Omega.
\end{equation}
Then
the function $\tilde f$ defined by \eqref{inverseapproximaton.thm.pfeq4} belongs to the Barron space ${\mathcal B}$ and  has its norm
bounded by $\alpha\le Q$.

Recall that ${\bf a}^T \sigma({\bf b}*{\bf x}+{\bf c})$ are continuous functions about $({\bf a}, {\bf b}, {\bf c})\in {\mathbb S}\times {\mathbb T}$ for all ${\bf x}\in \Omega$. This together with \eqref{inverseapproximaton.thm.pfeq1}, \eqref{inverseapproximaton.thm.pfeq2}, \eqref{inverseapproximaton.thm.pfeq3} and
\eqref{inverseapproximaton.thm.pfeq4} implies that
\begin{equation}
\tilde f({\bf x})=\lim_{n\to \infty} f_n({\bf x}), \ {\bf x}\in \Omega.
\end{equation}
This proves that $\tilde f=f$ and hence completes the proof.

\subsection{Proof of Theorem \ref{density.thm}}
\label{density.thm.pfsection}

%
By  the universal approximation theorem in Lemma \ref{universal.lem}, it suffices to prove that for any ${\bf v}\in {\mathbb R}^N$ and $w\in {\mathbb R}$  there exist
${\bf a}, {\bf b}, {\bf c}\in {\mathbb R}^N$ such that
\begin{equation}
\label{density.thm.pfeq1}
\sigma({\bf v}^T {\bf x}+w)= {\bf a}^T \sigma({\bf b}*{\bf x}+{\bf c}),  \ {\bf x}\in \Omega.
\end{equation}
Take ${\bf a}={\bf e}_{n_0}$ and ${\bf c}=w {\bf e}_{n_0}$, where ${\bf e}_{n_0}$ is the unit vector in ${\mathbb R}^N$ with zero components except one for its $n_0$-th component.
Then the existence of the vector ${\bf b}$ in \eqref{density.thm.pfeq1} reducing to showing
\begin{equation} \label{density.thm.pfeq2}
{\bf e}_{n_0}^T ({\bf b}*{\bf x})= {\bf v}^T {\bf x} \ \ {\rm for \ all} \ {\bf x}\in {\mathbb R}^N.
\end{equation}
By \eqref{convolution.def3}, we need to find a multivariate polynomial $h$
such that
$$ h({\bf S}_1, \ldots, {\bf S}_K) {\bf e}_{n_0}= {\bf v},
$$
or equivalently,
\begin{equation}
{\rm diag} \big(h({\pmb \lambda}(1)), \ldots, h({\pmb \lambda}(N))\big)
 {\bf U}^T {\bf e}_{n_0}= {\bf U}^T {\bf v}.\end{equation}
The above equation about the polynomial $h$ is solvable as all entries of ${\bf U}^T {\bf e}_{n_0}$ are nonzero
and ${\pmb \lambda}(n), 1\le n\le N$ in the joint spectrum of graph shifts are distinct by Assumption \ref{shiftassumption}. In particular,  $h$ is
an interpolation polynomial  satisfying
$$ h({\pmb \lambda}(n))= \frac{ \big({\bf U}^T {\bf v}\big)(n)}{ \big({\bf U}^T {\bf e}_{n_0}\big)(n)}, \ 1\le n\le N$$
where ${\bf y}(n)$ is the $n$-th component of a vector ${\bf y}\in {\mathbb R}^N$.
This completes the proof.

\subsection{Proof of  Theorem \ref{Rademacher.thm}}
\label{Rademacher.thm.pfsection}

To prove  Theorem \ref{Rademacher.thm}, we recall the contraction lemma and  Massart lemma, where
$\xi_i, 1\le i\le S$, are i.i.d. Rademacher random variables \cite{Shai2014}.

\begin{lemma}
\label{contraction.lem}
 Let ${\mathcal K}$ be a family of functions on $\Omega$ and ${\bf x}_i\in \Omega, 1\le i\le S$.
Then
\begin{equation}
{\mathbb E} \sup_{g\in {\mathcal K}} \sum_{i=1}^S \xi_i \sigma\big(g({\bf x}_i)\big)\le
{\mathbb E} \sup_{g\in {\mathcal K}} \sum_{i=1}^S \xi_i g({\bf x}_i).
\end{equation}
\end{lemma}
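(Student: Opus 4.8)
The plan is to establish this by the classical peeling argument behind the one-sided Ledoux--Talagrand contraction principle (cf.\ \cite{Shai2014}), using only the elementary fact that the ReLU activation $\sigma(t)=\max(0,t)$ is $1$-Lipschitz on ${\mathbb R}$, i.e.\ $|\sigma(s)-\sigma(t)|\le|s-t|$ for all $s,t\in{\mathbb R}$. First I would pass from functions to points: writing
\[
A=\big\{\,(g({\bf x}_1),\ldots,g({\bf x}_S)):\ g\in{\mathcal K}\,\big\}\subset{\mathbb R}^S ,
\]
the inequality to be proved becomes ${\mathbb E}\sup_{{\bf a}\in A}\sum_{i=1}^S\xi_i\sigma(a_i)\le{\mathbb E}\sup_{{\bf a}\in A}\sum_{i=1}^S\xi_i a_i$, where ${\bf a}=(a_1,\ldots,a_S)$. (In the intended application ${\mathcal K}$ has uniformly bounded range, so every supremum and expectation below is finite; I would note this in one line.)

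The heart of the matter is a single-coordinate inequality: for any bounded function $h$ on a set $A\subset{\mathbb R}^S$,
\[
{\mathbb E}_{\xi_1}\Big[\sup_{{\bf a}\in A}\big(h({\bf a})+\xi_1\sigma(a_1)\big)\Big]\ \le\ {\mathbb E}_{\xi_1}\Big[\sup_{{\bf a}\in A}\big(h({\bf a})+\xi_1 a_1\big)\Big].
\]
To prove it I would expand the left-hand side via ${\mathbb E}_{\xi_1}[\,\cdot\,]=\tfrac12(\cdots)_{\xi_1=1}+\tfrac12(\cdots)_{\xi_1=-1}$ and $\sup_{{\bf a}}F+\sup_{{\bf a}'}G=\sup_{{\bf a},{\bf a}'}(F+G)$ as
\[
\tfrac12\sup_{{\bf a},{\bf a}'\in A}\big[h({\bf a})+h({\bf a}')+\sigma(a_1)-\sigma(a_1')\big],
\]
then bound $\sigma(a_1)-\sigma(a_1')\le|a_1-a_1'|=\max(a_1-a_1',\,a_1'-a_1)$, and finally use the symmetry of $A\times A$ under swapping ${\bf a}\leftrightarrow{\bf a}'$ to collapse the resulting $\max$ into $\tfrac12\sup_{{\bf a},{\bf a}'\in A}[h({\bf a})+h({\bf a}')+a_1-a_1']$, which factors back into exactly the right-hand side. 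Applying this with $h({\bf a})=\sum_{i\ne1}\xi_i\sigma(a_i)$, conditionally on $\xi_2,\ldots,\xi_S$, replaces $\sigma(a_1)$ by $a_1$ without decreasing the expectation; iterating the same step over the coordinates $i=2,\ldots,S$ (peeling off one $\sigma$ at a time, using the tower property) removes $\sigma$ from every slot and yields the claim.

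The main obstacle I anticipate is the bookkeeping in the single-coordinate step — in particular justifying the passage from $\sup_{{\bf a},{\bf a}'}\max\{\,\cdot\,,\,\cdot\,\}$ to $\sup_{{\bf a},{\bf a}'}$ of a single expression, which uses only that $A\times A$ is symmetric in its two arguments — together with the observation that this one-sided form of the contraction principle does \emph{not} require $\sigma(0)=0$. Everything else is routine.
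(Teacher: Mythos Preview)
Your argument is the standard Ledoux--Talagrand peeling proof and is correct as stated; the only caveat is that the paper does not actually prove this lemma at all but merely recalls it from \cite{Shai2014}, so there is no ``paper's own proof'' to compare against. Your write-up is essentially the proof found in that reference, and nothing more is needed.
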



\begin{lemma} \label{massart.lem}
 Let ${\mathcal T}\subset {\mathbb R}^S$ be a finite set with its cardinality denoted by $\# {\mathcal T}$. Then
 $${\mathbb E} \Big(\max_{{\bf t}\in {\mathcal T}}\sum_{i=1}^S  \xi_i t_i\Big)
 \le  \sqrt{2\ln \# {\mathcal T}}  \max_{{\bf t}\in {\mathcal T}}\|{\bf t}\|_2,
$$
where ${\bf t}=[t_1, \ldots, t_S]\in {\mathcal T}$.
\end{lemma}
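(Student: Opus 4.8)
The plan is to prove Massart's lemma by the classical exponential (Chernoff) moment‑generating‑function argument, optimizing a free scale parameter at the end. First I would dispose of the degenerate cases. If $\#{\mathcal T}=1$, then $\mathbb{E}\big(\sum_{i=1}^S\xi_i t_i\big)=\sum_{i=1}^S t_i\,\mathbb{E}\xi_i=0$ while $\sqrt{2\ln\#{\mathcal T}}=0$, so the inequality holds with equality; and if $\max_{{\bf t}\in{\mathcal T}}\|{\bf t}\|_2=0$ then every ${\bf t}\in{\mathcal T}$ is the zero vector and both sides vanish. Hence from now on I may assume $\#{\mathcal T}\ge 2$ and $r:=\max_{{\bf t}\in{\mathcal T}}\|{\bf t}\|_2>0$.

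Next, fix a parameter $\beta>0$ to be chosen later. By Jensen's inequality applied to the convex function $x\mapsto e^{\beta x}$, followed by bounding a maximum of nonnegative numbers by their sum,
\begin{equation*}
\exp\Big(\beta\,\mathbb{E}\max_{{\bf t}\in{\mathcal T}}\sum_{i=1}^S\xi_i t_i\Big)
\le \mathbb{E}\exp\Big(\beta\max_{{\bf t}\in{\mathcal T}}\sum_{i=1}^S\xi_i t_i\Big)
=\mathbb{E}\max_{{\bf t}\in{\mathcal T}}\exp\Big(\beta\sum_{i=1}^S\xi_i t_i\Big)
\le\sum_{{\bf t}\in{\mathcal T}}\mathbb{E}\exp\Big(\beta\sum_{i=1}^S\xi_i t_i\Big).
\end{equation*}
Then I would bound each summand using independence of $\xi_1,\ldots,\xi_S$ together with the elementary sub‑Gaussian estimate $\cosh(u)\le e^{u^2/2}$ for $u\in{\mathbb R}$ (which follows by comparing Taylor coefficients, since $(2k)!\ge 2^k k!$):
\begin{equation*}
\mathbb{E}\exp\Big(\beta\sum_{i=1}^S\xi_i t_i\Big)
=\prod_{i=1}^S\mathbb{E}\exp(\beta\xi_i t_i)
=\prod_{i=1}^S\cosh(\beta t_i)
\le\prod_{i=1}^S\exp\Big(\frac{\beta^2 t_i^2}{2}\Big)
=\exp\Big(\frac{\beta^2\|{\bf t}\|_2^2}{2}\Big)
\le\exp\Big(\frac{\beta^2 r^2}{2}\Big).
\end{equation*}

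Combining the two displays gives $\exp\big(\beta\,\mathbb{E}\max_{{\bf t}\in{\mathcal T}}\sum_{i=1}^S\xi_i t_i\big)\le\#{\mathcal T}\,\exp(\beta^2 r^2/2)$, hence after taking logarithms and dividing by $\beta>0$,
\begin{equation*}
\mathbb{E}\max_{{\bf t}\in{\mathcal T}}\sum_{i=1}^S\xi_i t_i\le\frac{\ln\#{\mathcal T}}{\beta}+\frac{\beta r^2}{2}.
\end{equation*}
Finally I would minimize the right‑hand side over $\beta>0$; the minimum is attained at $\beta=\sqrt{2\ln\#{\mathcal T}}/r$ (well defined and positive because $\#{\mathcal T}\ge 2$ and $r>0$), and substituting this value yields $\mathbb{E}\max_{{\bf t}\in{\mathcal T}}\sum_{i=1}^S\xi_i t_i\le r\sqrt{2\ln\#{\mathcal T}}=\sqrt{2\ln\#{\mathcal T}}\,\max_{{\bf t}\in{\mathcal T}}\|{\bf t}\|_2$, which is the claim. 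There is no real obstacle here: the only point requiring care is the handling of the degenerate cases $\#{\mathcal T}=1$ or $r=0$, where the proposed choice of $\beta$ breaks down but the inequality is trivially true; everything else is the routine soft‑max/Chernoff bound combined with the Rademacher moment generating function identity $\mathbb{E}\exp(\beta\xi_i t_i)=\cosh(\beta t_i)$.
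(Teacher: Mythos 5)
Your proof is correct: the soft-max/Chernoff argument with the Rademacher moment bound $\mathbb{E}\exp(\beta\xi_i t_i)=\cosh(\beta t_i)\le e^{\beta^2 t_i^2/2}$ and optimization over $\beta$ is exactly the standard proof of Massart's lemma, and your handling of the degenerate cases $\#\mathcal{T}=1$ and $\max_{\mathbf{t}\in\mathcal{T}}\|\mathbf{t}\|_2=0$ is appropriate. The paper itself gives no proof of this lemma, simply recalling it from the cited reference \cite{Shai2014}, and your argument is essentially the one found there, so there is nothing to add.
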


Now we are ready to prove Theorem \ref{Rademacher.thm}.

\begin{proof} [Proof of Theorem \ref{Rademacher.thm}]
First we show that
\begin{equation} \label {Rademacher.thm.pfeq1}
\sup_{f\in {\mathcal F}_Q} \sum_{i=1}^S \xi_i  f({\bf x}_i)\le  Q \sup_{({\bf b}, {\bf c})\in  {\mathbb T}} \Big\| \sum_{i=1}^S \xi_i \sigma({\bf b}*{\bf x}_i+{\bf c})\Big\|_\infty
\end{equation}
hold for all $\xi_i\in \{-1, 1\}$ and ${\bf x}_i\in \Omega, 1\le i\le S$.

By Lemma \ref{quasinorm2.lem}, there exists  a probability measure $\hat {\rho}$ on ${\mathbb S}\times {\mathbb T}$  for any $f\in {\mathcal B}$ such that
$$f({\bf x}) = \|f\|_{\mathcal B} \int_{{\mathbb S}\times {\mathbb  T}} {\bf a}^T \sigma ({\bf  b} *{\bf  x} + {\bf c}) \hat{\rho} (d{\bf a}, d{\bf b}, d{\bf c}).
$$
Then
    \begin{eqnarray*}
\sum_{i=1}^S \epsilon_i  f({\bf x}_i)
    &\hskip-0.08in = & \hskip-0.08in  \|f\|_{\mathcal B}
     \int_{{\mathbb S}\times {\mathbb T}}
     {\bf a}^T \Big(\sum_{i=1}^S \xi_i\sigma ({\bf b} * {\bf x}_i +{\bf  c})  \Big) \hat{\rho} (d{\bf a}, d{\bf b}, d{\bf c}) \\
        & \hskip-0.08in\le & \hskip-0.08in \|f\|_{\mathcal B}
        \sup_{{\bf a}\in {\mathbb S}, ({\bf b}, {\bf c})\in {\mathbb T}}
         {\bf a}^T \Big(\sum_{i=1}^S \xi_i\sigma ({\bf b} * {\bf x}_i +{\bf  c}) \Big)
 \notag\\
        &\hskip-0.08in \le &  \hskip-0.08in \|f\|_{\mathcal B}
                \sup_{ ({\bf b}, {\bf c})\in {\mathbb T}}
         \Big\| \sum_{i=1}^S \xi_i\sigma ({\bf b} * {\bf x}_i +{\bf  c}) \Big\|_\infty.
    \end{eqnarray*}
    Taking supremum over all $f\in {\mathcal F}_Q$ in the above estimate proves \eqref{Rademacher.thm.pfeq1}.

 \medskip

Next we use \eqref{Rademacher.thm.pfeq1} and apply the contraction lemma  to show  that
\begin{equation} \label {Rademacher.thm.pfeq2}
{\mathbb E} \sup_{ ({\bf b}, {\bf c})\in {\mathbb T}}
         \Big\| \sum_{i=1}^S \xi_i\sigma ({\bf b} * {\bf x}_i +{\bf  c}) \Big\|_\infty
\le 2D_2 {\mathbb  E} \Big\|\sum_{i=1}^S \xi_i {\bf x}_i\Big\|_\infty+ 2 {\mathbb E} \Big|\sum_{i=1}^S \xi_i\Big|.
\end{equation}

For a vector ${\bf y}\in {\mathbb R}^N$, we denote its $n$-th component by ${\bf y}(n), 1\le n\le N$. Observe that
\begin{eqnarray*}   
      \sup_{ ({\bf b}, {\bf c})\in {\mathbb T}}
         \Big\| \sum_{i=1}^S \xi_i\sigma ({\bf b} * {\bf x}_i +{\bf  c}) \Big\|_\infty
    & \hskip-0.08in =  &  \hskip-0.08in \sup_{ ({\bf b}, {\bf c})\in {\mathbb T}}
     \max_{1\le n\le N}\max_{\xi\in \{-1, 1\}}
      \Big(\xi\sum_{i=1}^S \xi_i\sigma ({\bf b} * {\bf x}_i +{\bf  c}) \Big)(n) \nonumber
\\
    &\hskip-0.08in \le  & \hskip-0.08in
        \sum_{\xi\in \{-1, 1\}}  \max_{1\le n\le N} \sup_{ ({\bf b}, {\bf c})\in {\mathbb T}}
      \Big(\xi \sum_{i=1}^S \xi_i\sigma ({\bf b} * {\bf x}_i +{\bf  c}) \Big)(n)
\end{eqnarray*}
where the inequality holds as
$$\sup_{ ({\bf b}, {\bf c})\in {\mathbb T}}
      \Big(\pm \sum_{i=1}^S \xi_i\sigma ({\bf b} * {\bf x}_i +{\bf  c}) \Big)(n)\ge
  \sup_{ \|{\bf c}\|_\infty=1}
     \pm \sum_{i=1}^S \xi_i \sigma ({\bf  c}(n)) \ge
       0,\   1\le n\le N.$$
     This together with  Lemma \ref{contraction.lem} with ${\mathcal K}=\{({\bf b} * {\bf x} +{\bf  c})(n), 1\le n\le N, ({\bf b}, {\bf c})\in {\mathbb T}\}$  implies that
      \begin{eqnarray*}
  {\mathbb E} \sup_{ ({\bf b}, {\bf c})\in {\mathbb T}}
         \Big\| \sum_{i=1}^S \xi_i\sigma ({\bf b} * {\bf x}_i +{\bf  c}) \Big\|_\infty 
&\hskip-0.08in   \le  & \hskip-0.08in 2
{\mathbb E}\max_{1\le n\le N} \sup_{ ({\bf b}, {\bf c})\in {\mathbb T}}
 \sum_{i=1}^S \xi_i \sigma\big( ({\bf b} * {\bf x}_i +{\bf  c})(n)\big)\nonumber\\
&\hskip-0.08in   \le  & \hskip-0.08in 2
              \E \max_{1\le n\le N} \sup_{ ({\bf b}, {\bf c})\in {\mathbb T}}
 \sum_{i=1}^S \xi_i ({\bf b} * {\bf x}_i +{\bf  c}) (n)\nonumber\\
&\hskip-0.08in   \le  & \hskip-0.08in 2
              \E  \sup_{ ({\bf b}, {\bf c})\in {\mathbb T}}
 \Big\|{\bf b}* \Big(\sum_{i=1}^S \xi_i {\bf x}_i\Big) + \Big(\sum_{i=1}^S \xi_i \Big) {\bf  c} \Big\|_\infty.
\end{eqnarray*}
 Combining the above estimate  with \eqref{connorm.def2} and the definition of the set ${\mathbb T}$, we complete the proof of
\eqref{Rademacher.thm.pfeq2}.

\medskip

Finally we apply \eqref{Rademacher.thm.pfeq2} and use the Massart Lemma to prove \eqref{Rademacher.thm.eq1}.

Observe that
$$\Big\|\sum_{i=1}^S \xi_i {\bf x}_i\Big\|_\infty= \max_{1\le n\le N} \max\Big \{\sum_{i=1}^S \xi_i {\bf x}_i(n), \sum_{i=1}^S \xi_i (-{\bf x}_i)(n)\Big\},$$
where ${\bf x}_i(n), 1\le n\le N$, are the $n$-th entries of ${\bf x}_i\in {\mathbb R}^N$.
Applying Lemma \ref{massart.lem} with ${\mathcal T}=\{ [{\bf x}_1(n), \ldots, {\bf x}_S(n)], 1\le n\le N\} \cup
\{-[{\bf x}_1(n), \ldots, {\bf x}_S(n)], 1\le n\le N\}$, we conclude that
$$
 {\mathbb E}  \Big\|\sum_{i=1}^S \xi_i {\bf x}_i\Big\|_\infty  \le \sqrt{2\ln (2N)} \sup_{1\le n\le N} \Big(\sum_{i=1}^S({\bf x}_i(n))^2\Big)^{1/2}.$$
 This together with \eqref{domain.def-1} implies that
\begin{equation}  \label{Rademacher.thm.pfeq4}
 {\mathbb E}  \Big\|\sum_{i=1}^S \xi_i {\bf x}_i\Big\|_\infty  \le D_0\sqrt{2\ln (2N)} S^{1/2} .
 \end{equation}

Let ${\bf 1}_S$ be the $S$-dimensional vector with all components taking value one.
Applying Lemma \ref{massart.lem} with ${\mathcal T}=\{-{\bf 1}_S, {\bf 1}_S\}$ gives
\begin{equation}\label {Rademacher.thm.pfeq5}
 {\mathbb E}  \Big|\sum_{i=1}^S \xi_i \Big|\le  \sqrt{ 2\ln 2} S^{1/2}. \end{equation}

Combining \eqref{Rademacher.thm.pfeq2}, \eqref{Rademacher.thm.pfeq4} and \eqref{Rademacher.thm.pfeq5} proves the desired estimate  \eqref{Rademacher.thm.eq1} on
the Rademacher complexity.
\end{proof}

\subsection{Proof of Theorem \ref{error.thm}}
\label{error.thm.pfsection}

 Set ${\bf X}=({\bf x}_1, \ldots, {\bf x}_S)$  and let $\Phi({\bf X})$ be as in \eqref{error.thm.eq1}.  By the symmetry of the set ${\mathcal F}_Q$, we have
$$\Phi({\bf X})=\sup_{f\in {\mathcal F}_Q} \int_{\Omega} f({\bf x}) d\mu({\bf x})- \frac{1}{S} \sum_{i=1}^S f({\bf x}_i) .$$
By the reproducing kernel property \eqref{BanachBarronspace.eq1} for the Barron space, we have
$$|\Phi({\bf X}))- \Phi({\bf X}_i^\prime)|\le S^{-1}
|f({\bf x}_i)-f({\bf x}_i^\prime)|\le 2QS^{-1}, \ 1\le i\le S, $$
where for $1\le i\le S$,   ${\bf X}$ and  ${\bf X}_i^\prime$  share the same components except that their $i$-th components are ${\bf x}_i$ and ${\bf x}_i^\prime$ respectively.
Then applying McDiarmid's inequality, 
we have that
$$\Phi( {\bf  X})\le  {\mathbb E}_{\bf X} \Phi( {\bf X})+
 \sqrt {2\ln (1/\delta)} Q S^{-1/2}$$
 with probability at least $1-\delta$.
Then by Theorem \ref{Rademacher.thm}, it suffices to prove
\begin{equation}\label{error.thm.pfeq1}
{\mathbb E}_{\bf X} \Phi( {\bf X})\le 2 {\rm Rad}_S({\mathcal F}_Q).
\end{equation}

Let us  draw a second sample ${\bf x}_1^\prime , \ldots, {\bf x}_S^\prime$ according to  probability measure $\mu$,
$\xi_i\in \{-1, 1\}, 1\le i\le S$ be i.i.d. Rademacher random variables
        with $P(\xi_i= 1)=P(\xi_i=-1)=1/2$, and set $\Xi=(\xi_1, \ldots, \xi_S)$.
Then
\begin{eqnarray*}
{\mathbb E}_{\bf X}
\Phi({\bf X}) &\hskip-0.08in  = & \hskip-0.08in
{\mathbb E}_{\bf  X} \sup_{f\in {\mathcal F}_Q}  {\mathbb E}_{{\bf X}'}  \frac{1}{S} \sum_{i=1}^S  \big(f({\bf x}_i^{\prime})-  f({\bf x}_i)\big)\nonumber\\
 &\hskip-0.08in  \le & \hskip-0.08in {\mathbb E}_{\bf X}  {\mathbb E}_{{\bf X}'} \sup_{f\in {\mathcal F}_Q}   \frac{1}{S} \sum_{i=1}^S  \big(f({\bf x}_i^{\prime})-  f({\bf x}_i)\big)
\nonumber\\
 &\hskip-0.08in  = & \hskip-0.08in  E_{\Xi}
 {\mathbb E}_{\bf X}  {\mathbb E}_{{\bf X}'} \sup_{f\in {\mathcal F}_Q}   \frac{1}{S} \sum_{i=1}^S \xi_i \big(f({\bf x}_i^{\prime})-  f({\bf x}_i)\big)\nonumber
\\
 &\hskip-0.08in  \le  & \hskip-0.08in   E_{\Xi}  {\mathbb E}_{\bf X}   \sup_{f\in {\mathcal F}_Q} \frac{1}{S} \sum_{i=1}^S \xi_i f({\bf x}_i)+
  E_{\Xi}   {\mathbb E}_{{\bf X}'} \sup_{f\in {\mathcal F}_Q} \frac{1}{S} \sum_{i=1}^S (-\xi_i) f({\bf x}_i^\prime)= 2 {\rm Rad}_S({\mathcal F}_Q).
\end{eqnarray*}
This proves \eqref{error.thm.pfeq1} and then completes the proof of Theorem \ref{error.thm}.

\section{Conclusion and discussions}
\label{conclusion.section}

In this paper, we introduce a Barron space ${\mathcal B}$ associated with two-layer GCNNs in the spectral convolution setting and show that functions in the Barron space can be well approximated by outputs of GCNNs without suffering from the curse of dimensionality (the order of the underlying graph).

For a graph filter ${\bf G}= (G(i,j))_{i,j\in V}$,
define its geodesic-width  $\omega({\bf G})$
 by the smallest nonnegative integer
 such that  $G(i,j)=0$ for all $i,j\in V$ with $\rho(i,j)>\omega({\bf G})$.
   Denote the set of all matrices with their geodesic-width no larger than $L$ by ${\mathcal M}_L$.
   In the spatial approach to define  graph convolution,
   a localized matrix operation ${\bf x}\longmapsto {\bf H}{\bf x}$ associated with some matrix  ${\bf H}\in {\mathcal M}_L$
   is applied instead of the spectral convolution ${\bf x}\longmapsto {\bf b}*{\bf x}$ associated with a graph signal ${\bf b}$.
    In particular, given a graph signal ${\bf b}$ in the convolution space ${\mathcal W}$,
     we can find a polynomial filter ${\bf H}=h({\mathbf S}_1, \ldots, {\mathbf S}_K)$ in some ${\mathcal M}_L$ such that the corresponding spectral  convolution can be implemented by polynomial filtering procedure, i.e., ${\bf b}*{\bf x}={\bf H}{\bf x}$ holds for any graph signal ${\bf x}$, where
    $L$ is the degree of the multivariate polynomial $h$.
     Comparing with the spectral convolution setting, the  convolution  in the spatial  setting has much more parameters to learn, as the convolution space ${\mathcal W}$ has $\dim {\mathcal W}\le N$, while the convolution space ${\mathcal M}_L$ in the spatial setting has
    dimension bounded below by $N$ and above by $N^2$, i.e., $N\le \dim {\mathcal M}_L\le \dim {\mathcal M}_D=N^2$, where $D=\max_{i,j\in V} \rho(i,j)$ is the diameter of the graph ${\mathcal G}$.

   In the  spatial convolution setting, the output of a two-layer GCNN is given by
   $$f_M(x, {\pmb \Theta})=\frac{1}{M} \sum_{m=1}^M {\bf a}_m^T \sigma ({\bf H}_m{\bf x}+{\bf c})$$
   where ${\pmb \Theta}=({\pmb \theta}_1, \ldots, {\pmb \theta}_M)$  and ${\pmb \theta}_m=({\bf a}_m, {\bf H}_m, {\bf c}_m)\in {\mathbb R}^N\times {\mathcal M}_L\times {\mathbb R}^N, 1\le m\le M$, see \cite{Bruna2014, Kipf2017} for $L=1$.
   With appropriate  convolution norm for matrices in ${\mathcal M}_L$, we can define a Barron space associated with
   two-layer GCNNs in the spatial convolution setting for functions with the following representation
   $$f({\bf x})=\int_{{\mathbb R}^N \times {\mathcal M}_L\times {\mathbb R}^N}
   {\bf a}^T \sigma ({\bf H}{\bf x}+{\bf c}) \rho(d{\bf a}, d{\bf H}, d{\bf c}), \ {\bf x}\in \Omega,$$
and show that functions in the Barron space can be approximated by two-layer GCNNs,    where $\rho$ is a probability measure on ${\mathbb R}^N \times {\mathcal M}_L\times {\mathbb R}^N$, cf. \eqref{functionrepresentation.def}, \eqref{barronnorm.def} and \eqref{Barron.def00}.


\end{document}